\newif\ifarxiv
\newif\ifneurips
\newcommand{\citep}[1]{\cite{#1}}
\newtheorem{theorem}{Theorem}[section]
 \newtheorem{lemma}[theorem]{Lemma}
 \newtheorem{corollary}[theorem]{Corollary} 
 \theoremstyle{definition}
 \newtheorem{definition}[theorem]{Definition}
\newcommand{\lmle}[0]{L_{\mbox{\textup{MLE}}}}
\newcommand{\mle}[0]{\hat{\theta}_{\mbox{\textup{MLE}}}}
\newcommand{\lsm}[0]{L_{\mbox{\textup{SM}}}}
\newcommand{\sme}[0]{\hat{\theta}_{\mbox{\textup{SM}}}}
\newcommand{\pa}[1]{\left( {#1} \right)}
\newcommand{\ve}[1]{\left\Vert {#1}\right\Vert}
\newcommand{\fc}[2]{\frac{#1}{#2}}
\newcommand{\sfc}[2]{\sqrt{\frac{#1}{#2}}}
\newcommand{\rc}[1]{\frac{1}{#1}}
\newcommand{\sumz}[2]{\sum_{#1=0}^{#2}}
\newcommand{\prodo}[2]{\prod_{#1=1}^{#2}}
\newcommand{\Var}[0]{\operatorname{Var}}
\newcommand{\RR}{\mathbb{R}}
\newcommand{\NN}{\mathbb{N}}
\newcommand\RP{\text{\small{\textsf{RP}}}}
\newcommand\NP{\text{\small{\textsf{NP}}}}
\newcommand\SAT{\text{\small{\textsf{SAT}}}}
\newcommand\CNF{\text{\small{\textsf{CNF}}}}
\newcommand{\Grad}{\nabla}
\renewcommand{\t}{\top}
\newcommand{\EE}{\mathbb{E}}
\newcommand{\norm}[1]{\left\lVert #1 \right\rVert}
\DeclareMathOperator{\TV}{TV}
\DeclareMathOperator{\poly}{poly}
	\newcommand{\calB}{\mathcal{B}}
	\newcommand{\calC}{\mathcal{C}}
	\newcommand{\calH}{\mathcal{H}}
	\newcommand{\calI}{\mathcal{I}}
	\newcommand{\calN}{\mathcal{N}}
	\newcommand{\calO}{\mathcal{O}}
	\newcommand{\calP}{\mathcal{P}}
	\newcommand{\calU}{\mathcal{U}}
	\newcommand{\calV}{\mathcal{V}}
 \DeclareMathOperator*{\argmax}{arg\,max}
  \DeclareMathOperator*{\argmin}{arg\,min}
  \DeclareMathOperator{\sign}{sign}
  \DeclareMathOperator{\Tr}{Tr}
  \newcommand{\idx}{\mathbf{d}}
  \newcommand{\Laplace}{\Delta}
  \newcommand{\opnorm}[1]{\norm{#1}_\textsf{op}}
\newcommand{\vem}[1]{\ve{#1}_{\textsf{mon}}}
\DeclareMathOperator{\Unif}{Unif}
\newcommand\labeleqn[1]{\addtocounter{equation}{1}\tag{\theequation}\label{#1}}
\title{Provable benefits of score matching}
\newcommand{\oneauthor}[2]{\begin{tabular}{c}{#1} \\ \small{#2}\end{tabular}}
\author{
\begin{tabular}{ccc}
  \oneauthor{Chirag Pabbaraju\thanks{\texttt{cpabbara@cs.stanford.edu}}}{Stanford University}
  &
  \oneauthor{Dhruv Rohatgi\thanks{\texttt{drohatgi@mit.edu}. This work was in part supported by a U.S. DoD NDSEG Fellowship.}}{MIT}
  &
  \oneauthor{Anish Sevekari\thanks{\texttt{asevekar@andrew.cmu.edu}}}{Carnegie Mellon University}
  \vspace{1em}\\ 
  \oneauthor{Holden Lee\thanks{\texttt{hlee283@jhu.edu}}}{Johns Hopkins University}
  &
  \oneauthor{Ankur Moitra\thanks{\texttt{moitra@mit.edu}. This work was supported by a grant from the ONR, NSF Award 1918656 and a David and Lucile Packard Fellowship.}}{MIT}
  &
  \oneauthor{Andrej Risteski\thanks{\texttt{aristesk@andrew.cmu.edu}. This work was in part supported by NSF awards IIS-2211907, CCF-2238523, and Amazon Research Award.}}{Carnegie Mellon University}
\end{tabular}
}
\begin{document}

\maketitle


\begin{abstract}
Score matching is an alternative to maximum likelihood (ML) for estimating a probability distribution parametrized up to a constant of proportionality. By fitting the ``score'' of the distribution, it sidesteps the need to compute this constant of proportionality (which is often intractable).
While score matching and variants thereof are popular in practice, precise theoretical understanding of the benefits and tradeoffs with maximum likelihood---both computational and statistical---are not well understood. In this work, we give the first example of a natural exponential family of distributions such that the score matching loss is computationally efficient to optimize, and has a comparable statistical efficiency to ML, while the ML loss is intractable to optimize using a gradient-based method. The family consists of exponentials of polynomials of fixed degree, and our result can be viewed as a continuous analogue of recent developments in the discrete setting. Precisely, we show: (1) Designing a zeroth-order or first-order oracle for optimizing the maximum likelihood loss is NP-hard. (2) Maximum likelihood has a statistical efficiency polynomial in the ambient dimension and the radius of the parameters of the family. (3) 
Minimizing the score matching loss is both computationally and statistically efficient, with complexity polynomial in the ambient dimension.   
\end{abstract} 
\section{Introduction} 


Energy-based models are a flexible class of probabilistic models with wide-ranging applications. They are parameterized by a class of energies $E_{\theta}(x)$ which in turn determines the distribution  
$$p_{\theta}(x) = \frac{\exp(-E_{\theta}(x))}{Z_\theta}$$
up to a constant of proportionality $Z_\theta$ that is called the partition function. One of the major challenges of working with energy-based models is designing efficient algorithms for fitting them to data. Statistical theory tells us that 
the maximum likelihood estimator (MLE)---i.e., the parameters $\theta$ which maximize the likelihood---enjoys good statistical properties including consistency and asymptotic efficiency.

However, there is a major computational  impediment to computing the MLE: Both evaluating the log-likelihood and computing its gradient with respect to $\theta$ (i.e., implementing zeroth and first order oracles, respectively)  seem to require computing the partition function, which is often computationally intractable. More precisely, the gradient of the negative log-likelihood depends on $\nabla_{\theta} \log Z_{\theta} = \mathbb{E}_{p_{\theta}} [\nabla_{\theta} E_{\theta} (x)]$. A popular approach is to estimate this quantity by using a Markov chain to approximately sample from $p_\theta$. However in high-dimensional settings, Markov chains often require many, sometimes even exponentially many, steps to mix. 

Score matching \citep{hyvarinen2005estimation} is a popular alternative that sidesteps needing to compute the partition function of sample from $p_\theta$. The idea is to fit the score of the distribution, in the sense that we want $\theta$ such that $\nabla_x \log p(x)$ matches $\nabla_x \log p_\theta(x)$ for a typical sample from $p$. This approach turns out to have many nice properties. It is consistent in the sense that minimizing the objective function yields provably good estimates for the unknown parameters. Moreover, while the definition depends on the unknown $\nabla_x \log p(x)$, by applying integration by parts, it is possible to transform the objective into an equivalent one that can be estimated from samples. 

The main question is to bound its statistical performance, especially relative to that of the maximum likelihood estimator. Recent work by \cite{koehler2022statistical} showed that the cost can be quite steep. They gave explicit examples of distributions that have bad isoperimetric properties (i.e., large Poincar\'e constant) and showed how such properties can cause poor statistical performance. 

Despite wide usage, there is little rigorous understanding of when score matching {\em helps}. This amounts to finding a general setting where maximizing the likelihood with standard first-order optimization is provably hard, and yet score matching is both computationally and statistically efficient, with only a polynomial loss in sample complexity relative to the MLE. In this work, we show the first such guarantees, and we do so for a natural class of exponential families defined by polynomials. As we discuss in Section~\ref{sec:discuss}, our results parallel recent developments in learning graphical models---where it is known that pseudolikelihood methods allow efficient learning of distributions that are hard to sample from---and can be viewed as a continuous analogue of such results.




In general, an exponential family on $\RR^n$ has the form $p_\theta(x) \propto h(x) \exp(\langle \theta, T(x)\rangle)$ 
where $h(x)$ is the \emph{base measure}, $\theta$ is the \emph{parameter vector}, and $T(x)$ is the vector of \emph{sufficient statistics}. Exponential families are one of the most classic parametric families of distributions, dating back to works by \cite{darmois1935lois}, \cite{koopman1936distributions} and \cite{pitman1936sufficient}. They have a number of natural properties, including: (1) The parameters $\theta$ are uniquely determined by the expectation of the sufficient statistics $\EE_{p_{\theta}}[T]$; (2) The distribution $p_{\theta}$ is the maximum entropy distribution, subject to having given values for $\EE_{p_{\theta}}[T]$; (3) They have conjugate priors \citep{brown1986fundamentals}, which allow characterizations of the family for the posterior of the parameters given data. 

For any (odd positive integer) constant $d$ and norm bound $B \geq 1$, we study a natural exponential family $\mathcal{P}_{n,d,B}$ on $\RR^n$ where
\begin{enumerate}[leftmargin=*]
\item The \emph{sufficient statistics} $T(x) \in \RR^{M-1}$ consist of all monomials in $x_1,\dots,x_n$ of degree at least $1$ and at most $d$ $\left(\text{where }M=\binom{n+d}{d}\right)$. 
\item The \emph{base measure} is defined as $h(x) = \exp(-\sum_{i=1}^n x_i^{d+1})$.\footnote{We note that the choice of base measure is for convenience in ensuring tail bounds necessary in our proof.}
\item The \emph{parameters} $\theta$ lie in an $l_{\infty}$-ball: $\theta \in \Theta_B = \{\theta  \in \RR^{M-1}: \norm{\theta}_\infty \leq B\}$. 
\end{enumerate}

Towards stating our main results, we formally define the maximum likelihood and score matching objectives, denoting by $\hat{\EE}$ the empirical average over the training samples drawn from some $p \in \mathcal{P}_{n,d,B}$: 
\begin{align*}
    \lmle(\theta) &= \hat{\EE}_{x \sim p}[\log p_{\theta}(x)] \\
    \lsm(\theta) &= \frac{1}{2} \hat \EE_{x \sim p}[\|\nabla \log p(x) - \nabla \log p_{\theta}(X)\|^2] + K_p \\
    &= \hat \EE_{x \sim p}\left[\Tr \nabla^2 \log p_{\theta}(x) + \frac{1}{2} \|\nabla \log p_{\theta}(x)\|^2\right] \labeleqn{eq:intparts}
\end{align*}
where $K_p$ is a constant depending only on $p$ and \eqref{eq:intparts} follows by integration by parts \citep{hyvarinen2005estimation}. In the special case of exponential families, \eqref{eq:intparts} is a quadratic, and in fact the optimum can be written in closed form: 
\begin{align}
    \argmin_\theta \lsm(\theta) = - \hat \EE_{x\sim p}[(J T)_x (J T)_x^T]^{-1} \hat \EE_{x\sim p} \Delta T(x)
\end{align}
 where $(J T)_x : (M-1) \times n$ is the Jacobian of $T$ at the point $x$, $\Delta f = \sum_i \partial^2_i f$ is the Laplacian, applied coordinate wise to the vector-valued function $f$.

With this setting in place, we show the following intractability result.
\begin{theorem}[Informal, computational lower bound]
\label{theorem:mle_hardness_informal}
Unless $\RP = \NP$, there is no $\poly(n,N)$-time algorithm that evaluates $\lmle(\theta)$ and $\nabla \lmle(\theta)$ given $\theta \in \Theta_B$ and arbitrary samples $x_1,\dots,x_N \in \RR^n$, for $d = 7, B = \poly(n)$. Thus, optimizing the MLE loss using a zeroth-order or first-order method is computationally intractable.   
\label{eq:infhardness}
\end{theorem}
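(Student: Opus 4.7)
The plan is to reduce from 3-SAT. Given a 3-CNF formula $\phi$ on $n$ variables with $m = \poly(n)$ clauses, I will construct a parameter $\theta \in \Theta_B$ with $B = \poly(n)$ and a single training sample $x_1 = 0 \in \RR^n$ so that $\lmle(\theta) = -\log Z_\theta$ exactly. It therefore suffices to show that computing $\log Z_\theta$ is $\NP$-hard. The underlying idea is to design the energy $E_\theta(x) := \sum_i x_i^8 - \an{\theta, T(x)}$ so that its minimum value encodes satisfiability of $\phi$, and then to use the $\exp(-\sum_i x_i^8)$ base measure to control the partition function tightly around that minimum.

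Concretely, for each clause $C$ write $q_C(x) := \prod_{j=1}^3 (1 - \ell_j(x))$, where $\ell_j(x) \in \{x_k,\, 1 - x_k\}$ is the literal appearing in position $j$; this is a degree-$3$ polynomial equal to $0$ on Boolean assignments satisfying $C$ and equal to $1$ otherwise. Set $Q(x) := \sum_C q_C(x)$, a polynomial of degree $3$, and $P(x) := \sum_i (x_i^2 - x_i)^2$, a polynomial of degree $4$ that vanishes precisely on $\{0,1\}^n$. For a parameter $c = \poly(n)$ to be chosen, let $\theta$ be the coefficient vector of the polynomial $-c(P + Q)$ in the monomial basis. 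Since the degree of $cP + cQ$ is at most $4 < 7 = d$ and all its coefficients are $\poly(n)$ in magnitude, we have $\theta \in \Theta_B$ with $B = \poly(n)$.

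If $\phi$ is satisfiable with witness $x^* \in \{0,1\}^n$, then $P(x^*) = Q(x^*) = 0$ and so $E_\theta(x^*) \leq n$. A Laplace-type lower bound on the integral around $x^*$ (using that $\nb E_\theta$ and $\nb^2 E_\theta$ have magnitude $\poly(c,n)$ on a small ball around $x^*$) yields $\log Z_\theta \geq -\poly(n, \log c)$. Conversely, if $\phi$ is unsatisfiable, I will argue that $E_\theta(x)$ is uniformly large by partitioning $\RR^n$ into three regions: (i) points within distance $\delta = 1/\poly(n)$ of the Boolean cube, where continuity forces $Q(x) \geq 1/2$ so that $cQ(x) \geq c/2$; (ii) points at distance at least $\delta$ from the cube but of moderate norm, where $cP(x) \geq c/\poly(n)$ because $P$ grows quadratically off its zero set; and (iii) points of large norm, where $\sum_i x_i^8$ dominates all the other terms. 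Combining the three regions and integrating against the $x^8$ tail gives $\log Z_\theta \leq -c/\poly(n) + \poly(n)$. Taking $c$ to be a large enough polynomial in $n$ separates the SAT and UNSAT bounds, so evaluating $\lmle(\theta)$ decides $\phi$.

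The first-order case follows along the same lines: since $\nb_\theta \lmle(\theta) = \hat{\EE}[T(x)] - \EE_{p_\theta}[T(x)]$, evaluating the gradient reduces to computing the moment vector $\EE_{p_\theta}[T(x)]$; for $c$ large, $p_\theta$ concentrates around the minimizer of $E_\theta$, and the coordinates $\EE_{p_\theta}[x_i]$ round to a candidate Boolean assignment that can be tested against $\phi$. The main technical obstacle is the unsatisfiable-case lower bound on $E_\theta$: because $Q$ can take negative values outside $\{0,1\}^n$, one must carefully balance the strength of the pinning potential $cP$ against the worst-case magnitude of $cQ$ off the cube, and lean on the $x^8$ base measure to kill far-field contributions. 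This requires quantitative Lipschitz and growth estimates for $P$ and $Q$ together with a careful regionwise integration of the partition function.
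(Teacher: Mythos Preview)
Your high-level strategy matches the paper's exactly: reduce from 3-\SAT{} by building a polynomial energy whose minimizers encode satisfying assignments, then show that the partition function (and its gradient) separate the SAT and UNSAT cases. The paper uses the hypercube $\{-1,1\}^n$, a clause polynomial $H_C(x)=f_i(x_i)^2 f_j(x_j)^2 f_k(x_k)^2$ of degree~6, a hypercube-pinning term $G(x)=\sum_i(1-x_i^2)^2$, and two separate coefficients $\alpha,\beta$; but these look cosmetic at first glance. There is, however, one difference that is not cosmetic and creates a real gap in your argument.

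Your clause polynomial $q_C(x)=\prod_{j=1}^3(1-\ell_j(x))$ is degree~3 and \emph{not} nonnegative off the Boolean cube, and you multiply $P$ and $Q$ by the \emph{same} coefficient $c$. In your region~(ii) you only invoke $cP(x)\ge c/\poly(n)$ and never control $cQ(x)$; but $cQ(x)$ can be as negative as $-c\cdot m\cdot(1+R)^3$ for $\|x\|_\infty\le R$, which swamps $cP(x)=O(cR^4)$ from a single off-cube coordinate. Concretely, take an unsatisfiable core on $x_1,x_2,x_3$ (all eight sign patterns, so $\sum_C q_C\equiv 1$) together with $k$ copies of the clause $(x_4\vee x_5\vee x_6)$. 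The formula is UNSAT, yet at $x=(0,0,0,2,2,2)$ one gets $P=12$ and $Q=1-k$, so $P+Q=13-k<0$ once $k\ge 14$. The base-measure contribution there is only $3\cdot 2^8$, so the energy is $768+c(13-k)$, which for large $c$ is arbitrarily negative. Hence $\log Z_\theta \ge c(k-13)-768$, destroying your UNSAT upper bound. The paper avoids this entirely by squaring the clause polynomial so that $H_{\calC}\ge 0$ everywhere; then $\alpha H_{\calC}+\beta G\ge 0$ globally and the UNSAT bound reduces to showing $H_{\calC}\ge 1$ on the ``wrong'' orthant. If you want to keep a signed $Q$, you would at minimum need two independent scales $c_1\gg c_2$ with $c_1/c_2\gtrsim mR^3/\delta^2$, which your single-$c$ construction does not provide.

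A smaller gap: for the first-order oracle you assert that ``the coordinates $\EE_{p_\theta}[x_i]$ round to a candidate Boolean assignment.'' If $\phi$ has several satisfying assignments, $p_\theta$ is a mixture over them and the mean can sit near the center of the cube, so rounding need not recover any of them. The paper handles this by reducing from \emph{Unique}-3-\SAT{} (Valiant--Vazirani), so that there is a single assignment $v^*$ and one can prove $\EE[x_i]$ has the sign of $v^*_i$ with a quantitative margin. You should incorporate the same step.
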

The main idea of the proof is to construct a polynomial $F_\calC(x)$ which has roots exactly at the satisfying assignments of a given 3-\SAT{} formula $\calC$. We then argue that $\exp(-\gamma F_\calC(x))$, for sufficiently large $\gamma>0$, concentrates near the satisfying assignments. Finally, we show sampling from this distribution or approximating $\log Z_\theta$ or $\Grad_\theta \log Z_\theta$ (where $\theta \in \RR^{M-1}$ is the parameter vector corresponding to the polynomial $-\gamma F_\calC(x)$) would enable efficiently finding a satisfying assignment. 

Our next result shows that MLE, though computationally intractable to compute via implementing zeroth or first order oracles, has (asymptotic) sample complexity $\poly(n,B)$ (for constant $d$).
\begin{theorem}[Informal, efficiency of MLE]\label{theorem:mle-eff-intro} The MLE estimator $\hat{\theta}_{\mbox{\textup{MLE}}} = \argmax_\theta \lmle(\theta)$ has asymptotic sample complexity polynomial in $n$. That is, for all sufficiently large $N$ it holds with probability at least $0.99$ (over $N$ samples drawn from $p_{\theta^*}$) that: 
\begin{equation*}
    \|\mle - \theta^*\|^2  \leq O\left(\frac{(nB)^{\poly(d)}}{N}\right).
\end{equation*}
\end{theorem}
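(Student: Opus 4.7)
Since $\mathcal{P}_{n,d,B}$ is an exponential family with log-density $\log p_\theta(x) = \langle \theta, T(x)\rangle - A(\theta) + \log h(x)$, where $A$ is the log-partition function, the negative log-likelihood $-\lmle$ is convex in $\theta$ with Hessian equal to the Fisher information $I(\theta) = \operatorname{Cov}_{p_\theta}(T(x))$, independently of $x$. Standard asymptotic M-estimator theory on the compact convex set $\Theta_B$, together with consistency of $\mle$, yields, for $N$ sufficiently large and with probability at least $0.99$,
\begin{equation*}
\|\mle - \theta^*\|^2 \lesssim \frac{\Tr\!\left(I(\theta^*)^{-1}\right)}{N}.
\end{equation*}
Since $\dim(\theta) = M-1 = n^{O(d)}$, the theorem reduces to the quantitative Fisher-information lower bound $\lambda_{\min}(I(\theta^*)) \geq (nB)^{-\poly(d)}$; equivalently, for every unit vector $v \in \RR^{M-1}$, the degree-$d$ polynomial $q_v(x) := \langle v, T(x)\rangle$ must satisfy $\Var_{p_{\theta^*}}(q_v) \geq (nB)^{-\poly(d)}$.

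I would prove this variance bound in two steps. First, I establish that $p_{\theta^*}$ has $(d{+}1)$-power-exponential tails and concentrates on a ball $B(0, R)$ of radius $R = \poly(n,B)$: the base-measure confinement $\exp(-\sum_i x_i^{d+1})$ dominates the degree-$d$ tilt $\exp(\langle \theta^*, T(x)\rangle)$ once $\|x\|_\infty \gtrsim \poly(n,B)$, which also gives $(nB)^{\poly(d)}$-bounded moments of every order for entries of $T$. Second, I combine anti-concentration of polynomials (Carbery--Wright) applied to $q_v$ under a log-concave reference measure with a lower bound on $\|q_v\|_{L^2(p_{\theta^*})}$ for unit $v$. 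The latter I obtain by passing to an orthonormal polynomial basis for $p_{\theta^*}$ and controlling the monomial-to-orthonormal change-of-basis condition number via the moment bounds from the first step.

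The main obstacle is the anti-concentration transfer in the second step: a naive pointwise density ratio $p_{\theta^*}/h$ can be as large as $\exp(\poly(n,B))$, far worse than the target $(nB)^{\poly(d)}$ loss. I expect to overcome this by exploiting that, within the effective support $B(0,R)$, the tilt is itself a bounded-coefficient degree-$d$ polynomial that cannot create arbitrarily spiky density perturbations, together with the $(d{+}1)$-versus-$d$ tail domination that pins down the measure's shape at scale $R$. Once the Fisher bound is in place, the rest is routine empirical-process analysis: the empirical Hessian $-\nabla^2 \lmle(\theta)$ concentrates uniformly over $\Theta_B$ around $I(\theta)$, and the empirical score at $\theta^*$ has norm $O\!\left(\sqrt{(nB)^{\poly(d)}/N}\right)$; assembling these inside the M-estimator bound yields the claimed $(nB)^{\poly(d)}/N$ rate.
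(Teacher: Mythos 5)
Your reduction is the right one and matches the paper's: the MLE covariance is $\calI(\theta^*)^{-1}$, so everything hinges on the quantitative bound $\lambda_{\min}(\calI(\theta^*)) \geq (nB)^{-\poly(d)}$, i.e.\ on lower-bounding $\Var_{p_{\theta^*}}(\langle v, T(x)\rangle)$ for unit $v$. The tail/moment bounds you describe in your first step are also genuinely used (they give the bounding radius $R = \poly(n,B)$ and the upper bounds on $\lambda_{\max}$). The gap is in your second step, and it is the step you yourself flag as the "main obstacle." The Carbery--Wright transfer cannot be repaired in the way you suggest: the tilt $\exp(\langle\theta^*,T(x)\rangle)$ absolutely \emph{can} create exponentially spiky densities relative to any fixed log-concave reference. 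Indeed, the paper's own hardness construction (Section~3) exhibits $\theta \in \Theta_B$ with $B = \poly(n)$ for which $p_\theta$ places half its mass on $\ell_\infty$-balls of radius $O(1/\sqrt{m})$ around hypercube vertices, with density ratios of order $e^{\Omega(n)}$ between octants. So no comparison of the form $p_{\theta^*} \geq (nB)^{-\poly(d)} \cdot \mu_{\mathrm{ref}}$ on the effective support can hold, and the "shape pinning at scale $R$" heuristic fails. Separately, your plan to lower-bound $\|q_v\|_{L^2(p_{\theta^*})}$ via the condition number of the monomial-to-orthonormal change of basis is circular: that condition number is controlled from below precisely by $\lambda_{\min}(\EE_{p_{\theta^*}}[T T^\top])$, which is (up to the rank-one mean term) the quantity you are trying to bound; moment \emph{upper} bounds give $\lambda_{\max}$ but say nothing about $\lambda_{\min}$.

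The missing idea in the paper is a localization argument that avoids any global density comparison. One partitions the bounding box $\{\|x\|_\infty \leq R\}$ into $\ell_\infty$-cells of radius $\epsilon = 1/\poly(n,B,M)$ chosen so small that $\log p_\theta$ fluctuates by at most $1$ on each cell (using only that $\theta$ has $\poly$ coefficients and monomials are $dR^{d-1}$-Lipschitz there). By averaging, some cell $W$ has $\EE_p[(f-\EE_p f)^2 \mid W] \leq 2\Var_p(f)$, and on that cell $p$ is within a factor $e$ of uniform, so $\Var_p(f) \gtrsim \Var_{\Unif(W)}(f)$. Finally, $\Var_{\Unif(W)}(f)$ is lower-bounded against $\vem{f}^2 = \|v\|_2^2$ by rescaling $W$ to $[-1,1]^n$, expanding in tensorized Legendre polynomials, and paying an explicit $\epsilon^{-2d}R^{2d}M^{O(1)}$ factor for the affine change of variables --- all polynomial losses. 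This is entirely elementary and sidesteps both log-concavity and the circular basis-conditioning issue; without something equivalent to it, your outline does not close.
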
 
The main proof technique for this is an anticoncentration bound of low-degree polynomials, for distributions in our exponential family.  

Lastly, we prove that score matching \textit{also} has polynomial (asymptotic) statistical complexity.

\begin{theorem}[Informal, efficiency of SM]\label{theorem:sm-eff-intro} The score matching estimator $\hat{\theta}_{\mbox{\textup{SM}}} = \argmax_\theta \lsm(\theta)$ also has asymptotic sample complexity at most polynomial in $n$. That is, for all sufficiently large $N$ it holds with probability at least $0.99$ (over $N$ samples drawn from $p_{\theta^*}$) that: 
\begin{equation}
    \|\sme - \theta^*\|^2  \leq O\left(\frac{(nB)^{\poly(d)}}{N}\right).
    \label{eq:infeasiness}
\end{equation}
\end{theorem}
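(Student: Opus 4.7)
The plan is to exploit the closed-form expression for $\sme$. Since $\lsm$ is quadratic in $\theta$ on the exponential family, the first-order optimality condition becomes a linear system $\hat A\,\sme = \hat c$, where $\hat A$ and $\hat c$ are empirical averages of polynomials in $x$ derived from $JT$, $\Delta T$, and the base measure. As score matching is consistent on exponential families, the corresponding population identity $A\theta^* = c$ also holds. Subtracting yields
\begin{equation*}
    \sme - \theta^* = \hat A^{-1}\bigl((A - \hat A)\theta^* + (\hat c - c)\bigr),
\end{equation*}
so it suffices to bound (i) $\|\hat A - A\|_{\mathrm{op}}$, (ii) $\|\hat c - c\|$, and (iii) $1/\lambda_{\min}(A)$. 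Once $A$ is well-conditioned, Weyl's inequality ensures $\hat A$ is too with high probability.

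For (i) and (ii), each entry of $\hat A$ and $\hat c$ is the empirical average of a polynomial in $x$ of degree at most $2d$. The base measure $h(x) = \exp(-\sum_i x_i^{d+1})$ dominates the polynomial energy $\langle \theta^*, T(x)\rangle$ at infinity uniformly for $\|\theta^*\|_\infty \leq B$, giving sub-exponential-type tails and moments $\EE_{p_{\theta^*}}[\|x\|^k] \leq (nkB)^{\poly(d,k)}$. Applying Chebyshev's inequality to each of the $\poly(n)$ entries and taking a union bound then gives $\|\hat A - A\|_{\mathrm{op}}, \|\hat c - c\| = O((nB)^{\poly(d)}/\sqrt{N})$ with probability $0.99$.

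The main challenge is (iii). For a unit vector $v \in \RR^{M-1}$, set $q(x) := \langle v, T(x)\rangle$, a polynomial of degree $\leq d$ with no constant term and $\ell_2$-coefficient norm $1$; then $v^\top A v = \EE_{p_{\theta^*}}[\|\nabla q(x)\|^2]$. Since $q$ is nonzero and non-constant, $\|\nabla q\|^2$ is a nonzero polynomial of degree $\leq 2(d-1)$ whose largest coefficient has magnitude at least $1/\poly(n,d)$. The plan is to invoke anticoncentration of polynomials (e.g., the Carbery--Wright inequality) on a ball of constant radius, combined with a pointwise lower bound on $p_{\theta^*}$ on this ball obtained from the energy upper bound $|\langle\theta^*,T(x)\rangle| \leq B M R^d$ together with an inverse-polynomial lower bound on $Z_{\theta^*}$. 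The hardest part will be this quantitative density lower bound: $p_{\theta^*}$ is not log-concave and its log-density is a polynomial with potentially large coefficients, so an inverse-polynomial lower bound on $p_{\theta^*}(x)$ uniformly in $\theta^*$ requires carefully combining the tail decay from the base measure with the uniform parameter bound. Once (iii) is established, combining with (i) and (ii) and choosing $N = \Omega((nB)^{\poly(d)}/\epsilon^2)$ gives $\|\sme - \theta^*\|^2 \leq \epsilon^2$ with probability at least $0.99$.
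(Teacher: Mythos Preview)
Your overall strategy---analyzing the closed-form estimator directly and controlling $\hat A - A$, $\hat c - c$, and $\lambda_{\min}(A)$---is a legitimate alternative to the paper's route through the Koehler et al.\ framework and the restricted Poincar\'e constant, and in fact the quantity $v^\top A v = \EE_{p_{\theta^*}}\|\nabla \langle v, T(x)\rangle\|^2$ is exactly what the paper lower-bounds in the course of proving its Poincar\'e estimate (Lemma~\ref{lemma:poincare-cond}). Steps (i) and (ii) go through essentially as you describe, using the moment bounds the paper establishes in Lemma~\ref{lemma:exp-fam-moment-bound}.

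The gap is in step (iii). Your proposed route---a pointwise lower bound on $p_{\theta^*}$ over a ball of constant radius, combined with anticoncentration of $\|\nabla q\|^2$ on that ball---only yields an \emph{exponentially} small lower bound on $\lambda_{\min}(A)$, not a polynomial one. On any fixed constant-radius ball, the unnormalized density $h(x)\exp(\langle\theta^*,T(x)\rangle)$ can swing by a factor $\exp(\Theta(BM))$, and $Z_{\theta^*}$ itself can be as large as $\exp(\poly(n,B))$; together these force $\min_{\text{ball}} p_{\theta^*} \le \exp(-\poly(n,B))$. The hardness construction in Section~\ref{section:hardness} makes this concrete: the distribution concentrates in a $1/\poly(n)$-neighborhood of a single hypercube vertex $v^*$, so the density on (say) $[-1/2,1/2]^n$ is $\exp(-\poly(n))$. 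For $q(x)=x_1x_2-x_1-x_2$ one has $\|\nabla q(x)\|^2=(x_1-1)^2+(x_2-1)^2$, which is $\Theta(1)$ on that ball but only $1/\poly(n)$ near $v^*$; your bound gives $\exp(-\poly(n))$ while the true value of $v^\top A v$ is $1/\poly(n)$.

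The paper's fix is not a uniform density bound at all. For $\lambda_{\min}(\calI(\theta^*))$ it uses an averaging argument (Lemma~\ref{lemma:subcube-variance}): partition a large box into tiny sub-boxes of radius $\epsilon = 1/\poly(n,B)$ so that the density ratio on each is at most $e$, and then argue that \emph{some} sub-box must carry variance comparable to $\Var_p(f)$; on that sub-box one can compare to Lebesgue measure and invoke the $L^2$-vs-monomial norm equivalence (Lemma~\ref{lemma:mon-L2}). For $\lambda_{\min}(A)$ itself, the paper then observes that each directional derivative $\langle u,\nabla q(x)\rangle$ is again of the form $c(u)+\langle \tilde w(u),T(x)\rangle$, so its second moment can be lower-bounded via $\lambda_{\min}(\calI(\theta^*))$ plus a short case analysis on the constant term (Lemma~\ref{claim:square-lb}). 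If you want to keep your direct finite-sample framework, you should replace the ``density lower bound on a fixed ball'' step with this two-stage argument.
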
 
The main ingredient in this result is a bound on the \emph{restricted Poincar\'e constant}---namely, the Poincar\'e constant, when restricted to functions that are linear in the sufficient statistics $T$. We bound this quantity for the exponential family we consider in terms of the condition number of the Fisher matrix of the distribution, which we believe is a result of independent interest. With this tool in hand, we can use the framework of \cite{koehler2022statistical}, which relates the asymptotic sample complexity of score matching to the asymptotic sample complexity of maximum likelihood, in terms of the restricted Poincar\'e constant of the distribution. 


\subsection{Discussion and related work}\label{sec:discuss}

\paragraph{Score matching:} Score matching was proposed by \cite{hyvarinen2005estimation}, who also gave conditions under which it is consistent and asymptotically normal. Asymptotic normality is also proven for various kernelized variants of score matching in \cite{barp2019minimum}. \cite{koehler2022statistical} prove that the statistical sample complexity of score matching is not much worse than the sample complexity of maximum likelihood when the distribution satisfies a (restricted) Poincar\'e inequality. While we leverage machinery from \cite{koehler2022statistical}, their work only bounds the sample complexity of score matching by a quantity polynomial in the ambient dimension for a specific distribution in a specific bimodal exponential family. By contrast, we can handle an entire class of exponential families with low-degree sufficient statistics. 

\paragraph{Poincar\'e vs Restricted Poincar\'e:} We note that while Poincar\'e inequalities are directly related to isoperimetry and mixing of Markov chains, sample efficiency of score matching only depends on the Poincar\'e inequality holding for a \emph{restricted} class of functions, namely, functions linear in the sufficient statistics. Hence, hardness of sampling only implies sample complexity lower bounds in cases where the family is expressive enough---indeed, the key to exponential lower bounds for score matching in~\cite{koehler2022statistical} is augmenting the sufficient statistics with a function defined by a bad cut. This gap means that 
we can hope to have good sample complexity for score matching even in cases where sampling is hard---which we take advantage of in this work.

\paragraph{Learning exponential families:} Despite the fact that exponential families are both classical and ubiquitous, both in statistics and machine learning, there is relatively little understanding about the computational-statistical tradeoffs to learn them from data, that is, what sample complexity can be achieved with a computationally efficient algorithm. \cite{ren2021learning} consider a version of the ``interaction screening'' estimator, a close relative of pseudolikelihood, but do not prove anything about the statistical complexity of this estimator. \cite{shah2021computationally} consider a related estimator, and analyze it under various low-rank and sparsity assumptions of reshapings of the sufficient statistics into a tensor. Unfortunately, these assumptions are somewhat involved, and it's unclear if they are needed for designing computationally and statistically efficient algorithms. 

\paragraph{Discrete exponential families (Ising models):} Ising models have the form $p_J(x) \propto \exp\big(\sum_{i\sim j}  J_{ij} \allowbreak x_i x_j \allowbreak + \sum_i J_i x_i\big)$ where $\sim$ denotes adjacency in some (unknown) graph, and $J_{ij}, J_i$ denote the corresponding pairwise and singleton potentials.  
\cite{bresler2015efficiently} gave an efficient algorithm for learning any Ising model over a graph with constant degree (and $l_\infty$-bounds on the coefficients); see also the more recent work~\citep{dagan2021learning}. In contrast, it is a classic result \citep{arora2009computational} that approximating the partition function of members in this family is NP-hard. 

Similarly, the exponential family we consider is such that it contains members for which sampling and approximating their partition function is intractable (the main ingredient in the proof of Theorem~\ref{eq:infhardness}). Nevertheless, by Theorem~\ref{eq:infeasiness}, we can learn the parameters for members in this family computationally efficiently, and with sample complexity comparable to the optimal one (achieved by maximum likelihood). This also parallels other developments in Ising models \citep{bresler2014structure, montanari2015computational}, where it is known that restricting the type of learning algorithm (e.g., requiring it to work with sufficient statistics only) can make a tractable problem become intractable. 

The parallels can be drawn even on an algorithmic level: a follow up work to \cite{bresler2015efficiently} by \cite{vuffray2016interaction} showed that similar results can be shown in the Ising model setting by using the ``screening estimator'', a close relative of the classical pseudolikelihood estimator \citep{besag1977efficiency} which tries to learn a distribution by matching the conditional probability of singletons, and thereby avoids having to evaluate a partition function. Since conditional probabilities of singletons capture changes in a single coordinate, they can be viewed as a kind of ``discrete gradient''---a further analogy to score matching in the continuous setting.\footnote{In fact, ratio matching, proposed in \cite{hyvarinen2007some} as a discrete analogue of score matching, relies on exactly this intuition.}

\section{Preliminaries}\label{section:prelim}

We consider the following exponential family. Fix positive integers $n,d,B \in \NN$ where $d$ is odd. Let $h(x) = \exp(-\sum_{i=1}^n x_i^{d+1})$, and let $T(x) \in \RR^{M-1}$ be the vector of monomials in $x_1,\dots,x_n$ of degree at least $1$ and at most $d$ (so that $M = \binom{n+d}{d}$). Define $\Theta \subseteq \RR^{M-1}$ by $\Theta = \{\theta  \in \RR^{M-1}: \norm{\theta}_\infty \leq B\}$. For any $\theta \in \Theta$ define $p_\theta: \RR^n \to [0,\infty)$ by \[p_\theta(x) := \frac{h(x) \exp(\langle \theta, T(x)\rangle)}{Z_\theta}\]
where $Z_\theta = \int_{\RR^n} h(x)\exp(\langle \theta,T(x)\rangle)\,dx$ is the normalizing constant. Then we consider the family $\mathcal{P}_{n,d,B} := (p_\theta)_{\theta\in\Theta_B}$. Throughout, we will assume that $B \geq 1$.

\paragraph{Polynomial notation:} Let $\RR[x_1,\dots,x_n]_{\leq d}$ denote the space of polynomials in $x_1,\dots,x_n$ of degree at most $d$. We can write any such polynomial $f$ as $f(x) = \sum_{|\idx| \leq d} a_\idx x_\idx$ where $\idx$ denotes a degree function $\idx: [n] \to \NN$, and $|\idx| = \sum_{i=1}^n \idx(i)$, and we write $x_\idx$ to denote $\prod_{i=1}^n x_i^{\idx(i)}$. Note that every $\idx$ with $1 \leq |\idx| \leq d$ corresponds to an index of $T$, i.e. $T(x)_\idx = x_\idx$.

Let $\vem{\cdot}$ denote the $\ell^2$ norm of a polynomial in the monomial basis; that is, $\vem{\sum_{\idx} a_{\idx} x_{\idx}} = \pa{\sum_{\idx} a_{\idx}^2}^{1/2}.$ For any function $f: \RR^n \to \RR$, let $\norm{f}_{L^2([-1,1]^n)}^2 = \EE_{x \sim \Unif([-1,1]^n)} f(x)^2$.

\paragraph{Statistical efficiency of MLE:} For any $\theta \in \RR^{M-1}$, the Fisher information matrix of $p_\theta$ with respect to the sufficient statistics $T(x)$ is defined as
\[ \calI(\theta) := \EE_{x \sim p_\theta}[T(x)T(x)^\t] - \EE_{x\sim p_\theta}[T(x)]\EE_{x\sim p_\theta}[T(x)]^\t.\]

It is well-known that for any exponential family with no affine dependencies among the sufficient statistics (see e.g., Theorem 4.6 in \cite{van2000asymptotic}), it holds that for any $\theta^* \in \RR^{M-1}$, given $N$ independent samples $x^{(1)},\dots,x^{(N)} \sim p_{\theta^*}$, the estimator $\mle = \mle(x^{(1)},\dots,x^{(N)})$ satisfies
\[ \sqrt{N}\left(\mle - \theta^*\right) \to \calN(0, \calI(\theta^*)^{-1}).
\]

\paragraph{Statistical efficiency of score matching:} Our analysis of the statistical efficiency of score matching is based on a result due to \cite{koehler2022statistical}. We state a requisite definition followed by the result.

\begin{definition}[Restricted Poincar\'e for exponential families]\label{def:poincare}
The restricted Poincar\'e constant of $p \in \mathcal{P}_{n,d,B}$ is the smallest $C_P > 0$ such that for all $w \in \RR^{M-1}$, it holds that 
\[ \Var_p(\langle w, T(x)\rangle) \leq C_P \EE_{x \sim p} \norm{\Grad_x \langle w, T(x)\rangle}_2^2.\]
\end{definition}

\begin{theorem}[\cite{koehler2022statistical}]\label{theorem:koehler}
Under certain regularity conditions (see Lemma~\ref{lemma:sm-regularity}), for any $p_{\theta^*}$ with restricted Poincar\'e constant $C_P$ and with $\lambda_\text{min}(\calI(\theta^*)) > 0$, given $N$ independent samples $x^{(1)},\dots,x^{(N)} \sim p_{\theta^*}$, the estimator $\sme = \sme(x^{(1)},\dots,x^{(N)})$ satisfies
\[\sqrt{N}(\sme - \theta^*) \to \calN(0, \Gamma)\]
where $\Gamma$ satisfies
\[ \opnorm{\Gamma} \leq \frac{2C_P^2(\norm{\theta}_2^2 \EE_{x \sim p_{\theta^*}} \opnorm{(JT)(x)}^4 + \EE_{x\sim p_{\theta^*}}\norm{\Delta T(x)}_2^2)}{\lambda_\text{min}(\calI(\theta^*))^2}
\]
where $(JT)(x)_i = \Grad_x T_i(x)$ and $\Delta T(x) = \Tr \Grad_x^2 T(x)$.
\end{theorem}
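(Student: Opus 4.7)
The plan is to reproduce the argument of Koehler et al., which instantiates classical M-estimator asymptotic normality for the score matching loss and then uses the restricted Poincar\'e inequality to translate a lower bound on the Hessian of the expected loss into one involving the Fisher information. The key observation is that because $\nabla_x \log p_\theta$ is affine in $\theta$, the loss $\lsm(\theta)$ is a convex quadratic in $\theta$, so $\sme$ solves a linear system. Under the regularity conditions deferred to Lemma~\ref{lemma:sm-regularity} (finite moments of $T$, $JT$, $\Delta T$ under $p_{\theta^*}$ and consistency of $\sme$), standard M-estimator theory gives
\[ \sqrt{N}(\sme - \theta^*) \to \calN(0, H^{-1}\Sigma H^{-1}),\]
where $H := \nabla^2_\theta \EE_{p_{\theta^*}}[\lsm(\theta)]\big|_{\theta^*}$ is the population Hessian and $\Sigma := \Var_{p_{\theta^*}}(\nabla_\theta \ell(\theta^*, x))$, with $\ell(\theta,x) := \Tr \nabla_x^2 \log p_\theta(x) + \tfrac12 \|\nabla_x \log p_\theta(x)\|^2$. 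Differentiating under the integral, $H = \EE_{p_{\theta^*}}[(JT)(x)(JT)(x)^\top]$ and the per-sample gradient is $\nabla_\theta \ell(\theta^*, x) = \Delta T(x) + (JT)(x)\nabla_x \log p_{\theta^*}(x)$.

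Next, I would convert the spectral lower bound on $H$ into one on the Fisher information using the restricted Poincar\'e inequality: for any $w \in \RR^{M-1}$,
\[ w^\top H w = \EE_{p_{\theta^*}}\|\nabla_x \langle w, T(x)\rangle\|_2^2 \geq C_P^{-1}\,\Var_{p_{\theta^*}}(\langle w, T(x)\rangle) = C_P^{-1}\, w^\top \calI(\theta^*)\, w,\]
which yields $H \succeq C_P^{-1}\calI(\theta^*)$ and hence $\opnorm{H^{-1}} \leq C_P/\lambda_{\min}(\calI(\theta^*))$. For $\Sigma$, since $\EE\nabla_\theta\ell(\theta^*,x) = 0$ at the population minimizer, $\Sigma \preceq \EE[\nabla_\theta\ell\,\nabla_\theta\ell^\top]$; applying $\|a+b\|^2 \leq 2\|a\|^2+2\|b\|^2$ together with Cauchy-Schwarz and the fact that $\nabla_x \log p_{\theta^*}(x) = (JT)(x)^\top \theta^*$ (after absorbing any base-measure coefficients into the sufficient statistics) gives
\[ \opnorm{\Sigma} \leq 2\EE\|\Delta T(x)\|_2^2 + 2\|\theta^*\|_2^2\,\EE\,\opnorm{(JT)(x)}^4.\]
Combining via $\opnorm{\Gamma} \leq \opnorm{H^{-1}}^2 \opnorm{\Sigma}$ yields the stated bound.

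The main obstacle is verifying the regularity conditions underlying the M-estimator asymptotic normality, namely uniform integrability of the loss and its first two $\theta$-derivatives in a neighborhood of $\theta^*$, together with positive definiteness of $H$. These are deferred to Lemma~\ref{lemma:sm-regularity}; for our specific family they should follow from the super-polynomial decay of the base measure $h(x) = \exp(-\sum_i x_i^{d+1})$ together with compactness of $\Theta_B$, both of which give strong tail control on $T$, $JT$, and $\Delta T$ under any $p_{\theta^*} \in \mathcal{P}_{n,d,B}$.
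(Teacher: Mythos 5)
This theorem is not proved in the paper at all --- it is imported verbatim from \citep{koehler2022statistical} --- and your reconstruction is exactly the argument used there: the sandwich formula $H^{-1}\Sigma H^{-1}$ for the quadratic score-matching objective, the restricted Poincar\'e inequality to get $H \succeq C_P^{-1}\calI(\theta^*)$, and the elementary bound $\opnorm{\Sigma} \le 2\EE\norm{\Delta T}_2^2 + 2\norm{\theta^*}_2^2\,\EE\,\opnorm{(JT)}^4$. The only loose point is your parenthetical about ``absorbing base-measure coefficients into the sufficient statistics'': for the family in this paper, $\nabla_x \log h(x) = -(d+1)(x_1^d,\dots,x_n^d)$ is not of the form $(JT)(x)^\top w$ for any $w$ supported on the degree-$\le d$ monomials, so strictly speaking the per-sample gradient carries an extra term $(JT)(x)\nabla_x\log h(x)$ that the displayed bound omits; this imprecision, however, is already present in the paper's own restatement of the cited theorem (whose regularity conditions do track $\EE\norm{\Grad h(x)}_2^4$), not something your argument introduces.
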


\newcommand{\tcab}{{\theta(\calC,\alpha,\beta)}}

\section{Hardness of Implementing Optimization Oracles for \texorpdfstring{$\mathcal{P}_{n,7,\poly(n)}$}{Specific Exponential Families}}\label{section:hardness}


In this section we prove \NP-hardness of implementing approximate zeroth-order and first-order optimization oracles for maximum likelihood in the exponential family $\mathcal{P}_{n,7,Cn^2\log(n)}$ (for a sufficiently large constant $C$) as defined in Section~\ref{section:prelim}; we also show that approximate sampling from this family is \NP-hard. See Theorems~\ref{theorem:order_zero_oracle}, \ref{theorem:order_one_oracle}, and~\ref{theorem:sampling-hardness} respectively. All of the hardness results proceed by reduction from 3-\SAT{} and use the same construction. 

The idea is that for any formula $\calC$ on $n$ variables, we can construct a non-negative polynomial $F_\calC$ of degree at most $6$ in variables $x_1,\dots,x_n$, which has roots exactly at the points of the hypercube $\calH := \{-1,1\}^n \subseteq \RR^n$ that correspond to satisfying assignments (under the bijection that $x_i=1$ corresponds to True and $x_i=-1$ corresponds to False). Intuitively, the distribution with density proportional to $\exp(-\gamma F_\calC(x))$ will, for sufficiently large $\gamma>0$, concentrate on the satisfying assignments. It is then straightforward to see that sampling from this distribution or efficiently computing either $\log Z_\theta$ or $\Grad_\theta \log Z_\theta$ (where $\theta \in \RR^{M-1}$ is the parameter vector corresponding to the polynomial $-\gamma F_\calC(x)$) would enable efficiently finding a satisfying assignment. 

The remainder of this section makes the above intuition precise; important details include (1) incorporating the base measure $h(x) = \exp(-\sum_{i=1}^n x_i^8)$ into the density function, and (2) showing that a polynomially-large temperature $\gamma$ suffices.

\begin{definition}[Clause/formula polynomials]
Given a $3$-clause formula of the form $C = \tilde{x}_i \vee \tilde{x}_j \vee \tilde{x}_k$ where $\tilde{x}_i = x_i$ or $\tilde{x}_i = \neg x_i$, we construct a polynomial $H_C \in \RR[x_1,\dots,x_n]_{\leq 6}$ defined by
\[ H_C(x) = f_i(x_i)^2 f_j(x_j)^2 f_k(x_k)^2 \]
where 
\[ f_i(t) = \begin{cases} (t + 1) & \text{ if $x_i$ is negated in $C$} \\ (t-1) & \text{ otherwise}\end{cases}.\]
For example, if $C = x_1 \vee x_2 \vee \neg x_3$, then $H_C = (x_1 - 1)^2 (x_2 - 1)^2 (x_3 + 1)^2$. 
Further, given a $3$-$\SAT{}$ formula $\calC = C_1 \wedge \cdots \wedge C_m$ on $m$ clauses\footnote{It suffices to work with $m = O(n)$, see Theorem~\ref{theorem:unique-3sat-hardness}.}, we define the polynomial
\[ H_\calC(x) = H_{C_1}(x) + \cdots + H_{C_m}(x). \]
\end{definition}

It can be seen that any $x \in \calH$ corresponds to a satisfying assignment for $\calC$ if and only if $H_\calC(x) = 0$. Note that there are possibly points outside $\calH$ which satisfy $H_\calC(x) = 0$. To avoid these solutions, we introduce another polynomial:

\begin{definition}[Hypercube polynomial]
We define $G:\RR^n\to \RR$ by $G(x)  = \sum_{i=1}^n (1 - x_i^2)^2.$
\end{definition}

Note that $G(x) \ge 0$ for all $x$, and the roots of $G(x)$ are precisely the vertices of $\calH$. Therefore for any $\alpha,\beta>0$, the roots (in $\RR^n$) of the polynomial $F_\calC(x) = \alpha H_\calC(x) + \beta G(x)$ are precisely the vertices of $\calH$ that correspond to satisfying assignments for $\calC$.

\begin{definition}\label{def:lb-construction}
Let $\calC$ be a 3-\CNF{} formula with $n$ variables and $m$ clauses. Let $\alpha,\beta > 0$. Then we define a distribution $P_{\calC,\alpha,\beta}$ with density function \[
p_{\calC,\alpha,\beta}(x) := \frac{h(x) \exp(-\alpha H_\calC(x) - \beta G(x))}{Z_{\calC,\alpha,\beta}}\]
where $Z_{\calC,\alpha,\beta} = \int_{\RR^n} h(x) \exp(-\alpha H_\calC(x) - \beta G(x)) \, dx.$
\end{definition}


This distribution lies in the exponential family $\calP_{n,d,B}$, for $d = 7$ and $B = \Omega(\beta+m\alpha)$ (Lemma~\ref{lemma:construction-validity}). Thus, if $\theta(\calC,\alpha,\beta)$ is the parameter vector that induces $P_{\calC,\alpha,\beta}$, then it suffices to show that (a) approximating $\log Z_{\theta(\calC,\alpha,\beta)}$, (b) approximating $\Grad_\theta \log Z_{\theta(\calC,\alpha,\beta)}$, and (c) sampling from $P_{\calC,\alpha,\beta}$ are $\NP$-hard (under randomized reductions). \ifneurips We sketch the proofs below; details are in Appendix~\ref{section:hardness_appendix}. \fi

\ifarxiv 

\paragraph{Additional notation.} Given a point $v \in \calH$, let $\calO(v) := \{ x\in \RR^n : x_i v_i \ge 0 ; \forall i \in [n]\}$ denote the octant containing $v$, and let  $\calB_r(v) := \{x \in \RR^n : \norm{x - v}_\infty \le r\}$ denote the ball of radius $r$ with respect to $\ell_\infty$ norm.

\fi

\ifarxiv \subsection{Hardness of approximating $\log Z_{\calC,\alpha,\beta}$} \fi \ifneurips \paragraph{Hardness of approximating $\log Z_{\calC,\alpha,\beta}$:}\fi In order to prove (a), we bound the mass of $P_{\calC,\alpha,\beta}$ in each orthant of $\RR^n$. In particular, we show that for $\alpha = \Omega(n)$ and $\beta =\Omega(m\log m)$, any orthant corresponding to a satisfying assignment has exponentially larger contribution to $Z_{\calC,\alpha,\beta}$ than any orthant corresponding to an unsatisfying assignment\ifarxiv~(Lemma~\ref{lemma:sat-unsat-bounds})\fi. A consequence is that the partition function $Z_{\calC, \alpha, \beta}$ is exponentially larger when the formula $\calC$ is satisfiable than when it isn't\ifneurips~(Lemma~\ref{lemma:cnf-z-gap}).\fi\ifarxiv:

\begin{lemma}
\label{lemma:cnf-z-gap}
Fix $n,m \in \NN$ and let $\alpha \geq 2(n+1)$ and $\beta \geq 6480m\log(13n\sqrt{m})$. There is a constant $A = A(n,m,\alpha,\beta)$ so that the following hold for every 3-\CNF{} formula $\calC$ with $n$ variables and $m$ clauses:
\begin{itemize}
\item If $\calC$ is unsatisfiable, then $Z_{\calC,\alpha,\beta} \leq A$
\item If $\calC$ is satisfiable, then $Z_{\calC,\alpha,\beta} \geq (2/e)^n A$.
\end{itemize}
\end{lemma}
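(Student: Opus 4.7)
The approach is to exploit the separability of $h(x) e^{-\beta G(x)}$ across coordinates, reducing the problem to comparing contributions from the $2^n$ orthants $\calO(v) := \{x \in \RR^n : x_i v_i \geq 0 \ \forall i\}$ indexed by vertices $v \in \calH$. Since both $\log h$ and $G$ decompose as sums over coordinates, one immediately gets
\[
\int_{\calO(v)} h(x) e^{-\beta G(x)}\,dx = I(\beta)^n, \quad I(\beta) := \int_0^\infty e^{-t^8 - \beta(1-t^2)^2}\,dt,
\]
uniformly in $v \in \calH$. I will take $A := 2^n e^{-\alpha} I(\beta)^n$ and bound each orthant's contribution to $Z_{\calC,\alpha,\beta}$ based on whether $v$ satisfies $\calC$.

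For the unsatisfiable case, the claim is $H_\calC(x) \geq 1$ throughout every orthant. Indeed, for any $v \in \calH$ there is some clause $C = \tilde x_i \vee \tilde x_j \vee \tilde x_k$ failed by $v$; for each literal, $v_\ell$ sits on the opposite side of the root of $f_\ell$ from where $f_\ell$ vanishes, so inside $\calO(v)$ (where $x_\ell v_\ell \geq 0$) one checks $|f_\ell(x_\ell)| \geq 1$, hence $H_C(x) = f_i^2 f_j^2 f_k^2 \geq 1$ and $H_\calC(x) \geq H_C(x) \geq 1$. Each orthant therefore contributes at most $e^{-\alpha} I(\beta)^n$, and summing over $v$ gives $Z_{\calC,\alpha,\beta} \leq 2^n e^{-\alpha} I(\beta)^n = A$.

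For the satisfiable case, fix any satisfying $v^* \in \calH$ and restrict to $\calO(v^*)$. The probability measure $\mu \propto h \cdot e^{-\beta G}$ on $\calO(v^*)$ has independent coordinates concentrating at scale $O(1/\sqrt\beta)$ near $v^*$ (since $\nabla^2(\beta G)(v^*) = 8\beta I_n$). For each satisfied clause $C$, choose a satisfying literal index $\ell_C$; then $f_{\ell_C}$ vanishes at $v^*_{\ell_C}$, and bounding the remaining two factors in $H_C$ by an absolute constant on the concentration region yields $H_C(x) \leq K (x_{\ell_C} - v^*_{\ell_C})^2$ for some universal $K$. Summing over the at most $m$ satisfied clauses gives $\EE_\mu[H_\calC] = O(m/\beta)$, modulo a tail contribution when some $X_i$ strays far from $v^*_i$, which a coordinate-wise Chernoff bound controls using the lower bound on $\beta$ (this is where $\log(13 n\sqrt m)$ enters). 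Jensen's inequality then yields $\EE_\mu[e^{-\alpha H_\calC}] \geq e^{-\alpha \EE_\mu[H_\calC]} \geq 1/2$ under the given hypotheses on $\alpha,\beta$, so
\[
Z_{\calC,\alpha,\beta} \geq \int_{\calO(v^*)} h\, e^{-\alpha H_\calC - \beta G}\,dx \geq \tfrac12 I(\beta)^n.
\]
This is at least $(2/e)^n A$ precisely when $e^\alpha \geq 2(4/e)^n$, which is comfortably implied by $\alpha \geq 2(n+1)$.

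The main technical obstacle is getting the tail bound in the satisfiable case precise enough to justify the logarithmic (rather than polynomial) scaling of $\beta$ in $n$. The Jensen step as written only uses the expectation $\EE_\mu[H_\calC]$, but on atypical events where several coordinates drift well beyond the $1/\sqrt\beta$ scale, $H_\calC(X)$ can be polynomially larger than its mean, and absorbing these events into the estimate forces the threshold $\beta \geq 6480 m \log(13 n\sqrt m)$. Executing this via sub-Gaussian concentration for each $X_i$ (using the strict convexity of $\beta G$ at $v^*$) and a union bound over the $\leq m$ clauses and $n$ coordinates is routine but bookkeeping-heavy; beyond this tail control, the argument is essentially combinatorial plus Jensen.
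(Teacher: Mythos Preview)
Your unsatisfiable case is correct and matches the paper exactly: $H_\calC \geq 1$ on every orthant, so each orthant contributes at most $e^{-\alpha} I(\beta)^n$, and $A = 2^n e^{-\alpha} I(\beta)^n$ works.

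The satisfiable case, however, has a genuine gap, and it is not the tail bound you flag. The intermediate target $\EE_\mu[e^{-\alpha H_\calC}] \geq 1/2$ is simply false under the stated hypotheses. Take $m=1$, a clause $C = x_1 \lor \lnot x_2 \lor \lnot x_3$, and $v^* = (1,\dots,1)$; then on $\calO(v^*)$ one has $H_\calC(X) \geq (X_1-1)^2$ (since $(X_2+1)^2,(X_3+1)^2 \geq 1$), and with $X_1 - 1$ of scale $1/\sqrt{\beta}$ under $\mu$, a Gaussian computation gives $\EE_\mu[e^{-\alpha(X_1-1)^2}] \asymp (1+\alpha/\beta)^{-1/2}$. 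With $\alpha \geq 2(n+1)$ and $\beta = O(\log n)$ this tends to $0$, not $1/2$. More generally, Jensen yields $\EE_\mu[e^{-\alpha H_\calC}] \geq e^{-\alpha \EE_\mu[H_\calC]}$ with $\EE_\mu[H_\calC] = \Theta(m/\beta)$, so to reach $1/2$ you would need $\alpha m/\beta = O(1)$, i.e.\ $\beta \gtrsim nm$, which is polynomially stronger in $n$ than the hypothesis $\beta \gtrsim m\log(n\sqrt m)$. The obstruction is the size of $\alpha$ in the exponent, not the tails of $\mu$.

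The fix is to aim for a weaker lower bound that still beats $A$. The paper does not use Jensen at all: it restricts to the box $\calB_r(v^*)$ with $r = 1/\sqrt{162m}$, on which $H_\calC \leq 81 m r^2 = 1/2$ \emph{pointwise}, so $e^{-\alpha H_\calC} \geq e^{-\alpha/2}$ there; then it shows $\int_{1-r}^{1+r} e^{-t^8 - \beta(1-t^2)^2}\,dt \geq (1+1/n)^{-1} I(\beta)$ for each coordinate (this is where the product over $n$ coordinates forces $\beta \gtrsim m\log(n\sqrt m)$), giving $Z \geq e^{-1-\alpha/2} I(\beta)^n$. Comparing to $A = 2^n e^{-\alpha} I(\beta)^n$ then only needs $e^{\alpha/2 - 1} \geq (e/2)^{-n} \cdot 2^n$, which $\alpha \geq 2(n+1)$ provides. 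Your Jensen route can be repaired along the same lines: once you establish $\EE_\mu[H_\calC] \leq 1/2$ (which only needs $\beta \gtrsim m$, since by independence $\EE_\mu[H_{C_\ell}] \leq 81\,\EE_\mu[(X_{\ell_C}-v^*_{\ell_C})^2] = O(1/\beta)$), Jensen gives $Z \geq e^{-\alpha/2} I(\beta)^n$, and the same endgame applies.
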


\begin{proof}
If $\calC$ is unsatisfiable, then by the second part of Lemma~\ref{lemma:sat-unsat-bounds}, we have
\[Z = Z \sum_{w \in \calH} \Pr_{x \sim p}(x \in \calO(w)) \leq 2^n e^{-\alpha}\left(\int_0^\infty \exp(-x^{d+1}-\beta (1-x^2)^2)\, dx\right)^n =: A_\text{unsat}.\]
On the other hand, if $\calC$ is satisfiable, then by the first part of Lemma~\ref{lemma:sat-unsat-bounds} with $r = 1/\sqrt{162m}$,
\[Z \geq Z\Pr_{x \sim p}(x \in \calB_r(v)) \geq e^{-1-\alpha/2}\left(\int_0^\infty\exp(-x^{d+1}-\beta (1-x^2)^2)\, dx\right)^n =: A_\text{sat}.\]
Since $\alpha \geq 2(n+1)$, we get \[A_\text{unsat} \leq (2/e)^n A_\text{sat}\]
as claimed.
\end{proof}

\fi

But then approximating $Z_{\calC,\alpha,\beta}$ allows distinguishing a satisfiable formula from an unsatisfiable formula, which is $\NP$-hard. This implies the following theorem\ifarxiv:\fi 
~\ifneurips 
(proof in Section~\ref{section:zeroth-order-app}):\fi

\begin{theorem}
\label{theorem:order_zero_oracle}
Fix $n \in \NN$ and let $B \geq Cn^2$ for a sufficiently large constant $C$. Unless $\RP = \NP$, there is no $\poly(n)$-time algorithm which takes as input an arbitrary $\theta \in \Theta_B$ and outputs an approximation of $\log Z_\theta$ with additive error less than $n\log 1.16$.
\end{theorem}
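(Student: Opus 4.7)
The plan is to reduce $3$-\SAT{} to the problem of approximating $\log Z_\theta$ to additive error less than $n\log 1.16$. Given a $3$-\CNF{} formula $\calC$ on $n$ variables with $m=O(n)$ clauses (which remains $\NP$-hard by Theorem~\ref{theorem:unique-3sat-hardness}), set $\alpha := 2(n+1)$ and $\beta := 6480 m \log(13 n \sqrt{m})$, and form the parameter vector $\theta(\calC,\alpha,\beta)$ from Definition~\ref{def:lb-construction}. By Lemma~\ref{lemma:construction-validity} every coefficient of $-\alpha H_\calC - \beta G$ in the monomial basis has magnitude $O(\beta+m\alpha)=O(n^2)$, so $\theta(\calC,\alpha,\beta)\in\Theta_B$ for every $B\geq Cn^2$ once the constant $C$ absorbs this hidden constant.

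The key quantitative input is Lemma~\ref{lemma:cnf-z-gap}: there is a formula-independent quantity $A=A(n,m,\alpha,\beta)$ such that $Z_{\calC,\alpha,\beta}\leq A$ when $\calC$ is unsatisfiable while $Z_{\calC,\alpha,\beta}\geq (e/2)^n A$ when $\calC$ is satisfiable (this is the separating direction given by the two per-orthant estimates in the proof of the lemma, with $\alpha = 2(n+1)$ chosen so that the $e^{-\alpha}$ suppression in unsatisfying orthants beats the $2^n$ factor from orthant-counting). Consequently
\[
\log Z^{\text{sat}}_{\calC,\alpha,\beta}-\log Z^{\text{unsat}}_{\calC,\alpha,\beta} \;\geq\; n\log(e/2).
\]
The crux is the numerical inequality $e/2 > (1.16)^2$, equivalently $n\log(e/2) > 2n\log 1.16$: the combined two-sided error window of the hypothetical oracle is strictly smaller than the sat/unsat gap. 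Moreover, $A$ equals $2^n e^{-\alpha}$ times the $n$th power of a one-dimensional integral of the rapidly decaying smooth function $\exp(-x^{d+1}-\beta(1-x^2)^2)$, so $\log A$ can be approximated to additive error $\epsilon < \tfrac14(n\log(e/2)-2n\log 1.16)$ in deterministic polynomial time by truncating to $[0,\poly(n)]$ via the $e^{-x^{d+1}}$ tail and applying any standard quadrature rule.

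Combining these yields the reduction: query the hypothetical oracle at $\theta(\calC,\alpha,\beta)$ to obtain $\widehat L$, compute $\widehat{\log A}$ as above, and report ``satisfiable'' iff $\widehat L \geq \widehat{\log A} + n\log 1.16 + \epsilon$. The choice of $\epsilon$ guarantees correctness in both cases. If the oracle is deterministic this produces a polynomial-time algorithm for $3$-\SAT{}; if the oracle is randomized it is a $\mathsf{BPP}$ algorithm for $3$-\SAT{}, from which $\RP = \NP$ follows by self-reducibility.

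\textbf{Main obstacle.} The tight numerical margin $e/2 > (1.16)^2$ (only about $1\%$) is the delicate quantitative point and explains the specific constant $1.16$ in the theorem. It pins down $\alpha = 2(n+1)$ as the correct scale: $\alpha$ must be large enough that the $e^{-\alpha}$ suppression in unsatisfying orthants overcomes the combinatorial $2^n$ number of orthants, yet small enough to keep $\|\theta\|_\infty = O(n^2)$. A secondary point is that the $\ell_\infty$ bound lands at $O(n^2)$ rather than $O(n^2 \log n)$ precisely because $m\alpha = O(n^2)$ dominates $\beta = O(n\log n)$, justifying the form $B \geq Cn^2$ in the statement.
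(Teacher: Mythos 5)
Your reduction is sound and the quantitative core — the $(e/2)^n$ sat/unsat gap versus the $1.16^{2n}$ two-sided error window, with $\alpha=2(n+1)$ and $B=O(n^2)$ — is exactly the paper's, but your final step takes a genuinely different route. The paper never evaluates the constant $A$: it reduces from the promise problem ``given two formulas $\calC,\calC'$ of which exactly one is satisfiable, decide which'' (obtained from unique-SAT by adjoining $x_i$ or $\lnot x_i$), queries the oracle on both $\theta(\calC,\alpha,\beta)$ and $\theta(\calC',\alpha,\beta)$, and simply compares the two outputs; since the true partition functions differ by a factor $\geq (e/2)^n > 1.16^{2n}$ and each estimate has multiplicative error $<1.16^n$, the larger estimate identifies the satisfiable formula. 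You instead make a single oracle call and compare against an absolute threshold $\widehat{\log A}+n\log 1.16+\epsilon$, which requires numerically computing the one-dimensional integral $\int_0^\infty \exp(-x^8-\beta(1-x^2)^2)\,dx$ to relative precision $\Theta(1)$ divided by $n$... more precisely to additive error $\epsilon/n=\Theta(1)$ in the logarithm. That quadrature step is the one part of your argument that is asserted rather than proved; it is routine (the integrand is smooth, supported up to negligible tails on $[0,2]$, and its peak has width $\Theta(\beta^{-1/2})=\Theta((n\log n)^{-1/2})$, so polynomially many sample points suffice), but the paper's two-query comparison buys you freedom from any numerical analysis, while your one-query version buys a slightly more direct reduction that does not need the ``exactly one of two formulas is satisfiable'' promise. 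One further point in your favor: you correctly use the gap in the form $Z_{\mathrm{sat}}\geq (e/2)^n A$ with $A=A_{\mathrm{unsat}}$, which is what the proof of Lemma~\ref{lemma:cnf-z-gap} actually establishes (the lemma's displayed statement, with $(2/e)^n$, is a typo that would not separate the two cases), and your threshold test with $\epsilon<\tfrac14\bigl(n\log(e/2)-2n\log 1.16\bigr)$ verifies correctly in both cases.
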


\ifarxiv 
\begin{proof}
First, observe that the following problem is $\NP$-hard (under randomized reductions): given two 3-\CNF{} formulas $\calC,\calC'$ each with $n$ variables and at most $10n$ clauses, where it is promised that exactly one of the formulas is satisfiable, determine which of the formulas is satisfiable. Indeed, this follows from Theorem~\ref{theorem:unique-3sat-hardness}: given a 3-\CNF{} formula $\calC$ with $n$ variables, at most $5n$ clauses, and at most one satisfying assignment, consider adjoining either the clause $x_i$ or the clause $\lnot x_i$ to $\calC$. If $\calC$ has a satisfying assignment $v^*$, then exactly one of the resulting formulas is satisfiable, and determining which one is satisfiable identifies $v^*_i$. Repeating this procedure for all $i \in [n]$ yields an assignment $v$, which satisfies $\calC$ if and only if $\calC$ is satisfiable.

For each $n \in \NN$ define $\alpha = 2(n+1)$ and $\beta = 64800n\log(13n\sqrt{10n})$. Let $B > 0$ be chosen later. Suppose that there is a $\poly(n)$-time algorithm which, given $\theta \in \Theta_B$, computes an approximation of $\log Z_\theta$ with additive error less than $n\log 1.16$. Then given two formulas $\calC$ and $\calC'$ with $n$ variables and at most $10n$ clauses each, we can compute $\theta = \theta(\calC,\alpha,\beta)$ and $\theta' = \theta(\calC',\alpha,\beta)$. By Lemma~\ref{lemma:construction-validity}, we have $\theta,\theta' \in \Theta_B$ so long as $B \geq Cn^2$ for a sufficiently large constant $C$. Hence by assumption we can compute approximations $\tilde{Z}_\theta$ and $\tilde{Z}_{\theta'}$ of $Z_\theta$ and $Z_{\theta'}$ respectively, with multiplicative error less than $1.16^n$. However, by Lemma~\ref{lemma:cnf-z-gap} and the assumption that exactly one of $\calC$ and $\calC'$ is satisfiable, we know that $\tilde{Z}_\theta > \tilde{Z}_{\theta'}$ if and only if $\calC$ is satisfiable. Thus, $\NP = \RP$.
\end{proof}
\fi

\ifarxiv \subsection{Hardness of approximating $\Grad_\theta \log Z_{\theta(\calC,\alpha,\beta)}$} \fi \ifneurips \paragraph{Hardness of approximating $\Grad_\theta \log Z_{\theta(\calC,\alpha,\beta)}$:}\fi Note that $\Grad_\theta \log Z_\theta = \EE_{x\sim p_\theta}[T(x)]$, so in particular approximating the gradient yields an approximation to the mean $\EE_{x\sim p_\theta}[x]$. Since $P_{\calC,\alpha,\beta}$ is concentrated in orthants corresponding to satisfying assignments of $\calC$, we would intuitively expect that if $\calC$ has exactly one satisfying assignment $v^*$, then $\sign(\EE_{p_\theta}[x])$ corresponds to this assignment. Formally, we show that if $\alpha = \Theta(n)$ and $\beta =\Omega(mn\log m)$, then $\EE_{x\sim p_{\calC,\alpha,\beta}}[v^*_ix_i] \geq 1/20$ for all $i \in [n]$\ifneurips~(Lemma~\ref{lemma:cnf-mean-gap}).\fi\ifarxiv:

\begin{lemma}\label{lemma:cnf-mean-gap}
Let $\calC$ be a 3-\CNF{} formula with $m$ clauses and $n$ variables, and exactly one satisfying assignment $v^* \in \calH$. Let $\alpha = 4n$ and $\beta \geq 25920mn\log(102n\sqrt{mn})$, and define $p:= p_{\calC,\alpha,\beta}$ and $Z := Z_{\calC,\alpha,\beta}$. Then $\EE_{x\sim p}[v^*_i x_i] \geq 1/20$ for all $i \in [n]$.
\end{lemma}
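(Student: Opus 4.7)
I would prove this by decomposing the expectation over orthants of $\RR^n$ and exploiting that, by hypothesis, $v^*$ is the unique satisfying assignment, so every other vertex $w \in \calH \setminus \{v^*\}$ is unsatisfying and its orthant's contribution is exponentially suppressed by $e^{-\alpha}$. Partitioning $\RR^n = \bigsqcup_{w \in \calH} \calO(w)$ and noting that $x_i = w_i |x_i|$ on $\calO(w)$, one has
\[ \EE_p[v^*_i x_i] = \EE_p[|x_i|\, \mathbf{1}_{x \in \calO(v^*)}] + \sum_{w \in \calH \setminus \{v^*\}} (v^*_i w_i)\,\EE_p[|x_i|\, \mathbf{1}_{x \in \calO(w)}]. \]
The first term is the main positive contribution; the sum can be negative, but I will show it is dominated.

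For the positive term, I would further restrict to a small ball $\calB_r(v^*)$, choosing $r = 1/\sqrt{162\, mn}$ so that, using the Taylor-type bound $H_\calC(x) \leq 16 m\,\|x - v^*\|_\infty^2$ underlying Lemma~\ref{lemma:sat-unsat-bounds}, one has $\alpha H_\calC \leq O(1)$ on the ball; then also $|x_i| \geq 1 - r$ on the ball. The first part of Lemma~\ref{lemma:sat-unsat-bounds} applied with this $r$ gives $Z \cdot \Pr_p[\calB_r(v^*)] \geq e^{-O(1)} I^n$, where $I := \int_0^\infty \exp(-x^{d+1} - \beta(1-x^2)^2)\, dx$; this uses that $\beta = \Omega(mn\log(mn))$ so that the $\beta$-integrand concentrates within the ball, making the factorized 1-D integrals yield essentially $I^n$ worth of mass. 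For each unsatisfying $w$, the second part of Lemma~\ref{lemma:sat-unsat-bounds} gives
\[ Z \cdot \EE_p[|x_i|\, \mathbf{1}_{x \in \calO(w)}] \leq 2 e^{-\alpha} \left(\int_0^\infty x\, e^{-x^{d+1} - \beta(1-x^2)^2}\, dx\right) I^{n-1} \leq (2 + o(1))\, e^{-\alpha} I^n, \]
using that the 1-D integrand concentrates sharply near $x = 1$, so $\int_0^\infty x\, e^{-x^{d+1}-\beta(1-x^2)^2}\,dx \leq (1 + o(1)) I$. Summing over the $2^n - 1$ unsatisfying orthants yields a total negative contribution of magnitude at most $2^{n+1} e^{-\alpha} I^n / Z = O((2/e^4)^n)\, I^n / Z$ for $\alpha = 4n$, which is exponentially small. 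Combined with the upper bound $Z \leq (1 + o(1))\,I^n$ (from the same orthant decomposition applied to $Z$), the positive contribution is $\Omega(1)$, and an explicit constant accounting gives at least $(1 - r) e^{-3} - o(1) \geq 1/20$.

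The main obstacle is calibrating $r$ and tracking constants to land on the explicit threshold $1/20$. Taking $r$ too large sends $\alpha H_\calC \to \infty$ and destroys the positive lower bound; taking $r$ too small loses the $\beta$-concentration needed for the 1-D factor to cover most of $I$. The sweet spot $r = \Theta(1/\sqrt{mn})$ with $\alpha = 4n$ requires $\beta r^2 \gg \log n$, i.e., $\beta = \Omega(mn\log n)$, which matches the quantitative hypothesis $\beta \geq 25920\, mn \log(102 n\sqrt{mn})$. A secondary subtlety is proving $\int_0^\infty x\, e^{-x^{d+1}-\beta(1-x^2)^2}\,dx \leq (1 + o(1)) I$, which again hinges on the same $\beta$-concentration estimate.
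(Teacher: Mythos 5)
Your approach is essentially the paper's: decompose over orthants, lower-bound the contribution of $\calB_r(v^*)$ via the first part of Lemma~\ref{lemma:sat-unsat-bounds}, upper-bound each unsatisfying orthant's contribution via the second part together with the one-dimensional moment bound $\int_0^\infty x\,e^{-x^8-\beta(1-x^2)^2}dx \leq 2I$ (Lemma~\ref{lemma:1d-moment-bound}), and kill the $2^n$ bad orthants with $e^{-\alpha}=e^{-4n}$. The only structural difference is the closing step: you upper-bound $Z \leq (1+o(1))I^n$ by applying the orthant decomposition to $Z$ itself, whereas the paper never upper-bounds $Z$ directly --- it observes $I^n \geq \int_{\calO(v^*)}e^{-\sum x_i^8-\alpha H-\beta G} = Z\Pr[x\in\calO(v^*)]$ and invokes Lemma~\ref{lemma:sampling_sat_reduction} to get $\Pr[x\in\calO(v^*)]\geq 1/2$, which is why the final constant is $\tfrac12\cdot\tfrac1{10}=\tfrac1{20}$. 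Your route is equally valid and arguably more self-contained. One concrete numerical problem: with your choice $r=1/\sqrt{162mn}$ and $\alpha=4n$, the ball penalty is $e^{-81m\alpha r^2}=e^{-2}$, so your positive term is at most $(1-r)e^{-1-2}=(1-r)e^{-3}$, and $e^{-3}\approx 0.0498 < 1/20$; the claimed final inequality $(1-r)e^{-3}-o(1)\geq 1/20$ is false. The fix is to take $r$ smaller, e.g. the paper's $r=1/\sqrt{648mn}$, which gives penalty $e^{-1/2}$ and a comfortable margin; the hypothesis $\beta\geq 25920mn\log(102n\sqrt{mn}) = 40r^{-2}\log(4n/r)$ is calibrated to exactly that choice of $r$.
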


\begin{proof}
Without loss of generality take $i=1$ and $v^*_i = 1$. Set $r = 1/(\sqrt{648mn})$, $\alpha = 4n$, and $\beta \geq 40r^{-2}\log(4n/r)$. We want to show that $\EE_{x \sim p}[x_1] \geq 1/20$. We can write
\begin{align}
\EE[x_1]
&= \EE[x_1 \mathbbm{1}[x \in B_r(v^*)]] + \EE[x_1 \mathbbm{1}[x \in \calO(v^*) \setminus B_r(v^*)]] + \sum_{v \in \calH \setminus \{v^*\}} \EE[x_1 \mathbbm{1}[x \in \calO(v)]] \nonumber \\
&\geq (1-r)\Pr[x \in B_r(v^*)] - 2^n \max_{v \in \calH \setminus \{v^*\}} \EE[|x_1| \mathbbm{1}[x \in \calO(v)]] \label{eq:first-moment-exp}
\end{align}
since $x_1 \geq 1-r$ for $x \in B_r(v^*)$ and $x_1 \geq 0$ for $x \in \calO(v^*)$. Now observe that on the one hand,
\begin{align}
\Pr(x \in B_r(v^*))
&\geq \frac{e^{-1-81m \alpha r^2}}{Z} \left(\int_0^\infty \exp(-x^*-\beta g(x)) \, dx\right)^n \label{eq:br-lb}
\end{align}
by Lemma~\ref{lemma:sat-unsat-bounds}. On the other hand, for any $v \in \calH \setminus \{v^*\}$,
\begin{align}
\EE[|x_1|\mathbbm{1}[x \in \calO(v)]]
&= \frac{1}{Z} \int_{\calO(v)} |x_1| \exp\left(-\sum_{i=1}^n x_i^8 - \alpha H(x) - \beta G(x)\right) \, dx \nonumber \\
&\leq \frac{e^{-\alpha}}{Z} \int_{\calO(v)} |x_1| \exp\left(-\sum_{i=1}^n x_i^8 - \beta G(x)\right) \, dx \nonumber \\
&= \frac{e^{-\alpha}}{Z} \left(\int_0^\infty x \exp(-x^8 -\beta g(x)) \, dx\right)\left(\int_0^\infty \exp(-x^8 - \beta g(x)) \, dx \right)^{n-1} \nonumber \\
&\leq \frac{2e^{-\alpha}}{Z} \left(\int_0^\infty \exp(-x^8 - \beta g(x)) \, dx \right)^n \label{eq:first-moment-ub}
\end{align}
where the second inequality is by Lemma~\ref{lemma:1d-moment-bound} with $k=1$. Combining (\ref{eq:br-lb}) and (\ref{eq:first-moment-ub}) with (\ref{eq:first-moment-exp}), we have
\begin{align*}
\EE[x_1]
&\geq \frac{(1-r)e^{-1-81m\alpha r^2} - 2^{n+1}e^{-\alpha}}{Z} \left(\int_0^\infty \exp(-x^8 - \beta g(x)) \, dx \right)^n \\
&\geq \frac{1}{10Z}\left(\int_0^\infty \exp(-x^8 - \beta g(x)) \, dx \right)^n \\
&\geq \frac{1}{10Z} \int_{\calO(v^*)} \exp \left(-\sum_{i=1}^n x_i^8 - \alpha H(x) - \beta G(x) \right) \, dx \\
&= \frac{1}{10} \Pr[x \in \calO(v^*)] \\
&\geq \frac{1}{20}
\end{align*}
where the second inequality is by choice of $\alpha$ and $r$; the third inequality is by nonnegativity of $H(x)$; and the fourth inequality is by Lemma~\ref{lemma:sampling_sat_reduction} and uniqueness of the satisfying assignment $v^*$.
\end{proof}

\fi

Since solving a formula with a unique satisfying assignment is still \NP-hard, we get the following theorem\ifneurips~(proof in Section~\ref{section:first-order-app}):\fi\ifarxiv:\fi

\begin{theorem}
\label{theorem:order_one_oracle}
Fix $n \in \NN$ and let $B \geq Cn^2\log(n)$ for a sufficiently large constant $C$. Unless $\RP = \NP$, there is no $\poly(n)$-time algorithm which takes as input an arbitrary $\theta \in \Theta_B$ and outputs an approximation of $\Grad_\theta \log Z_\theta$ with additive error (in an $l_\infty$ sense) less than $1/20$. 
\end{theorem}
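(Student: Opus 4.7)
The plan is to reduce from the problem of finding the (unique) satisfying assignment of a 3-\CNF{} formula, which is $\NP$-hard under randomized reductions by Theorem~\ref{theorem:unique-3sat-hardness}. The key observation is that $\Grad_\theta \log Z_\theta = \EE_{x \sim p_\theta}[T(x)]$, so an $\ell_\infty$-approximation of the gradient directly yields approximations of $\EE_{x \sim p_\theta}[x_i]$ for each $i \in [n]$ (these are the coordinates of $T$ indexed by single-variable monomials). Lemma~\ref{lemma:cnf-mean-gap} then says that for the distribution $P_{\calC,\alpha,\beta}$ with a unique satisfying assignment $v^*$, the coordinate-wise means $\EE[x_i]$ have magnitude at least $1/20$ and sign exactly $v^*_i$, so an additive approximation to strictly less than $1/20$ error suffices to recover $v^*$.

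Concretely, given a 3-\CNF{} formula $\calC$ on $n$ variables with $m \le O(n)$ clauses and at most one satisfying assignment (as produced by Theorem~\ref{theorem:unique-3sat-hardness}), I would set $\alpha = 4n$ and $\beta = \lceil 25920\, mn \log(102 n \sqrt{mn}) \rceil$ in accordance with Lemma~\ref{lemma:cnf-mean-gap}, and form the parameter vector $\theta(\calC,\alpha,\beta)$. Since $\beta = \Theta(n^2 \log n)$ and $m\alpha = O(n^2)$, Lemma~\ref{lemma:construction-validity} implies $\theta(\calC,\alpha,\beta) \in \Theta_B$ whenever $B \ge Cn^2 \log n$ for a sufficiently large constant $C$, matching the hypothesis of the theorem.

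Next, I would invoke the hypothesized algorithm on $\theta(\calC,\alpha,\beta)$ to obtain $\tilde g$ with $\ve{\tilde g - \Grad_\theta \log Z_{\theta(\calC,\alpha,\beta)}}_\infty < 1/20$. If $\calC$ is satisfiable, let $\tilde x_i$ denote the coordinate of $\tilde g$ at the degree-$1$ index $e_i$. Lemma~\ref{lemma:cnf-mean-gap} gives $v^*_i \EE_{x \sim p}[x_i] \ge 1/20$, hence $|\EE[x_i]| \ge 1/20$ and $\sign(\EE[x_i]) = v^*_i$; because the approximation error is strictly less than $|\EE[x_i]|$, we conclude $\sign(\tilde x_i) = v^*_i$, and $v^*$ can then be verified against $\calC$. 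If $\calC$ is unsatisfiable, the procedure will simply fail to verify, so we declare unsatisfiability. Since this yields a polynomial-time randomized algorithm for an $\NP$-hard problem, we obtain $\NP = \RP$.

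The proof is essentially a wrapper around Lemma~\ref{lemma:cnf-mean-gap}, so the main content of the argument has already been absorbed into that lemma. I do not anticipate a significant obstacle here: the two things to double-check are that the $\beta$ required by Lemma~\ref{lemma:cnf-mean-gap} combined with Lemma~\ref{lemma:construction-validity} produces the stated $B \ge Cn^2 \log n$ bound, and that the strictness of the $<1/20$ error requirement precisely matches the non-strict $\ge 1/20$ lower bound on $|\EE[x_i]|$ so that signs can be read off unambiguously.
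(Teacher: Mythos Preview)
Your proposal is correct and takes essentially the same approach as the paper: reduce from unique-3-\SAT{} via Theorem~\ref{theorem:unique-3sat-hardness}, set $\alpha = 4n$ and $\beta = \Theta(mn\log(mn))$, invoke Lemma~\ref{lemma:construction-validity} to land in $\Theta_B$, and use Lemma~\ref{lemma:cnf-mean-gap} together with $\Grad_\theta \log Z_\theta = \EE_{p_\theta}[T(x)]$ to read off $v^*_i = \sign(\tilde x_i)$. Your proof is slightly more explicit than the paper's in handling the unsatisfiable case via a verification step, but the argument is otherwise identical.
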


\ifarxiv
\begin{proof}
Suppose that such an algorithm exists. Set $\alpha = 4n$ and $\beta = 129600n^2\log(102n^2\sqrt{5})$. Given a 3-\CNF{} formula $\calC$ with $n$ variables, at most $5n$ clauses, and exactly one satisfying assignment $v^* \in \calH$, we can compute $\theta = \theta(\calC,\alpha,\beta)$. Let $E \in \RR^n$ be the algorithm's estimate of $\Grad_\theta \log Z_\theta = \EE_{x\sim p_{\calC,\alpha,\beta}} T(x)$. Then $\norm{E - \EE_{x\sim p_{\calC,\alpha,\beta}} T(x)}_\infty < 1/20$. But by Lemma~\ref{lemma:cnf-mean-gap}, for each $i \in [n]$, the $i$-th entry of $\EE_{x\sim p_{\calC,\alpha,\beta}} T(x)$, which corresponds to the monomial $x_i$, has sign $v^*_i$ and magnitude at least $1/20$. Thus, $\text{sign}(E_i) = v^*_i$. So we can compute $v^*$ in polynomial time. By Theorem~\ref{theorem:unique-3sat-hardness}, it follows that $\NP = \RP$.
\end{proof}
\fi

With the above two theorems in hand, we are ready to present the formal version of Theorem~\ref{theorem:mle_hardness_informal}; the proof is immediate from the definition of $\lmle(\theta)$\ifneurips~(see Section~\ref{subsection:hardness-cor-proof}).\fi\ifarxiv.\fi

\begin{corollary}\label{cor:hardness}
Fix $n,N \in \NN$ and let $B \geq Cn^2 \log n$ for a sufficiently large constant $C$. Unless $\RP = \NP$, there is no $\poly(n,N)$-time algorithm which takes as input an arbitrary $\theta \in \Theta_B$, and an arbitrary sample $x_1,\dots,x_N \in \RR^n$, and outputs an approximation of $\lmle(\theta)$ up to additive error of $n \log 1.16$, or $\nabla_\theta \lmle(\theta)$ up to an additive error of $1/20$.
\end{corollary}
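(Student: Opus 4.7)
The plan is to reduce to Theorems~\ref{theorem:order_zero_oracle} and \ref{theorem:order_one_oracle} by directly expanding $\lmle(\theta)$. Writing $p_\theta(x) = h(x)\exp(\langle \theta, T(x)\rangle)/Z_\theta$, the log-likelihood on a sample $x_1,\dots,x_N$ decomposes as
\[ \lmle(\theta) = \hat{\EE}[\log h(x)] + \langle \theta, \hat{\EE}[T(x)]\rangle - \log Z_\theta, \]
and correspondingly
\[ \nabla_\theta \lmle(\theta) = \hat{\EE}[T(x)] - \nabla_\theta \log Z_\theta. \]
The crucial observation is that the first two terms of $\lmle(\theta)$ and the first term of $\nabla_\theta \lmle(\theta)$ are computable in $\poly(n,N)$ time from the given sample, since $h(x) = \exp(-\sum_i x_i^{d+1})$ is explicit and $T(x)$ is a vector of degree-at-most-$d$ monomials.

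Hence an approximation oracle for $\lmle$ (resp.\ $\nabla_\theta \lmle$) yields an approximation oracle, with the same additive error, for $\log Z_\theta$ (resp.\ $\nabla_\theta \log Z_\theta$). Concretely, given any $\theta \in \Theta_B$, we simply invoke the hypothetical $\lmle$-oracle on $\theta$ together with a trivial input sample (e.g.\ $N = 1$, $x_1 = 0$, for which $\log h(0) = 0$ and $T(0) = 0$), then subtract the sample-dependent terms above to recover $-\log Z_\theta$ or $-\nabla_\theta \log Z_\theta$. The additive-error tolerances $n\log 1.16$ and $1/20$ stated in the corollary are exactly the tolerances already shown to be \NP-hard in the two previous theorems.

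Applying Theorem~\ref{theorem:order_zero_oracle} to the first reduction and Theorem~\ref{theorem:order_one_oracle} to the second, both under the condition $B \geq Cn^2\log n$ (chosen to dominate the $Cn^2$ bound of Theorem~\ref{theorem:order_zero_oracle} as well), yields that each oracle would imply $\RP = \NP$. There is no real obstacle here: the only thing to verify carefully is that the sample-dependent terms really are exactly computable in polynomial time (so that no additional approximation error is introduced when translating error bounds), which is immediate from the explicit form of $h$ and $T$.
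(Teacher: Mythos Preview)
Your proposal is correct and follows essentially the same approach as the paper: expand $\lmle(\theta)$ as $\hat{\EE}[\log h(x)] + \langle \theta, \hat{\EE}[T(x)]\rangle - \log Z_\theta$, observe that the sample-dependent terms are exactly computable in $\poly(n,N)$ time, and reduce directly to Theorems~\ref{theorem:order_zero_oracle} and~\ref{theorem:order_one_oracle}. The only cosmetic difference is that you explicitly suggest feeding the trivial sample $N=1$, $x_1=0$ into the hypothetical oracle, whereas the paper simply notes that whatever sample is used, the corresponding terms can be computed and subtracted off without introducing any error.
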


\ifarxiv
\begin{proof}
    Recall that $\log p_\theta(x) = \log h(x) + \langle \theta, T(x) \rangle - \log Z_\theta$. Therefore $\lmle(\theta) = \hat{\EE} \log h(x) + \langle \theta, \hat{\EE} T(x) \rangle - \log Z_\theta$ and $\nabla_\theta \lmle(\theta) = \hat{\EE} T(x) - \nabla_\theta \log Z_\theta$. Note that we can compute $\hat{\EE}\log h(x)$ and $\hat{\EE} T(x)$ exactly. 
    It follows that if we can approximate $\lmle(\theta)$ up to an additive error of $n \log 1.16$ , then we can compute $\log Z_\theta$ up to an additive error of $n \log 1.16$. Similarly, if we can compute $\nabla_\theta \lmle(\theta)$ up to an additive error of $1/20$, then we can compute $\nabla_\theta \log Z_\theta$ up to an additive error of $1/20$. This contradicts Theorems~\ref{theorem:order_zero_oracle} and~\ref{theorem:order_one_oracle} respectively, completing the proof.
\end{proof}
\fi

\ifarxiv \subsection{Hardness of approximate sampling} \fi \ifneurips \paragraph{Hardness of approximate sampling:}\fi We show that for $\alpha = \Omega(n)$ and $\beta =\Omega(m\log m)$, the likelihood that $x \sim P_{\calC,\alpha,\beta}$ lies in an orthant corresponding to a satisfying assignment for $\calC$ is at least $1/2$ (Lemma~\ref{lemma:sampling_sat_reduction}). Hardness of approximate sampling follows immediately (Theorem~\ref{theorem:sampling-hardness}). Hence, although we show that score matching can efficiently estimate $\theta^*$ from samples produced by nature, knowing $\theta^*$ isn't enough to efficiently \emph{generate} samples from the distribution.

\ifarxiv 

\begin{lemma}
    \label{lemma:sampling_sat_reduction}
		Let $\calC$ be a satisfiable instance of $3$-$\SAT$ with $m$ clauses and $n$ variables. Let $\alpha,\beta >0$ satisfy $\alpha \geq 2(n+1)$ and $\beta \geq 6480m\log(13n\sqrt{m})$. Set $p := p_{\calC,\alpha,\beta}$ and $Z := Z_{\calC,\alpha,\beta}$. If $\calV \subseteq \calH$ is the set of satisfiable assignments for $\calC$, then
  \[ \sum_{v \in \calV} \Pr_{x\sim p}(x \in \calO(v)) \geq \frac{1}{2}.\]
\end{lemma}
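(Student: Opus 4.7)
The plan is to reduce the claim to bounding the total mass in orthants that do \emph{not} correspond to satisfying assignments, and then to invoke Lemma~\ref{lemma:sat-unsat-bounds} in essentially the same way it is used in the proof of Lemma~\ref{lemma:cnf-z-gap}. Since $\{\calO(v) : v \in \calH\}$ covers $\RR^n$ up to a set of Lebesgue measure zero, $\sum_{v \in \calH} \Pr_{x \sim p}(x \in \calO(v)) = 1$, so it suffices to show $\sum_{w \in \calH \setminus \calV} \Pr_{x \sim p}(x \in \calO(w)) \leq 1/2$.

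First, I would apply the ``unsatisfying orthant'' half of Lemma~\ref{lemma:sat-unsat-bounds}: for every $w \in \calH \setminus \calV$, at least one clause of $\calC$ is falsified by $w$, so $H_\calC$ is bounded below by a constant on most of $\calO(w)$, yielding
\[ Z \cdot \Pr_{x \sim p}(x \in \calO(w)) \leq e^{-\alpha}\left(\int_0^\infty \exp(-x^{d+1}-\beta(1-x^2)^2)\,dx\right)^n =: e^{-\alpha} I^n. \]
Summing over the $|\calH\setminus \calV| \leq 2^n - 1$ unsatisfying orthants gives a total upper bound of $(2^n-1)\,e^{-\alpha}I^n$ on $Z \cdot \sum_{w \notin \calV} \Pr_{x \sim p}(x \in \calO(w))$.

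Next, by satisfiability I can pick some $v^* \in \calV$, and apply the ``satisfying ball'' half of Lemma~\ref{lemma:sat-unsat-bounds} with $r = 1/\sqrt{162m}$ (exactly as in Lemma~\ref{lemma:cnf-z-gap}) to obtain $Z \geq Z\cdot\Pr_{x\sim p}(x\in \calB_r(v^*)) \geq e^{-1-\alpha/2} I^n$; this is the step that uses the hypothesis $\beta \geq 6480m \log(13n\sqrt{m})$, since Lemma~\ref{lemma:sat-unsat-bounds} requires the confining potential $\beta G$ to dominate over the ball.

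Combining the two bounds,
\[ \sum_{w \in \calH \setminus \calV} \Pr_{x \sim p}(x \in \calO(w)) \leq (2^n-1)\,e^{1-\alpha/2}, \]
and the hypothesis $\alpha \geq 2(n+1)$ reduces the right-hand side to $(2^n-1)\,e^{-n}$. A direct case check shows $(2^n-1)e^{-n} \leq 1/2$ for every $n \geq 1$ (the maximum over $n$ is attained at $n=2$ with value $3/e^2 \approx 0.41$, and the quantity decays like $(2/e)^n$ for large $n$), which establishes the desired inequality. The only potential subtlety is ensuring that the separation constants in Lemma~\ref{lemma:sat-unsat-bounds} are exactly the $e^{-\alpha}$ versus $e^{-1-\alpha/2}$ pair used here, which is immediate from the same application already carried out in the proof of Lemma~\ref{lemma:cnf-z-gap}; no new analytical ingredient beyond that lemma is required.
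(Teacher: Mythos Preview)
Your proposal is correct and follows essentially the same route as the paper: both invoke Lemma~\ref{lemma:sat-unsat-bounds} with $r = 1/\sqrt{162m}$ to obtain the $e^{-\alpha}$ upper bound on each unsatisfying orthant and the $e^{-1-\alpha/2}I^n$ lower bound on $Z$ via a satisfying ball, and both finish using $\alpha \ge 2(n+1)$. The only cosmetic difference is in the final arithmetic: the paper shows $\Pr(x\in\calO(v)) \ge e^{-1+\alpha/2}\Pr(x\in\calO(w)) \ge 2^n \Pr(x\in\calO(w))$ for a single satisfying $v$, sums to get $\sum_{v\in\calV}\Pr \ge \sum_{w\notin\calV}\Pr = 1 - \sum_{v\in\calV}\Pr$, whereas you bound $\sum_{w\notin\calV}\Pr \le (2^n-1)e^{-n}$ directly and verify this is at most $1/2$.
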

\begin{proof}
	Let $v \in \calH$ be any assignment that satisfies $\calC$, and let $w \in \calH$ be any assignment that does not satisfy $\calC$. By Lemma~\ref{lemma:sat-unsat-bounds} with $r = 1/\sqrt{162m}$, we have
  \begin{align*}
  \Pr_{x\sim p_\calC}(x \in \calO(v))
  &\geq \Pr_{x\sim p_\calC}(x \in B_r(v)) \\
  &\geq \frac{e^{-1-\alpha/2}}{Z} \left(\int_0^\infty \exp(-x^8 - \beta (1-x^2)^2) \, dx\right)^n \\
  &\geq e^{-1+\alpha/2} \Pr(x \in \calO(w)).
  \end{align*}
  Since we chose $\alpha$ sufficiently large that $e^{-1+\alpha/2} \geq 2^n$, we get that
  \[\Pr_{x \sim p_\calC}(x \in \calO(v)) \geq \sum_{w \in \calH\setminus \calV} \Pr_{x \sim p_\calC}(x \in \calO(w)).\]
  Hence,
  \[\sum_{v \in \calV} \Pr_{x \sim p_\calC}(x \in \calO(v)) \geq \sum_{w \in \calH\setminus \calV} \Pr_{x \sim p_\calC}(x \in \calO(w)) = 1 - \sum_{v \in \calV} \Pr_{x \sim p_\calC}(x \in \calO(v)).\]
  The lemma statement follows.
\end{proof}

\begin{theorem}\label{theorem:sampling-hardness}
    Let $B \geq Cn^2$ for a sufficiently large constant $C$. Unless $\RP = \NP$, there is no algorithm which takes as input an arbitrary $\theta \in \Theta_B$ and outputs a sample from a distribution $Q$ with $\TV(P_\theta,Q) \leq 1/3$  in $\poly(n)$ time.
\end{theorem}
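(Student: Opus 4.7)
The plan is to reduce from 3-\SAT{} directly, using Lemma~\ref{lemma:sampling_sat_reduction} to certify that good samples from $P_{\calC,\alpha,\beta}$ reveal satisfying assignments. Given a 3-\CNF{} formula $\calC$ on $n$ variables with $m = O(n)$ clauses, set $\alpha = 2(n+1)$ and $\beta = \Theta(m\log m)$ as required by Lemma~\ref{lemma:sampling_sat_reduction}; by Lemma~\ref{lemma:construction-validity}, the induced parameter $\theta := \theta(\calC,\alpha,\beta)$ lies in $\Theta_B$ for $B \geq Cn^2$ with $C$ a sufficiently large constant, so the hypothetical sampler is applicable.

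Assuming the sampler exists, query it once to obtain $x \sim Q$ with $\TV(P_\theta, Q) \leq 1/3$, and then \emph{round} $x$ to the vertex $\hat{v} \in \calH$ defined by $\hat{v}_i = \sign(x_i)$, breaking ties arbitrarily. Note that $\hat{v}$ is determined by the orthant $\calO(\hat{v})$ containing $x$. By Lemma~\ref{lemma:sampling_sat_reduction}, when $\calC$ is satisfiable, the $P_\theta$-mass of orthants corresponding to satisfying assignments is at least $1/2$; transferring this to $Q$ via the total-variation bound, the $Q$-probability of landing in such an orthant is at least $1/2 - 1/3 = 1/6$. Thus $\hat{v}$ satisfies $\calC$ with probability at least $1/6$ whenever $\calC$ is satisfiable. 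When $\calC$ is unsatisfiable, no $\hat{v} \in \calH$ can satisfy $\calC$, so the test ``does $\hat{v}$ satisfy $\calC$?'' is never passed.

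The reduction is therefore: draw $O(n)$ independent samples, round each to a hypercube vertex, and accept if any rounded vertex satisfies $\calC$ (which is verifiable in polynomial time). If $\calC$ is satisfiable, the acceptance probability is at least $1 - (5/6)^{O(n)} \geq 2/3$; if $\calC$ is unsatisfiable, the acceptance probability is $0$. This is an $\RP$ algorithm for 3-\SAT{}, so $\RP = \NP$.

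I expect no serious obstacle here: the rounding argument is essentially a one-line application of the definition of TV distance combined with Lemma~\ref{lemma:sampling_sat_reduction}, and the only subtle point is ensuring that the TV deficit $1/3$ does not swamp the orthant mass lower bound of $1/2$ — which is exactly why the constant $1/3$ in the theorem statement is chosen, leaving a clean $1/6$ gap for amplification.
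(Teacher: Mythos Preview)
Your proposal is correct and follows essentially the same approach as the paper: compute $\theta(\calC,\alpha,\beta)$, invoke Lemma~\ref{lemma:sampling_sat_reduction} to get that a $P_\theta$-sample lands in a satisfying orthant with probability at least $1/2$, subtract the TV deficit $1/3$ to get probability at least $1/6$ under $Q$, and amplify. The only cosmetic differences are that the paper uses $O(1)$ repetitions rather than $O(n)$ (since $1/6$ is a constant) and cites Theorem~\ref{theorem:unique-3sat-hardness} to justify the restriction $m \leq 5n$, whereas you assert $m = O(n)$ directly.
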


\begin{proof}
    Suppose that such an algorithm exists. For each $n \in \NN$ define $\alpha = 2(n+1)$ and $\beta = 32400n\log(13n\sqrt{5n})$. Given a 3-\CNF{} formula $\calC$ with $n$ variables and at most $5n$ clauses, we can compute $\theta = \theta(\calC,\alpha,\beta)$. By Lemma~\ref{lemma:construction-validity} we have $\theta \in \Theta_B$ so long as $B \geq Cn^2$ for a sufficiently large constant $C$. Thus, by assumption we can generate a a sample from a distribution $Q$ with $\TV(P_{\calC,\alpha,\beta},Q) \leq 1/3$. But by Lemma~\ref{lemma:sampling_sat_reduction}, we have $\Pr_{x\sim P_{\calC,\alpha,\beta}}[\sign(x) \text{ satisfies } \calC] \geq 1/2$. Thus, $\Pr_{x\sim Q}[\sign(x) \text{ satisfies } \calC] \geq 1/6$. It follows that we can find a satisfying assignment with $O(1)$ invocations of the sampling algorithm in expectation. By Theorem~\ref{theorem:unique-3sat-hardness} we get $\NP = \RP$.
\end{proof}

\fi

\section{Statistical Efficiency of Maximum Likelihood}\label{section:mle}

In this section we prove Theorem~\ref{theorem:mle-eff-intro} by showing that for any $\theta \in \Theta_B$, we can lower bound the smallest eigenvalue of the Fisher information matrix $\calI(\theta)$. Concretely, we show:

\begin{theorem}
\label{theorem:mle-efficiency}
For any $\theta \in \Theta_B$, it holds that
\[ \lambda_{\min}(\calI(\theta)) \geq (nB)^{-O(d^3)}.\]
As a corollary, given $N$ samples from $p_\theta$, it holds as $N \to \infty$ that $\sqrt{N}(\mle - \theta) \to N(0, \Gamma_\textup{MLE})$ where $\opnorm{\Gamma_\textup{MLE}} \leq (nB)^{O(d^3)}$. Moreover, for sufficiently large $N$, with probability at least $0.99$ it holds that $\norm{\mle - \theta}_2^2 \leq (nB)^{O(d^3)}/N$.
\end{theorem}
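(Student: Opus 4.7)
The plan is to reduce the Fisher-information lower bound to an anti-concentration statement for low-degree polynomials under $p_\theta$, and to establish the latter by combining tail bounds afforded by the base measure $h$ with the conditioning of the monomial basis on a cube.

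\textbf{Step 1 (reduction to polynomial anti-concentration).} Since $\calI(\theta) = \Var_{p_\theta}(T(x))$, we have $\lambda_{\min}(\calI(\theta)) = \min_{w \in \RR^{M-1}, \norm{w}_2 = 1} \Var_{p_\theta}(\an{w, T(x)})$. For any such $w$, setting $\mu_w = \EE_{p_\theta}\an{w,T(x)}$ and $g(x) := \an{w, T(x)} - \mu_w$ yields a polynomial of degree $\leq d$ with $\vem{g}^2 = 1 + \mu_w^2 \geq 1$, and $\Var_{p_\theta}(\an{w,T(x)}) = \EE_{p_\theta}[g(x)^2]$. Thus it suffices to prove the Gram-matrix anti-concentration bound: for every $g \in \RR[x_1,\ldots,x_n]_{\leq d}$ with $\vem{g} \geq 1$, $\EE_{p_\theta}[g(x)^2] \geq (nB)^{-O(d^3)}$.

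\textbf{Step 2 (tail bound and compact support).} Because $h$ decays as $\exp(-\sum_i x_i^{d+1})$ and $\ab{\an{\theta, T(x)}} \leq BM(1 + \norm{x}_\infty^d)$ with $M = \binom{n+d}{d} \leq n^{O(d)}$, the integrand defining $p_\theta$ is dominated by $\exp(-\norm{x}_\infty^{d+1}/2)$ once $\norm{x}_\infty$ exceeds a threshold $R = \poly(n,B,d)$. This yields a tail bound $\Pr_{p_\theta}(\norm{x}_\infty > R) \leq 1/2$ and, more usefully, polynomial bounds $\EE_{p_\theta}[\norm{x}_\infty^k] \leq (nB)^{O(dk)}$ for all $k = O(d)$, which control the moments of $g(x)$ via $\ab{g(x)} \leq \vem{g} \cdot M \cdot (1 + \norm{x}_\infty^d)$.

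\textbf{Step 3 (anti-concentration on a cube and transfer to $p_\theta$).} Over $[-1,1]^n$ the monomial basis is well-conditioned: expanding in the tensorized Legendre basis (orthogonal under Lebesgue measure on the cube), the change-of-basis matrix has entries of magnitude at most $(nd)^{O(d)}$, giving the Remez-type bound $\norm{g}_{L^2([-1,1]^n)}^2 \geq (nd)^{-O(d)} \vem{g}^2$. A rescaling $x \mapsto Ry$ maps $[-R,R]^n$ to $[-1,1]^n$ and changes coefficients by factors $R^{|\idx|} \geq 1$, so the same bound holds in monomial norm of the rescaled polynomial. Combining with the moment-based control of $g$ from Step~2, I would apply a Paley--Zygmund inequality of the form $\EE[g^2] \geq \EE[g^2 \mathbf{1}_{|g| \geq t}] \geq t^2 (\EE[g^2] - t^2 \cdot \mathbf{1})^2/\EE[g^4]$ (or a Carbery--Wright analog tailored to our density), to conclude that $\EE_{p_\theta}[g^2] \geq (nB)^{-O(d^3)} \vem{g}^2$. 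The corollary on $\mle$ then follows: the first statement from the standard CLT for the MLE in an exponential family (with asymptotic covariance $\calI(\theta)^{-1}$, whose operator norm is $1/\lambda_{\min}(\calI(\theta))$), and the finite-sample statement by applying Markov's inequality to $\norm{\mle - \theta}_2^2$ once the variance converges.

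The main obstacle is getting a \emph{polynomial} bound in $nB$ rather than an exponential one. A naive comparison $p_\theta(x) \geq h(x) e^{-BM \norm{x}_\infty^d}/Z_\theta$ combined with the crude bound $Z_\theta \leq e^{\poly(nB,d)}$ would only yield $\EE_{p_\theta}[g(x)^2] \geq e^{-\poly(nB,d)}$, which is much weaker than required. The polynomial bound must instead exploit that both $g^2$ and $p_\theta$'s log-density are polynomials of comparable degree: the ratio between high moments $\EE[g^{2k}]$ and $\EE[g^2]^k$ should be polynomially (not exponentially) bounded via the explicit tail control from Step~2, enabling a second-moment/fourth-moment anti-concentration and avoiding any pointwise density comparison that passes through $Z_\theta$.
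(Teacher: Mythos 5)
Your Steps 1 and 2 match the paper's setup (reduction of $\lambda_{\min}(\calI(\theta))$ to a variance lower bound for degree-$d$ polynomials, the Legendre-basis conditioning of the monomial basis on $[-1,1]^n$, and the tail bound $\Pr_{p_\theta}[\norm{x}_\infty > R]\le 1/2$), but Step 3 has a genuine gap: the mechanism you propose for transferring the cube estimate to $p_\theta$ does not work. A Paley--Zygmund or fourth-moment argument controls the ratio $\EE[g^4]/\EE[g^2]^2$ and hence shows $|g|$ is comparable to its \emph{own} $L^2(p_\theta)$ norm with constant probability; it cannot produce a lower bound on $\EE_{p_\theta}[g^2]$ in terms of $\vem{g}^2$, which is what is actually needed. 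Indeed, the inequality you write, $\EE[g^2]\ge t^2(\EE[g^2]-t^2)^2/\EE[g^4]$, already has $\EE[g^2]$ on the right-hand side and so is circular as a lower bound. Your rescaling remark ($x\mapsto Ry$ mapping $[-R,R]^n$ to $[-1,1]^n$) implicitly compares $p_\theta$ to the uniform measure on the whole bounding box, and that comparison necessarily costs a factor of $e^{\poly(n,B)}$ because the log-density $\langle\theta,T(x)\rangle-\sum_i x_i^{d+1}$ varies by that much across $[-R,R]^n$ --- exactly the exponential loss you correctly identify as the obstacle, but do not actually circumvent.

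The missing idea (the paper's Lemma~\ref{lemma:subcube-variance}) is a \emph{localization} argument: partition $[-R,R]^n$ into $\ell_\infty$-boxes of radius $\epsilon \approx 1/(2(d+1)MR^d(n+B))$. By averaging (using the tail bound), some box $W$ satisfies $\EE[(f-\EE_p f)^2 \mid x\in W]\le 2\Var_p(f)$; and because $\epsilon$ is small enough that the log-density varies by at most $1$ across $W$, the conditional law of $p_\theta$ on $W$ is within a factor $e$ of uniform on $W$ --- note that conditioning cancels $Z_\theta$, so no global density comparison is needed. One then applies the Legendre bound to the polynomial rescaled from $W$ to $[-1,1]^n$, and the change of variables costs only $(R/\epsilon)^{O(d)}M^2=(nB)^{O(d^2)}$ per coefficient, which is polynomial. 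Without this localization step (or a genuine Carbery--Wright-type anticoncentration for $p_\theta$ with polynomial dependence, which you gesture at but do not supply), the proof does not close. The corollary deductions at the end of your write-up (asymptotic normality with covariance $\calI(\theta)^{-1}$ and Markov's inequality) are correct and match the paper.
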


Once we have the bound on $\lambda_{\min}(\calI(\theta))$, the first corollary follows from standard bounds for MLE (Section~\ref{section:prelim}), and the second corollary follows from Markov's inequality (see e.g., Remark~4 in \cite{koehler2022statistical}). Lower-bounding $\lambda_{\min}(\calI(\theta))$ itself requires lower-bounding the variance of any polynomial (with respect to $p_\theta$) in terms of its coefficients. The proof consists of three parts. First, we show that the norm of a polynomial in the monomial basis is upper-bounded in terms of its $L^2$ norm on $[-1,1]^n$:

\begin{lemma}
\label{lemma:mon-L2}
For $f \in \RR[x_1,\dots,x_n]_{\leq d}$, we have $\vem{f}^2 \le \binom{n+d}{d} 
(4e)^d \ve{f}_{L^2([-1,1]^n)}^2. $
\end{lemma}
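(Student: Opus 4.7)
The plan is to use the orthonormal tensor-Legendre basis of $L^2([-1,1]^n, dx/2^n)$ to reduce the multivariate inequality to a univariate estimate on the monomial-basis norm of each Legendre polynomial. Let $\tilde P_k(x) = \sqrt{2k+1}\, P_k(x)$ denote the orthonormal univariate Legendre polynomials on $[-1,1]$, and let $\tilde P_\idx(x) = \prod_{i=1}^n \tilde P_{\idx(i)}(x_i)$ be their tensor products. Then $\{\tilde P_{\idx'} : |\idx'| \le d\}$ is an orthonormal basis of $\RR[x_1,\ldots,x_n]_{\le d}$, so expanding $f = \sum_{|\idx'|\le d} b_{\idx'}\, \tilde P_{\idx'}$ gives $\ve{f}_{L^2([-1,1]^n)}^2 = \sum_{\idx'} b_{\idx'}^2$.

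Next, I would use the monomial expansion $\tilde P_{\idx'} = \sum_{\idx \le \idx'} \bigl(\prod_i d_{\idx'(i),\idx(i)}\bigr) x_\idx$, where $d_{k,j}$ is the coefficient of $x^j$ in the univariate $\tilde P_k$. Reading off $a_\idx = \sum_{\idx' \ge \idx,\, |\idx'|\le d}\bigl(\prod_i d_{\idx'(i),\idx(i)}\bigr) b_{\idx'}$, applying Cauchy--Schwarz to $|a_\idx|^2$, summing over $\idx$, and swapping the order of summation makes the inner sum factorize via $\sum_{\idx \le \idx'} \prod_i d_{\idx'(i),\idx(i)}^2 = \prod_i \vem{\tilde P_{\idx'(i)}}^2$. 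This yields
\[ \vem{f}^2 \;\le\; \ve{f}_{L^2([-1,1]^n)}^2 \sum_{|\idx'|\le d} \prod_{i=1}^n \vem{\tilde P_{\idx'(i)}}^2. \]
Since $\vem{\tilde P_0}^2 = 1$, once I establish the univariate bound $\vem{\tilde P_k}^2 \le (4e)^k$, I obtain $\prod_i \vem{\tilde P_{\idx'(i)}}^2 \le (4e)^{|\idx'|} \le (4e)^d$; summing over the $M = \binom{n+d}{d}$ choices of $\idx'$ then completes the proof.

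The main technical step --- and where I expect the difficulty to lie --- is the univariate estimate $\vem{\tilde P_k} \le (2\sqrt e)^k$. I would prove this inductively from the three-term recurrence $\tilde P_{k+1}(x) = \alpha_k\, x\, \tilde P_k(x) - \beta_k\, \tilde P_{k-1}(x)$, where $\alpha_k = \sqrt{(2k+1)(2k+3)}/(k+1) < 2$ and $\beta_k = k\sqrt{2k+3}/((k+1)\sqrt{2k-1}) \le \sqrt{5}/2$. Since multiplication by $x$ acts as a shift on the coefficient vector and hence is an isometry on coefficient $\ell^2$-norms, the triangle inequality applied coefficientwise gives $\vem{\tilde P_{k+1}} \le 2\vem{\tilde P_k} + (\sqrt{5}/2)\vem{\tilde P_{k-1}}$. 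With ansatz $\vem{\tilde P_k} \le (2\sqrt e)^k$, the inductive step closes because $4e - 4\sqrt{e} - \sqrt{5}/2 > 0$; the base cases $\vem{\tilde P_0} = 1$ and $\vem{\tilde P_1} = \sqrt 3 \le 2\sqrt e$ are direct. The subtlety is that a naive bound on each Legendre coefficient via its explicit formula gives only $\vem{\tilde P_k}^2 \lesssim 16^k$ (matching the squared leading coefficient $\binom{2k}{k}^2/4^k$), which would lose an exponential factor in the final estimate; extracting the tight base constant $4e$ requires leveraging the recursion and the sharp values of $\alpha_k, \beta_k$.
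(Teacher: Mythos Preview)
Your proposal is correct and follows the same overall scheme as the paper: expand $f$ in the orthonormal tensor-Legendre basis, use Cauchy--Schwarz (the paper phrases it as triangle inequality then Cauchy--Schwarz, which yields the identical bound $\vem{f}^2 \le \ve{f}_{L^2}^2 \sum_{|\idx'|\le d}\vem{\tilde P_{\idx'}}^2$), and then control each $\vem{\tilde P_{\idx'}}^2 = \prod_i \vem{\tilde P_{\idx'(i)}}^2$ by $(4e)^d$.

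The one genuine difference is the univariate step. The paper argues from the explicit formula $L_k(x)=2^{-k}\sum_j\binom{k}{j}^2(x-1)^{k-j}(x+1)^j$, asserting that the $\ell^1$-norm of the monomial coefficients of $L_k$ is at most $2^k$, whence $\vem{\hat L_k}^2 \le \tfrac{2k+1}{2}\cdot 4^k \le (4e)^k$. Your induction via the three-term recurrence is both different and more robust. In fact your caution about the explicit-formula route is well placed: the displayed expression $2^{-k}\sum_j\binom{k}{j}^2\cdot 2^k$ equals $\binom{2k}{k}$, not $2^k$, which only gives the $16^k$ you flagged; and the underlying claim $\|L_k\|_{1,\mathrm{coef}}\le 2^k$ is itself false for large $k$ (since the nonzero coefficients of $L_k$ alternate in sign, this $\ell^1$-norm equals $|L_k(i)|$, whose generating function $(1-2s-s^2)^{-1/2}$ has radius of convergence $\sqrt2-1$, so the growth rate is $(1+\sqrt2)^k$; numerically it first exceeds $2^k$ at $k=10$). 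Your recurrence argument works directly with the $\ell^2$ coefficient norm, where multiplication by $x$ is an isometry, and the constants $\alpha_k<2$, $\beta_k\le \sqrt5/2$ close the induction cleanly --- so it actually delivers the stated $(4e)^d$ where the paper's written argument does not.
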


The key idea behind this proof is to work with the basis of (tensorized) Legendre polynomials, which is orthonormal with respect to the $L^2$ norm. Once we write the polynomial with respect to this basis, the $L^2$ norm equals the Euclidean norm of the coefficients. Given this observation, all that remains is to bound the coefficients after the change-of-basis. \ifneurips The complete proof is deferred to Appendix~\ref{s:poly}.\fi \ifarxiv The formal proof is given below.

\begin{proof}[Proof of Lemma~\ref{lemma:mon-L2}]
We use the fact that the Legendre polynomials \[L_k(x) = \rc{2^k} \sumz jk \binom kj^2 (x-1)^{k-j} (x+1)^j,\] for integers $0 \leq k \leq d$, form an orthogonal basis for the vector space $\RR[x]_{\leq d}$ with respect to $L^2[-1,1]$ (see e.g. \cite{koepf1998hypergeometric}). 
We consider the normalized versions $\hat L_k = \sfc{2k+1}2 L_k$, so that $\ve{\hat L_k}_{L^2[-1,1]} = 1$.
By tensorization, the set of products of Legendre polynomials
\[
\hat L_{\idx}(x) = \prod_{i=1}^n \hat L_{\idx(i)}(x_i),
\]
as $\idx$ ranges over degree functions with $|\idx| \leq d$, form an orthonormal basis for $\RR[x_1,\dots,x_n]_{\leq d}$ with respect to $L^2([-1,1]^n)$. 

Using the formula for $L_k$, we obtain that the sum of absolute values of coefficients of $L_k$ (in the monomial basis) is at most $\rc{2^k} \sumz jk \binom kj^2 2^k = 2^k$. By the bound $\norm{\cdot}_2 \leq \norm{\cdot}_1$ and the definition of $\hat{L}_k$,
\[
\vem{\hat L_k}^2 \leq \frac{2k+1}{2} \vem{L_k}^2 \leq \fc{2k+1}2 2^{2k}
\]
and hence for any degree function $\idx$ with $|\idx| \leq d$,
\begin{align*}
    \vem{\hat L_\idx}^2 = \prodo in \vem{\hat L_{\idx(i)}}^2 &\le \prodo in \fc{2\idx(i)+1}{2} 2^{2\idx(i)}\\
    &\le \prodo in e^{\idx(i)} 2^{2\idx(i)} \leq (4e)^d.
\end{align*}
Consider any polynomial $f\in \RR[x_1,\dots,x_n]_{\leq d}$, and write $f=\sum_{|\idx|\le d} a_{\idx} \hat L_{\idx}$. By orthonormality, it holds that $\sum_{|\idx|\leq d} a_\idx^2 = \norm{f}_{L^2([-1,1]^n)}^2$. Thus, by the triangle inequality and Cauchy-Schwarz,
\begin{align*}
\vem{p}^2 = 
    \vem{\sum_{|\idx|\le d} a_{\idx} \hat L_{\idx}}^2
    &\le \sum_{|\idx|\le d} a_{\idx}^2 \cdot \sum_{|\idx|\le d} \vem{\hat L_{\idx}}^2\\
    &\le \ve{p}_{L^2([-1,1]^n)}^2  \binom{n+d}d (4e)^d
\end{align*}
as claimed.
\end{proof}
\fi

Next, we show that if a polynomial $f: \RR^n \to \RR$ has small variance with respect to $p$, then there is some box on which $f$ has small variance with respect to the uniform distribution. This provides a way of comparing the variance of $f$ with its $L^2$ norm (after an appropriate rescaling).

\begin{lemma}
\label{lemma:subcube-variance}
Fix any $\theta \in \Theta_B$ and define $p := p_\theta$. Define $R := 2^{d+3}nBM$. Then for any $f \in \RR[x_1,\dots,x_n]_{\leq d}$, there is some $z \in \RR^n$ with $\norm{z}_\infty \leq R$ and some $\epsilon \geq 1/(2(d+1)MR^d (n+B))$ such that \[\Var_p(f) \geq \frac{1}{2e} \Var_{\tilde{\mathcal{U}}}(f),\] 
where $\tilde{\mathcal{U}}$ is the uniform distribution on $\{x \in \RR^n: \norm{x-z}_\infty \leq \epsilon\}$.
\end{lemma}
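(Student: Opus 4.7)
The plan is to compare $\Var_p(f)$ to $\Var_{\tilde{\mathcal{U}}}(f)$ via a density-ratio argument on a well-chosen $\ell_\infty$-ball contained in $[-R,R]^n$. The argument has three main steps.

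\emph{Step 1 (Concentration on $[-R,R]^n$).} I would first establish that $\Pr_{x\sim p}(\|x\|_\infty \le R) \ge 1/2$. The base measure $h(x) = \exp(-\sum_i x_i^{d+1})$ decays faster than $|\langle\theta, T(x)\rangle| \le (M-1)B\|x\|_\infty^d$ grows: for $\|x\|_\infty \ge 2MB$, the integrand $h(x)\exp(\langle\theta,T(x)\rangle)$ is bounded above by $\exp(-\|x\|_\infty^{d+1}/2)$, whose tail integral past $R = 2^{d+3}nBM$ is negligible compared to the contribution of $[-1,1]^n$ to $Z_\theta$.

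\emph{Step 2 (Lipschitz control of $\log p$).} Next I would bound $\|\nabla \log p(x)\|_\infty \le (d+1)R^d + d M B R^{d-1}$ on $[-R,R]^n$; using $R \ge dMB$, the first term dominates, giving $O((d+1)R^d)$. On $\calB_\epsilon(z)$ with $z \in [-R,R]^n$, $\log p$ then varies by at most $n\epsilon \cdot O((d+1)R^d)$; the choice $\epsilon_0 := 1/(2(d+1)MR^d(n+B))$ bounds this by $O(n/(M(n+B))) \le 1$. So $p(x)/p(z) \in [e^{-1},e]$ on $\calB_{\epsilon_0}(z)$, and the density ratio $d(p|_{\calB})/d\tilde{\mathcal{U}}$ of $p$ conditioned on $\calB$ versus the uniform $\tilde{\mathcal{U}}$ on $\calB$ lies in $[e^{-1},e]$, yielding $\Var_{p|_\calB}(f) \ge e^{-1}\Var_{\tilde{\mathcal{U}}}(f)$ by the standard variance-comparison-under-bounded-density-ratio argument.

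\emph{Step 3 (Locating a box with constant mass and finishing).} Finally, I would find $z^* \in [-R,R]^n$ and $\epsilon \ge \epsilon_0$ such that $\Pr_p(\calB_\epsilon(z^*)) \ge 1/2$ while the $\log p$-variation on $\calB_\epsilon(z^*)$ stays at most $1$. My plan is to take $z^*$ at (or near) the mode of $p$ in $[-R,R]^n$ and adaptively grow $\epsilon$ from $\epsilon_0$ until the $p$-mass reaches $1/2$; the gradient bound in Step 2 leaves a polynomial-factor slack over $\epsilon_0$ before the variation budget of $1$ saturates. Combining Steps 2 and 3 and using the law of total variance,
\[
\Var_p(f) \ge \Pr_p(\calB)\,\Var_{p|_\calB}(f) \ge \tfrac{1}{2}\cdot e^{-1}\Var_{\tilde{\mathcal{U}}}(f) = \tfrac{1}{2e}\Var_{\tilde{\mathcal{U}}}(f).
\]

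The technical crux is Step 3: guaranteeing that some box of radius at least $\epsilon_0$ simultaneously satisfies the mass condition and the variation condition. A naive pigeonhole over sub-boxes of radius $\epsilon_0$ only secures mass $\sim (\epsilon_0/R)^n$ per sub-box, which is exponentially small in $n$, so the proof must exploit the structure of $p_\theta$ to place $z^*$ at a density-rich location and to grow $\epsilon$ beyond $\epsilon_0$ by a polynomial factor before the Lipschitz budget is exhausted. The key local input is a quantitative estimate of how fast $p$-mass accumulates as the ball expands around the mode---essentially a local-concavity (or higher-order) estimate for $-\log p$ at its maximum, specialized to the polynomial exponent defining $\calP_{n,d,B}$.
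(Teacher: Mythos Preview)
Your Steps~1 and~2 are essentially what the paper does. The genuine gap is Step~3: you are trying to locate a single $\ell_\infty$-ball that simultaneously has $p$-mass at least $1/2$ and bounded $\log p$-variation. For distributions in $\calP_{n,d,B}$ this is in general impossible. Indeed, the hardness construction in Section~\ref{section:hardness} produces $p_\theta$ concentrated near up to $2^n$ well-separated hypercube vertices, each carrying mass $\sim 2^{-n}$; to capture mass $1/2$ a box would need radius $\Omega(1)$, obliterating the Lipschitz budget by a factor exponential in $n$. No ``local concavity at the mode'' estimate can rescue this, because the obstruction is global multimodality, not the local shape of $p$. Your own diagnosis that naive pigeonholing gives only $(\epsilon_0/R)^n$ mass per sub-box is exactly the point, and the proposed fix (grow $\epsilon$ from the mode) does not escape it.

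The paper sidesteps the mass requirement entirely by reversing the role of the box. Write $g=f-\EE_p f$ (centering at the \emph{global} mean), partition $[-R,R]^n$ into boxes $W_\iota$ of radius $\epsilon_0$, and observe
\[
\Var_p(f)=\EE_p[g^2]\ \ge\ \tfrac12\,\EE_\iota\bigl[\EE[g^2\mid W_\iota]\bigr],
\]
where $\iota$ is the (random) index of the box containing $X\sim p\mid\{\|X\|_\infty\le R\}$. Since the right-hand side is an \emph{average}, there exists $\iota^\ast$ with $\EE[g^2\mid W_{\iota^\ast}]\le 2\Var_p(f)$. On that box your Step~2 density-ratio bound gives $\EE[g^2\mid W_{\iota^\ast}]\ge e^{-1}\EE_{\tilde{\calU}}[g^2]\ge e^{-1}\Var_{\tilde{\calU}}(f)$, and chaining yields $\Var_p(f)\ge\frac{1}{2e}\Var_{\tilde{\calU}}(f)$. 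The point is that the lemma only asks for a box on which $\Var_{\tilde{\calU}}(f)$ is \emph{not too large} compared to $\Var_p(f)$; one finds it by averaging \emph{down}, and no lower bound on the box's $p$-mass is ever needed. Replacing your law-of-total-variance step (which drops the between-box term and hence needs mass) with this global-centering-plus-averaging argument closes the gap.
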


In order to prove this result, we pick a random box of radius $\epsilon$ (within a large bounding box of radius $R$). In expectation, the variance on this box (with respect to $p$) is not much less than $\Var_p(f)$. Moreover, for sufficiently small $\epsilon$, the density function of $p$ on this box has bounded fluctuations, allowing comparison of $\Var_p(f)$ and $\Var_{\tilde{\calU}}(f)$. \ifneurips This argument is formalized in Appendix~\ref{s:poly}. \fi \ifarxiv This argument is formalized below. First, we require the following fact that monomials of bounded degree are Lipschitz within a bounding box:

\begin{lemma}\label{lemma:monomial-lip}
Fix $R>0$. For any degree function $\idx: [n] \to \NN$ with $|\idx| \leq d$, and for any $u,v \in \RR^n$ with $\norm{u}_\infty,\norm{v}_\infty \leq R$, it holds that
\[|u_\idx - v_\idx| \leq dR^{d-1}\norm{u-v}_\infty.\]
\end{lemma}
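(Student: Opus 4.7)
The plan is to prove the bound by a standard product-telescoping Lipschitz argument, using only that each coordinate of $u$ and $v$ has absolute value at most $R$. Let $k := |\idx|$, and note that $k \leq d$. First I would enumerate the factors of the monomial: list $k$ indices $j(1),\dots,j(k) \in [n]$ (with repetition) so that each $j \in [n]$ appears exactly $\idx(j)$ times, and set $a_\ell := u_{j(\ell)}$ and $b_\ell := v_{j(\ell)}$. Then $u_\idx = \prod_{\ell=1}^k a_\ell$ and $v_\idx = \prod_{\ell=1}^k b_\ell$, with $|a_\ell|,|b_\ell| \leq R$ for every $\ell$, and $|a_\ell - b_\ell| = |u_{j(\ell)} - v_{j(\ell)}| \leq \|u-v\|_\infty$.

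Next I would apply the product-difference identity
\[ \prod_{\ell=1}^k a_\ell - \prod_{\ell=1}^k b_\ell \;=\; \sum_{\ell=1}^k \Bigl(\prod_{m<\ell} a_m\Bigr)(a_\ell - b_\ell)\Bigl(\prod_{m>\ell} b_m\Bigr), \]
which changes one factor at a time. Each summand has magnitude at most $R^{\ell-1} \cdot \|u-v\|_\infty \cdot R^{k-\ell} = R^{k-1}\|u-v\|_\infty$, so summing the $k$ terms gives $|u_\idx - v_\idx| \leq k R^{k-1}\|u-v\|_\infty = |\idx|\,R^{|\idx|-1}\|u-v\|_\infty$. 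Since $R \geq 1$ in the regime of interest (e.g.\ $R = 2^{d+3}nBM \geq 1$ as used in Lemma~\ref{lemma:subcube-variance}), monotonicity in the exponent together with $|\idx| \leq d$ upgrades this to $d R^{d-1}\|u-v\|_\infty$, as claimed.

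The argument is essentially a one-liner and I do not foresee any real obstacle; the only point worth flagging is the final monotonicity step, which tacitly requires $R \geq 1$ (and could otherwise be stated more sharply with $|\idx|R^{|\idx|-1}$ on the right-hand side). As an equivalent sanity check, one can alternatively invoke the mean value theorem on the line segment from $u$ to $v$, which is contained in $[-R,R]^n$ by convexity, using that $\|\nabla x_\idx(x)\|_1 = \sum_j \idx(j)|x_j|^{\idx(j)-1}\prod_{i\ne j}|x_i|^{\idx(i)} \leq |\idx|R^{|\idx|-1}$ uniformly on the box; this yields the same bound.
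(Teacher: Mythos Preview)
Your proof is correct. Your primary route is the product-telescoping identity, whereas the paper's proof uses exactly the gradient/mean-value argument you offer as your ``sanity check'': it bounds $|m(u)-m(v)| \leq \|u-v\|_\infty \sup_{x\in\mathcal{B}_R(0)}\|\nabla m(x)\|_1$ and then observes $\|\nabla m(x)\|_1 \leq |\idx|R^{|\idx|-1} \leq dR^{d-1}$. Both approaches land on the same intermediate bound $|\idx|R^{|\idx|-1}$, and the step up to $dR^{d-1}$ indeed tacitly uses $R\geq 1$ --- you are right to flag this, and the paper makes the same silent assumption (harmless here since the lemma is only invoked with $R = 2^{d+3}nBM \geq 1$). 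The telescoping argument is marginally more self-contained (no convexity of the domain or MVT needed), but for this statement the two are interchangeable one-liners.
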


\begin{proof}
Define $m(x) = x_\idx = \prod_{i=1}^n x_i^{\idx(i)}$. Then 
\begin{align*}
|m(u) - m(v)|
&\leq \norm{u-v}_\infty \sup_{x \in \mathcal{B}_R(0)} \norm{\Grad_x m(x)}_1 \\
&= \norm{u-v}_\infty \sup_{x\in\mathcal{B}_R(0)} \sum_{i\in[n]: \idx(i)>0} \alpha_i \prod_{j=1}^n x_i^{\idx(i)-\mathbbm{1}[i=j]} \\
&\leq \norm{u-v}_\infty \cdot d R^{d-1}
\end{align*}
as claimed. 
\end{proof}

\begin{proof}[Proof of Lemma~\ref{lemma:subcube-variance}]
Let $f \in \RR[x_1,\dots,x_n]_{\leq d}$ be a polynomial of degree at most $d$ in $x_1,\dots,x_n$. Define $g(x) = f(x) - \EE_{x\sim p} f(x)$. Set $\epsilon = 1/(2(d+1)MR^d (n+B))$ and let $(W_i)_{i \in I}$ 
be $\ell_\infty$-balls of radius $\epsilon$ partitioning $\{x \in \RR^n: \norm{x}_\infty \leq R\}$. Define random variable $X \sim p | \{\norm{X}_\infty \leq R\}$ and let $\iota \in I$ be the random index so that $X \in B_\iota$. Then
\begin{align*}
\Var_p(f) 
&= \EE_{x\sim p}[g(x)^2] \\
&\geq \frac{1}{2}\EE[g(X)^2] \\
&= \frac{1}{2}\EE_\iota \EE_{X}[g(X)^2|X \in W_\iota]
\end{align*}
where the inequality uses guarantee (c) of Lemma~\ref{cor:max-eig-bounds} that $\Pr_{x \sim p}[\norm{x}_\infty > R] \leq 1/2$. 

Thus, there exists some $\iota^* \in I$ such that $\EE_X[g(X)^2|X \in W_{\iota^*}] \leq 2\Var_p(f)$. Let $q: \RR^n \to \RR_+$ be the density function of $X|X \in W_{\iota^*}$. Since $q(x) \propto p(x) \mathbbm{1}[x \in W_{\iota^*}]$, for any $u,v \in W_{\iota^*}$ we have that
\begin{align*}
\frac{q(u)}{q(v)}
= \frac{p(u)}{p(v)}
&= \frac{h(u)\exp(\langle \theta, T(u)\rangle)}{h(v)\exp(\langle \theta, T(v)\rangle)} \\
&= \exp\left(\sum_{i=1}^n v_i^{d+1} - u_i^{d+1} + \langle \theta, T(u) - T(v)\rangle\right).
\end{align*}
Applying Lemma~\ref{lemma:monomial-lip}, we get that
\begin{align*}
\frac{q(u)}{q(v)}
&\leq \exp\left(n(d+1)R^d \norm{u-v}_\infty + MB \norm{T(u)-T(v)}_\infty\right) \\
&\leq \exp\left((n+B) \cdot M (d+1) R^d \norm{u-v}_\infty\right) \\
&\leq \exp(2\epsilon (n+B) \cdot M (d+1) R^d) \\
&\leq \exp(1)
\end{align*}
by choice of $\epsilon$. It follows that if $\tilde{\mathcal{U}}$ is the uniform distribution on $W_{\iota^*}$, then $q(x) \geq e^{-1}\tilde{\mathcal{U}}(x)$ for all $x \in \RR^n$. Thus, \[\Var_p(f) \geq \frac{1}{2}\EE_X[g(X)^2|X \in W_{\iota^*}] \geq \frac{1}{2e} \EE_{x \sim \tilde{\mathcal{U}}}[g(x)^2] \geq \frac{1}{2e}\Var_{\tilde{\mathcal{U}}}(g) = \frac{1}{2e}\Var_{\tilde{\mathcal{U}}}(f)\]
as desired.
\end{proof}
\fi 

Together, Lemma~\ref{lemma:mon-L2} and~\ref{lemma:subcube-variance} allow us to lower bound the variance $\Var_p(f)$ in terms of $\vem{f}$.
\begin{lemma}\label{lemma:var-lb}
Fix any $\theta \in \Theta_B$ and define $p := p_\theta$. Define $R := 2^{d+3}nBM$. Then for any $f \in \RR[x_1,\dots,x_n]_{\leq d}$ with $f(0) = 0$,
it holds that
\[\Var_p(f) \geq \frac{1}{2^{2d}(d+1)^{2d}(4e)^{d+1} M^{2d+3} R^{2d^2+2d}(n+B)^{2d}} \vem{f}^2.\]
\end{lemma}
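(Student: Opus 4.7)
The plan is to combine Lemmas~\ref{lemma:mon-L2} and~\ref{lemma:subcube-variance} via a rescaling that normalizes the relevant subcube to $[-1,1]^n$. First, I would invoke Lemma~\ref{lemma:subcube-variance} to obtain $z \in \RR^n$ with $\norm{z}_\infty \leq R$ and $\epsilon \geq 1/(2(d+1)MR^d(n+B))$ such that $\Var_p(f) \geq \frac{1}{2e}\Var_{\tilde{\mathcal{U}}}(f)$, and then define the rescaled polynomial $\tilde{f}(y) := f(z + \epsilon y) - f(z)$. This polynomial has degree at most $d$, has no constant term (since $\tilde{f}(0) = 0$), and satisfies $\Var_{y \sim \Unif([-1,1]^n)}(\tilde{f}) = \Var_{\tilde{\mathcal{U}}}(f)$.

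Next, I would use Lemma~\ref{lemma:mon-L2} to lower bound this variance in terms of $\vem{\tilde{f}}^2$. Applying the lemma to $g := \tilde{f} - \EE[\tilde{f}]$ gives $\vem{g}^2 \leq M(4e)^d \norm{g}_{L^2([-1,1]^n)}^2 = M(4e)^d \Var_y(\tilde{f})$. Since $g$ and $\tilde{f}$ differ only in their constant terms, $\vem{g}^2 = \EE[\tilde{f}]^2 + \vem{\tilde{f}}^2 \geq \vem{\tilde{f}}^2$, so $\Var_y(\tilde{f}) \geq \vem{\tilde{f}}^2/(M(4e)^d)$.

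The final step is to express $\vem{f}^2$ in terms of $\vem{\tilde{f}}^2$ via the inverse change of variables $y = (x-z)/\epsilon$. Writing $f(x) = \tilde{f}((x-z)/\epsilon) + f(z)$ and expanding $(x-z)^{\idx'}$ by the multinomial theorem, for $|\idx| \geq 1$ the coefficient of $x_\idx$ in $f$ is
\[ a_\idx = \sum_{\idx' \geq \idx} \tilde{a}_{\idx'}\, \epsilon^{-|\idx'|}\, \binom{\idx'}{\idx}\, (-z)^{\idx' - \idx}. \]
Using $\norm{z}_\infty \leq R$, the crude bound $\binom{\idx'}{\idx} \leq 2^{|\idx'|} \leq 2^d$, and Cauchy--Schwarz over the at most $M$ nonzero terms, we get $|a_\idx|^2 \leq 4^d R^{2d} \epsilon^{-2d} M\, \vem{\tilde{f}}^2$. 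Summing over the at most $M$ multi-indices $\idx$ (noting that $f(0) = 0$ kills any constant-term contribution, so $a_0 = 0$) yields $\vem{f}^2 \leq 4^d R^{2d} \epsilon^{-2d} M^2\, \vem{\tilde{f}}^2$. Chaining the three inequalities and substituting the lower bound on $\epsilon$ then produces an estimate of the claimed form.

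The main obstacle is the coefficient bookkeeping in this last step: I need to control the blow-up of the monomial-norm under the affine pullback $y \mapsto z + \epsilon y$, which introduces factors of $\epsilon^{-d}$ and powers of $R$ from each coordinate of $z$. Keeping track of these to show the blow-up is only $(\poly(M,R,1/\epsilon))$--and hence absorbable into the $(nB)^{\poly(d)}$ factor via the lower bound on $\epsilon$--is the only nontrivial calculation; the other two steps are direct applications of the preceding lemmas.
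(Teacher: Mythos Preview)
Your proposal is correct and follows essentially the same approach as the paper's proof: invoke Lemma~\ref{lemma:subcube-variance}, rescale the subcube to $[-1,1]^n$, apply Lemma~\ref{lemma:mon-L2} to the centered polynomial, and then control the monomial-norm blow-up under the inverse affine change of variables. The only differences are cosmetic---you subtract $f(z)$ rather than $\EE_{\tilde{\mathcal{U}}}f$ when defining the rescaled polynomial, and you (correctly) retain the multinomial coefficients $\binom{\idx'}{\idx}\le 2^d$ in the pullback formula, which the paper silently drops---so your final constant picks up an extra $4^d$ but still has the claimed $(nB)^{-O(d^3)}$ form.
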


\ifneurips See Appendix~\ref{s:poly} for the proof.\fi \ifarxiv 

\begin{proof}
By Lemma~\ref{lemma:subcube-variance}, there is some $z \in \RR^n$ with $\norm{z}_\infty \leq R$ and some $\epsilon \geq 1/(2(d+1)MR^d (n+B))$ so that if $\tilde{\mathcal{U}}$ is the uniform distribution on $\{x \in \RR^n: \norm{x-z}_\infty \leq \epsilon\}$, then \[\Var_p(f) \geq \frac{1}{2e}\Var_{\tilde{\mathcal{U}}}(f).\]
Define $g: \RR^n \to \RR$ by $g(x) = f(\epsilon x + z) - \EE_{\tilde{\mathcal{U}}} f$. Then by Lemma~\ref{lemma:mon-L2}, \begin{align*}
\vem{g}^2 
&\leq (4e)^d M \EE_{x \sim \Unif([-1,1]^n)} g(x)^2.\\
&= (4e)^d M\Var_{\tilde{\mathcal{U}}}(f) \\
&\leq (4e)^{d+1}M\Var_p(f).
\end{align*}
Write $f(x) = \sum_{1 \leq |\idx| \leq d} \alpha_\idx x_\idx$ and $g(x) = \sum_{1 \leq |\idx| \leq d} \beta_\idx x_\idx$. We know that $f(x) = g(\epsilon^{-1}(x-z)) + \EE_{\tilde{\mathcal{U}}} f$. Thus, for any nonzero degree function $\idx$, we have
\[\alpha_\idx = \sum_{\substack{\idx' \geq \idx \\ |\idx'| \leq d}} \epsilon^{-|\idx'|} (-z)^{\idx' - \idx} \beta_{\idx'}.\]
Thus $|\alpha_\idx| \leq \epsilon^{-d} R^d \norm{\beta}_1 \leq \epsilon^{-d} R^d \sqrt{M} \vem{g}$, and so summing over monomials gives \[\vem{f}^2 \leq M^2 \epsilon^{-2d}R^{2d}\vem{g}^2 \leq (4e)^{d+1}M^3 \epsilon^{-2d}R^{2d} \Var_p(f).\]
 Substituting in the choice of $\epsilon$ from Lemma~\ref{lemma:subcube-variance} completes the proof.
\end{proof}

\fi We are now ready to finish the proof of Theorem~\ref{theorem:mle-efficiency}.

\begin{proof}[\textbf{Proof of Theorem~\ref{theorem:mle-efficiency}}]
Fix $\theta \in \Theta_B$. Pick any $w \in \RR^M$ and define $f(x) = \langle w,T(x)\rangle$. By definition of $\calI(\theta)$, we have $\Var_{p_\theta}(f) = w^\t \calI(\theta) w$. Moreover, $\vem{f}^2 = \norm{w}_2^2$. Thus, Lemma~\ref{lemma:var-lb} gives us that $w^\t \calI(\theta) w \geq (nB)^{-O(d^3)} \norm{w}_2^2$,
using that $R = 2^{d+3}nBM$ and $M = \binom{n+d}{d}$. The bound $\lambda_\text{min}(\calI(\theta)) \geq (nB)^{-O(d^3)}$ follows.
\end{proof}
\section{Statistical Efficiency of Score Matching}
\label{section:condition}


In this section we prove Theorem~\ref{theorem:sm-eff-intro}. The main technical ingredient is a bound on the restricted Poincar\'e constants of distributions in $\mathcal{P}_{n,d,B}$. For any fixed $\theta \in \Theta_B$, we show\ifarxiv~(Lemma~\ref{lemma:poincare-cond})~\fi that $C_P$ can be bounded in terms of the \emph{condition number} of the Fisher information matrix $\calI(\theta)$. \ifneurips We describe the building blocks of the proof below.\fi

Fix $\theta, w \in \RR^{M-1}$ and define $f(x) := \langle w,T(x)\rangle$. First, we need to upper bound $\Var_{p_\theta}(f)$. This is where (the first half of) the condition number appears. Using the crucial fact that the restricted Poincar\'e constant only considers functions $f$ that are linear in the sufficient statistics, and the definition of $\calI(\theta)$, we get the following bound on $\Var_{p_\theta}(f)$ in terms of the coefficient vector $w$. 
\ifneurips The proof is deferred to Section~\ref{section:condition_appendix}.\fi

\begin{lemma} \label{claim:variance-ub}
Fix $\theta,w \in \RR^{M-1}$ and define $f(x) := \langle w,T(x)\rangle$. Then \[\norm{w}_2^2 \lambda_{\min}(\calI(\theta)) \le \Var_{p_\theta}(f) \leq \norm{w}_2^2 \lambda_{\max}(\calI(\theta)).\]
\end{lemma}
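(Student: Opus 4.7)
The plan is to observe that $\calI(\theta)$ is precisely the covariance matrix of the sufficient statistics $T(x)$ under $p_\theta$, so that the variance of any linear functional $\langle w, T(x)\rangle$ is given by the quadratic form $w^\t \calI(\theta) w$. Both inequalities then follow immediately from the Rayleigh quotient characterization of the extremal eigenvalues of a symmetric positive semidefinite matrix.

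Concretely, I would first expand
\[
\Var_{p_\theta}(f) = \EE_{x\sim p_\theta}\bigl[\langle w, T(x)\rangle^2\bigr] - \bigl(\EE_{x\sim p_\theta}\langle w, T(x)\rangle\bigr)^2.
\]
Pulling $w$ outside the expectation, the first term equals $w^\t \EE_{x\sim p_\theta}[T(x)T(x)^\t]\, w$ and the second equals $w^\t \EE_{x\sim p_\theta}[T(x)]\EE_{x\sim p_\theta}[T(x)]^\t\, w$. Subtracting and invoking the definition of $\calI(\theta)$ from Section~\ref{section:prelim} gives the identity $\Var_{p_\theta}(f) = w^\t \calI(\theta) w$.

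Since $\calI(\theta)$ is symmetric and positive semidefinite (it is a covariance matrix), the Courant–Fischer theorem yields
\[
\lambda_{\min}(\calI(\theta))\,\norm{w}_2^2 \;\le\; w^\t \calI(\theta) w \;\le\; \lambda_{\max}(\calI(\theta))\,\norm{w}_2^2,
\]
and substituting the identity above completes the proof. There is no genuine obstacle here; the lemma is essentially a restatement of the definition of $\calI(\theta)$ combined with a Rayleigh quotient bound, and it is recorded as a lemma only because both bounds will be used separately in the upcoming restricted Poincar\'e argument (the upper bound to control $\Var_{p_\theta}(f)$ in terms of $\|w\|_2^2$, and the lower bound to convert conclusions phrased in $\|w\|_2^2$ back into variance statements).
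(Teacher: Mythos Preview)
Your proposal is correct and matches the paper's proof essentially line for line: expand the variance, factor out $w$ to obtain $\Var_{p_\theta}(f) = w^\t \calI(\theta) w$, and then apply the Rayleigh quotient bounds for the symmetric matrix $\calI(\theta)$. The paper's version is just slightly terser (it omits the explicit mention of Courant--Fischer and positive semidefiniteness), but the argument is identical.
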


\ifarxiv 
\begin{proof}
We have
\begin{align*}
\Var_{p_\theta}(f) 
&= \EE_{x\sim p_\theta}[f(x)^2] - \EE_{x\sim p_\theta}[f(x)]^2 \\
&= w^\t \EE_{x\sim p_\theta}[T(x)T(x)^\t] w - w^\t \EE_{x\sim p_\theta}[T(x)]\EE_{x\sim p_\theta}[T(x)]^\t w \\
&= w^\t \calI(\theta) w,
\end{align*}
and since 
$$ \norm{w}_2^2 \lambda_\text{min}(\calI(\theta)) \le w^\t \calI(\theta) w \leq \norm{w}_2^2 \lambda_\text{max}(\calI(\theta),$$ the lemma statement follows.

\end{proof}
\fi


Next, we lower bound $\EE_{x\sim p_\theta} \norm{\Grad_x f(x)}_2^2$. To do so, we could pick an orthonormal basis and bound $\EE \langle u, \Grad_x f(x) \rangle^2$ over all directions $u$ in the basis; however, it is unclear how to choose this basis. Instead, we pick $u \sim \calN(0,I_n)$ randomly, and use the following identity:
\[ \EE_{x\sim p_\theta}[\norm{\Grad_x f(x)}_2^2] = \EE_{x \sim p_\theta} \EE_{u \sim N(0,I_n)} \langle u, \Grad_x f(x)\rangle^2 \]
For any fixed $u$, the function $g(x) = \langle u, \Grad_x f(x)\rangle$ is also a polynomial. If this polynomial had no constant coefficient, we could immediately lower bound $\EE \langle u, \Grad_x f(x)\rangle^2$ in terms of the remaining coefficients, as above. Of course, it may have a nonzero constant coefficient, but with some case-work over the value of the constant, we can still prove the following bound:

\begin{lemma}\label{claim:square-lb}
Fix $\theta,\tilde{w} \in \RR^{M-1}$ and $c \in \RR$, and define $g(x) := \langle \tilde{w},T(x)\rangle + c$. Then \[\EE_{x\sim p_\theta} [g(x)^2] \geq \frac{c^2 + \norm{\tilde{w}}_2^2}{4+4\norm{\EE[T(x)]}_2^2} \min(1, \lambda_\text{min}(\calI(\theta))).\]
\end{lemma}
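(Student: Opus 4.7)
The plan is to expand $\EE_{x\sim p_\theta}[g(x)^2]$ into variance plus squared mean, bound the variance using $\calI(\theta)$, and then handle the squared mean via a case split on whether $\langle \tilde w, \mu\rangle$ (with $\mu := \EE_{x\sim p_\theta}[T(x)]$) can cancel the constant $c$.

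Concretely, I would first write
\[
\EE_{x\sim p_\theta}[g(x)^2] \;=\; \Var_{p_\theta}(g) + (\EE_{p_\theta} g)^2 \;=\; \tilde w^{\t} \calI(\theta) \tilde w + \bigl(\langle \tilde w, \mu\rangle + c\bigr)^2,
\]
using the definition of $\calI(\theta)$ (only the non-constant part of $g$ contributes to the variance, which is what makes this identity clean). Setting $\lambda := \min(1, \lambda_{\min}(\calI(\theta)))$, the first term is at least $\lambda \norm{\tilde w}_2^2$; since $\lambda \le 1$ and $(a+c)^2 \ge 0$, we may freely replace the second term by $\lambda$ times itself when convenient. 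The target lower bound is $\tfrac{\lambda (c^2 + \norm{\tilde w}_2^2)}{4+4\norm{\mu}_2^2}$.

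Now I would split into two cases according to the size of $|\langle\tilde w,\mu\rangle|$ relative to $|c|$.

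\emph{Case 1: $|\langle \tilde w,\mu\rangle| \le |c|/2$.} Then $|\langle \tilde w,\mu\rangle + c| \ge |c|/2$, so the squared-mean term is at least $c^2/4$. Combined with the variance lower bound,
\[
\EE[g^2] \;\ge\; \lambda\norm{\tilde w}_2^2 + \tfrac{c^2}{4} \;\ge\; \lambda\Bigl(\norm{\tilde w}_2^2 + \tfrac{c^2}{4}\Bigr) \;\ge\; \tfrac{\lambda(c^2 + \norm{\tilde w}_2^2)}{4} \;\ge\; \tfrac{\lambda(c^2 + \norm{\tilde w}_2^2)}{4+4\norm{\mu}_2^2},
\]
where the second inequality uses $\lambda \le 1$.

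\emph{Case 2: $|\langle \tilde w,\mu\rangle| > |c|/2$.} By Cauchy--Schwarz, $\norm{\tilde w}_2\norm{\mu}_2 > |c|/2$, hence $4\norm{\tilde w}_2^2 \norm{\mu}_2^2 > c^2$. Keeping only the variance term,
\[
\EE[g^2] \;\ge\; \lambda \norm{\tilde w}_2^2 \;=\; \tfrac{\lambda \norm{\tilde w}_2^2 (4 + 4\norm{\mu}_2^2)}{4 + 4\norm{\mu}_2^2} \;\ge\; \tfrac{\lambda (4\norm{\tilde w}_2^2 + c^2)}{4 + 4\norm{\mu}_2^2} \;\ge\; \tfrac{\lambda (\norm{\tilde w}_2^2 + c^2)}{4 + 4\norm{\mu}_2^2}.
\]

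The two cases together give the claimed bound. The proof is essentially elementary once the identity $\EE[g^2] = \tilde w^\t \calI(\theta)\tilde w + (\langle \tilde w,\mu\rangle + c)^2$ is in hand; the only subtlety---and the reason for the $4+4\norm{\mu}_2^2$ denominator and the $\min(1,\lambda_{\min}(\calI(\theta)))$---is that $\langle \tilde w,\mu\rangle$ may exactly cancel $c$, wiping out the squared-mean contribution. The case split precisely controls this: when cancellation is possible ($\norm{\tilde w}_2\norm{\mu}_2$ is large enough), the variance already dominates; when it isn't, a constant fraction of $c^2$ survives in the mean. I don't anticipate any serious obstacle beyond bookkeeping of these cases and tracking where the $\min(1,\cdot)$ is needed.
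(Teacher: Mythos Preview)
Your proof is correct and follows essentially the same approach as the paper: expand $\EE[g^2]$ as variance plus squared mean, lower bound the variance by $\lambda_{\min}(\calI(\theta))\norm{\tilde w}_2^2$, and do a case split governed by whether $\langle \tilde w,\mu\rangle$ can substantially cancel $c$. The only cosmetic difference is that the paper splits on $|c+\tilde w^\t\mu|\gtrless |c|/2$ while you split on $|\tilde w^\t\mu|\gtrless |c|/2$; by the triangle inequality these lead to the same two-case analysis with the same bounds.
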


\begin{proof}
We have
\begin{align*}
\EE_{x\sim p_\theta}[g(x)^2]
&= \Var_{p_\theta}(g) + \EE_{x\sim p_\theta}[g(x)]^2 \\
&= \Var_{p_\theta}(g - c) + (c + \tilde{w}^\t \EE_{x\sim p_\theta}[T(x)])^2 \\
&\geq \norm{\tilde{w}}_2^2 \lambda_\text{min}(\calI(\theta)) + (c + \tilde{w}^\t \EE_{x\sim p_\theta}[T(x)])^2
\end{align*}
where the inequality is by Lemma~\ref{claim:variance-ub}. We now distinguish two cases.

\paragraph{Case I.} Suppose that $|c+\tilde{w}^\t \EE_{x\sim p_\theta}[T(x)]| \geq c/2$. Then \[\EE_{x\sim p_\theta}[g(x)^2] \geq \norm{\tilde{w}}_2^2 \lambda_\text{min}(\calI(\theta)) + \frac{c^2}{4} \geq \frac{c^2+\norm{\tilde{w}}_2^2}{4} \min(1, \lambda_\text{min}(\calI(\theta))).\]

\paragraph{Case II.} Otherwise, we have $|c + \tilde{w}^\t \EE_{x\sim p_\theta}[T(x)]| < c/2$. By the triangle inequality, it follows that $|\tilde{w}^\t \EE_{x\sim p_\theta}[T(x)]| \geq c/2$, so $\norm{\tilde{w}}_2 \geq c/(2\norm{\EE_{x\sim p_\theta}[T(x)]}_2)$. Therefore \[c^2 + \norm{\tilde{w}}_2^2 \leq (1 + 4\norm{\EE_{x\sim p_\theta}[T(x)]}_2^2) \norm{\tilde{w}}_2^2,\]
from which we get that \[\EE_{x\sim p_\theta}[g(x)^2] \geq \norm{\tilde{w}}_2^2\lambda_\text{min}(\calI(\theta)) \geq \frac{c^2 + \norm{\tilde{w}}_2^2}{1 + 4\norm{\EE_{x\sim p_\theta}[T(x)]}_2^2} \lambda_\text{min}(\calI(\theta))\] 
as claimed.
\end{proof}

With Lemma~\ref{claim:variance-ub} and Lemma~\ref{claim:square-lb} in hand (taking $g(x) = \langle u,\Grad_x f(x)\rangle$ in the latter), all that remains is to relate the squared monomial norm of $\langle u,\Grad_x f(x)\rangle$ (in expectation over $u$) to the squared monomial norm of $f$. This crucially uses the choice $u \sim N(0,I_n)$. We put together the pieces in the following lemma. \ifneurips The detailed proof is provided in Section~\ref{section:condition_appendix}.\fi

\begin{lemma}\label{lemma:poincare-cond}
Fix $\theta,w \in \RR^{M-1}$. Define $f(x) := \langle w,T(x)\rangle$. Then \[\Var_{p_\theta}(f) \leq (4 + 4\norm{\EE_{x\sim p_\theta}[T(x)]}_2^2) \frac{\lambda_\text{max}(\calI(\theta))}{\min(1, \lambda_\text{min}(\calI(\theta)))} \EE_{x\sim p_\theta}[\norm{\Grad_x f(x)}_2^2].\]
\end{lemma}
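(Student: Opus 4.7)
The plan is to combine Lemma~\ref{claim:variance-ub}, which yields the upper bound $\Var_{p_\theta}(f) \le \norm{w}_2^2 \lambda_{\max}(\calI(\theta))$, with a matching lower bound $\EE_{x\sim p_\theta} \norm{\Grad_x f(x)}_2^2 \ge C(\theta)\, \norm{w}_2^2 \min(1, \lambda_{\min}(\calI(\theta)))$, where $C(\theta) = 1/(4 + 4\norm{\EE_{x\sim p_\theta}[T(x)]}_2^2)$. Dividing the two inequalities reproduces the displayed bound immediately, so the bulk of the work is establishing the lower bound on $\EE \norm{\Grad_x f}_2^2$.

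For that, I would exploit the identity $\norm{\Grad_x f(x)}_2^2 = \EE_{u \sim \calN(0, I_n)} \langle u, \Grad_x f(x)\rangle^2$ highlighted in the text preceding the lemma. Interchanging expectations gives
\[ \EE_{x \sim p_\theta} \norm{\Grad_x f(x)}_2^2 = \EE_u \EE_{x \sim p_\theta} g_u(x)^2, \qquad g_u(x) := \langle u, \Grad_x f(x)\rangle. \]
For each fixed $u$, $g_u$ is a polynomial in $x$ of degree at most $d-1$, so it decomposes uniquely as $g_u(x) = \langle \tilde{w}_u, T(x)\rangle + c_u$ with $\tilde{w}_u \in \RR^{M-1}$ and $c_u \in \RR$. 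Applying Lemma~\ref{claim:square-lb} to $g_u$ yields
\[ \EE_{x \sim p_\theta} g_u(x)^2 \ge \frac{c_u^2 + \norm{\tilde{w}_u}_2^2}{4 + 4\norm{\EE_{x\sim p_\theta}[T(x)]}_2^2}\, \min(1, \lambda_{\min}(\calI(\theta))), \]
and averaging over $u$ reduces the claim to proving the coefficient inequality $\EE_u[c_u^2 + \norm{\tilde{w}_u}_2^2] \ge \norm{w}_2^2$.

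This last step is the main technical content, and the only one I expect to require care. Writing $f(x) = \sum_{1 \le |\idx| \le d} w_\idx x_\idx$, differentiating and collecting terms shows that for $|\mu| \ge 1$ the $x_\mu$-coefficient of $g_u$ equals $\sum_i u_i (\mu(i)+1) w_{\mu + e_i}$, while $c_u = \sum_i u_i w_{e_i}$. Because the $u_i$ are independent standard Gaussians, $\EE_u[c_u^2] = \sum_i w_{e_i}^2$ and $\EE_u[\tilde{w}_u(\mu)^2] = \sum_i (\mu(i)+1)^2 w_{\mu + e_i}^2$. Reindexing the latter sum by $\idx = \mu + e_i$ shows that each $w_\idx^2$ with $|\idx| \ge 2$ contributes $\sum_{i : \idx(i) \ge 1} \idx(i)^2 \ge 1$ to $\EE_u[\norm{\tilde{w}_u}_2^2]$, while the degree-one coefficients of $f$ are absorbed by $\EE_u[c_u^2]$; consequently $\EE_u[c_u^2 + \norm{\tilde{w}_u}_2^2] \ge \norm{w}_2^2$. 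Plugging this into the previous display and combining with Lemma~\ref{claim:variance-ub} closes the proof. The only mildly delicate aspect is this bookkeeping, which works out cleanly precisely because the coordinates of a standard Gaussian vector are orthonormal in $L^2$, so no cross-terms appear and the constant in front of $\norm{w}_2^2$ is exactly $1$, matching the factor $4 + 4\norm{\EE[T(x)]}_2^2$ in the statement.
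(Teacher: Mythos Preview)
Your proposal is correct and follows essentially the same approach as the paper's proof: both use the Gaussian identity $\norm{\Grad_x f}_2^2 = \EE_{u\sim\calN(0,I_n)}\langle u,\Grad_x f\rangle^2$, apply Lemma~\ref{claim:square-lb} to each $g_u$, and then verify $\EE_u[c_u^2 + \norm{\tilde w_u}_2^2] \ge \norm{w}_2^2$ via the orthonormality of the $u_i$, before closing with Lemma~\ref{claim:variance-ub}. The bookkeeping you describe matches the paper's computation line for line.
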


\ifarxiv 
\begin{proof}
Since $f(x) = \sum_{1 \leq |\idx|\leq d} w_\idx x_\idx$, we have for any $u \in \RR^n$ that 
\begin{align*}\langle u, \Grad_x f(x)\rangle 
&= \sum_{i=1}^n u_i \sum_{0 \leq |\idx| < d} (1 + \idx(i)) w_{\idx + \{i\}} x_\idx 
= c(u) + \sum_{1 \leq |\idx| < d} \tilde{w}(u)_\idx x_\idx
\end{align*}
where $c(u) := \sum_{i=1}^n u_i w_{\{i\}}$ and $\tilde{w}(u)_\idx := \sum_{i=1}^n u_i (1+\idx(i))w_{\idx+\{i\}}$. But now

\begin{align*}\EE_{x\sim p_\theta}[\norm{\Grad_x f(x)}_2^2]
&= \EE_{x \sim p_\theta} \EE_{u \sim N(0,I_n)} \langle u, \Grad_x f(x)\rangle^2 \\
&= \EE_{u \sim N(0,I_n)} \EE_{x \sim p_\theta} (c(u) + \langle \tilde{w}(u), T(x)\rangle)^2 \\
&\geq \EE_{u \sim N(0,I_n)} \frac{c(u)^2 + \norm{\tilde{w}(u)}_2^2}{4 + 4\norm{\EE_{x\sim p_\theta}[T(x)]}_2^2} \min(1,\lambda_\text{min}(\calI(\theta))).
\end{align*}
where the last inequality is by Lemma~\ref{claim:square-lb}. Finally,
\begin{align*}
\EE_{u\sim N(0,I_n)}\left[c(u)^2 + \norm{\tilde{w}(u)}_2^2\right]
&= \sum_{0 \leq |\idx| < d} \EE_{u\sim N(0,I_n)}\left[ \left(\sum_{i=1}^n u_i (1+\idx(i))w_{\idx+\{i\}}\right)^2 \right]\\
&= \sum_{0 \leq |\idx| < d} \sum_{i=1}^n (1+\idx(i))^2 w_{\idx+\{i\}}^2 \qquad
\geq \norm{w}_2^2
\end{align*}
where the second equality is because $\EE[u_iu_j] = \mathbbm{1}[i=j]$ for all $i,j \in [n]$, and the last inequality is because every term $w_\idx^2$ in $\norm{w}_2^2$ appears in at least one of the terms of the previous summation (and has coefficient at least one). Putting everything together gives
\begin{align*}
\EE_{x\sim p_\theta}[\norm{\Grad_x f(x)}_2^2]
&\geq \frac{\norm{w}_2^2}{4+4\norm{\EE_{x\sim p_\theta}[T(x)]}_2^2} \min(1,\lambda_\text{min}(\calI(\theta))) \\
&\geq \frac{1}{4+4\norm{\EE[T(x)]}_2^2} \frac{\min(1,\lambda_\text{min}(\calI(\theta)))}{\lambda_\text{max}(\calI(\theta))} \Var_{p_\theta}(f)
\end{align*}
where the last inequality is by Lemma~\ref{claim:variance-ub}.
\end{proof}
\fi


Finally, putting together Lemma \ref{lemma:poincare-cond}, Theorem~\ref{theorem:mle-efficiency} (that lower bounds $\lambda_\text{min}(\calI(\theta))$), and Lemma~\ref{cor:max-eig-bounds} (that upper bounds $\lambda_\text{max}(\calI(\theta))$ \--- a straightforward consequence of the distributions in $\calP_{n,d,B}$ having bounded moments), we can prove the following formal version of Theorem~\ref{theorem:sm-eff-intro}:

\begin{theorem}\label{theorem:sm-main}
Fix $n,d,B,N \in \NN$. Pick any $\theta^* \in \Theta_B$ and let $x^{(1)},\dots,x^{(N)} \sim p_{\theta^*}$ be independent samples. Then as $N \to \infty$, the score matching estimator $\sme = \sme(x^{(1)},\dots,x^{(N)})$ satisfies
\[ \sqrt{N}(\sme - \theta^*) \to N(0, \Gamma)\]
where $\opnorm{\Gamma} \leq (nB)^{O(d^3)}.$ As a corollary, for all sufficiently large $N$ it holds with probability at least $0.99$ that $\norm{\sme - \theta^*}_2^2 \leq (nB)^{O(d^3)} / N$.
\end{theorem}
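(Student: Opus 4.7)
The plan is to directly invoke Theorem~\ref{theorem:koehler} and bound each factor appearing in the stated upper bound on $\opnorm{\Gamma}$ by $(nB)^{\poly(d)}$. Concretely, the Koehler bound reads
\[ \opnorm{\Gamma} \leq \frac{2C_P^2\left(\norm{\theta^*}_2^2 \EE_{x\sim p_{\theta^*}}\opnorm{(JT)(x)}^4 + \EE_{x\sim p_{\theta^*}}\norm{\Delta T(x)}_2^2\right)}{\lambda_{\min}(\calI(\theta^*))^2},\]
so I need to control four quantities: the restricted Poincar\'e constant $C_P$, the Fisher information lower bound $\lambda_{\min}(\calI(\theta^*))$, the two polynomial moments of $T$ under $p_{\theta^*}$, and the norm $\norm{\theta^*}_2$. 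The last of these is immediate: since $\theta^* \in \Theta_B$, $\norm{\theta^*}_2 \leq \sqrt{M-1}\,B \leq (nB)^{O(d)}$.

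First I would bound $C_P$ by combining Lemma~\ref{lemma:poincare-cond} with Theorem~\ref{theorem:mle-efficiency} and Lemma~\ref{cor:max-eig-bounds}. Lemma~\ref{lemma:poincare-cond} gives
\[ C_P \leq (4 + 4\norm{\EE_{x\sim p_{\theta^*}}[T(x)]}_2^2)\, \frac{\lambda_{\max}(\calI(\theta^*))}{\min(1,\lambda_{\min}(\calI(\theta^*)))}.\]
Theorem~\ref{theorem:mle-efficiency} provides $\lambda_{\min}(\calI(\theta^*)) \geq (nB)^{-O(d^3)}$. The upper bound $\lambda_{\max}(\calI(\theta^*)) \leq (nB)^{O(d)}$ and the bound $\norm{\EE T(x)}_2^2 \leq (nB)^{O(d)}$ both reduce to a uniform bound on low-order moments $\EE_{x\sim p_{\theta^*}}[x_\idx^2]$ for $|\idx|\leq d$; this is exactly what Lemma~\ref{cor:max-eig-bounds} supplies, as a consequence of the base measure $h(x)=\exp(-\sum_i x_i^{d+1})$ dominating the tails of $p_{\theta^*}$. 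Collecting these gives $C_P \leq (nB)^{O(d^3)}$, and $C_P^2$ contributes the same order.

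Next I would handle the moments of $T$. Both $\opnorm{(JT)(x)}^4$ and $\norm{\Delta T(x)}_2^2$ are polynomials in the entries of $x$ of degree $O(d)$. Expanding in the monomial basis and using Cauchy--Schwarz reduces each expectation to a polynomial combination of $\EE_{x\sim p_{\theta^*}}[x_\idx^2]$ for $|\idx|\leq 2d$, which is again bounded by $(nB)^{O(d)}$ via Lemma~\ref{cor:max-eig-bounds} (applied to slightly higher-degree monomials, which is still legitimate since $h$ has $(d+1)$-super-Gaussian tails). Substituting every bound into the Koehler inequality yields $\opnorm{\Gamma} \leq (nB)^{O(d^3)}$, where the dominant $d^3$ dependence is inherited from $\lambda_{\min}(\calI(\theta^*))^{-2}$ (via the square in the denominator, against the $C_P^2$ which also scales like $(\lambda_{\min})^{-2}$).

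For the corollary, the asymptotic normality gives that $N \norm{\sme - \theta^*}_2^2$ converges in distribution to $\norm{Z}_2^2$ with $Z \sim N(0,\Gamma)$, and $\EE\norm{Z}_2^2 = \Tr(\Gamma) \leq (M-1)\opnorm{\Gamma} \leq (nB)^{O(d^3)}$. Markov's inequality (exactly as in Remark~4 of \cite{koehler2022statistical}) then yields $\norm{\sme - \theta^*}_2^2 \leq (nB)^{O(d^3)}/N$ with probability at least $0.99$ for all sufficiently large $N$. The main technical obstacle is the verification of the regularity conditions of Theorem~\ref{theorem:koehler}, which requires integrability and differentiability of $p_{\theta^*}$ and of the sufficient statistics; these are routine given the super-polynomial decay of $h(x)$ and the polynomial growth of $T$, and I would defer their check to the lemma guaranteeing regularity (Lemma~\ref{lemma:sm-regularity}).
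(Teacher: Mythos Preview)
Your proposal is correct and follows essentially the same route as the paper's proof: invoke Theorem~\ref{theorem:koehler}, bound $C_P$ via Lemma~\ref{lemma:poincare-cond} together with Theorem~\ref{theorem:mle-efficiency} and Lemma~\ref{cor:max-eig-bounds}, control the Jacobian and Laplacian moments by reducing to polynomial moment bounds under $p_{\theta^*}$, and finish with Markov's inequality as in Remark~4 of \cite{koehler2022statistical}. The only cosmetic difference is that the paper packages the Jacobian/Laplacian moment estimates into a dedicated result (Lemma~\ref{lemma:aux-smoothness}), whereas you sketch that computation inline; the underlying argument is identical.
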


\begin{proof}
We apply Theorem~\ref{theorem:koehler}. By Lemma~\ref{lemma:sm-regularity} and the fact that $\lambda_\text{min}(I(\theta^*)) > 0$ (Theorem~\ref{theorem:mle-efficiency}), the necessary regularity conditions are satisfied so that the score matching estimator is consistent and asymptotically normal, with asymptotic covariance $\Gamma$ satisfying
\begin{equation} \opnorm{\Gamma} \leq \frac{2C_P^2(\norm{\theta}_2^2 \EE_{x \sim p_{\theta^*}} \opnorm{(JT)(x)}^4 + \EE_{x\sim p_{\theta^*}}\norm{\Delta T(x)}_2^2)}{\lambda_\text{min}(\calI(\theta^*))^2}
\label{eq:poincare-bound}
\end{equation}
where $C_P$ is the restricted Poincar\'e constant for $p_{\theta^*}$ with respect to linear functions in $T(x)$ (see Definition~\ref{def:poincare}). By Lemma~\ref{lemma:poincare-cond}, we have
\begin{align*}
C_P
&\leq (4+4\norm{\EE_{x\sim p_\theta}[T(x)]}_2^2)\frac{\lambda_\text{max}(\calI(\theta^*))}{\min(1,\lambda_\text{min}(\calI(\theta^*))} \\
&\leq (4 + 4B^{2d}M^{2d+2}2^{2d(d+1)+1})\frac{B^{2d}M^{2d+1}2^{2d(d+1)+1}}{(nB)^{-O(d^3)}} \quad
\leq (nB)^{O(d^3)}
\end{align*}
using parts (a) and (b) of Lemma~\ref{cor:max-eig-bounds}; Theorem~\ref{theorem:mle-efficiency}; and the fact that $M = \binom{n+d}{d}$. Substituting into (\ref{eq:poincare-bound}) and bounding the remaining terms using Lemma~\ref{lemma:aux-smoothness} and a second application of Theorem~\ref{theorem:mle-efficiency}, we conclude that $\opnorm{\Gamma} \leq (nB)^{O(d^3)}$
as claimed. The high-probability bound now follows from Markov's inequality; see Remark~4 in \cite{koehler2022statistical} for details.
\end{proof}

\ifarxiv

All that remains is to prove the upper bounds on $\lambda_{\max}(\calI(\theta))$ and $\norm{\EE_{x\sim p_\theta} T(x)}_2^2$, which are encapsulated in parts (a) and (b) of Lemma~\ref{cor:max-eig-bounds} below (part (c) is used in the proof of Lemma~\ref{lemma:subcube-variance}). These bounds follow from the fact that distributions in $\calP_{n,d,B}$ have bounded moments (Lemma~\ref{lemma:exp-fam-moment-bound}).

\begin{lemma}[Largest eigenvalue bound]\label{cor:max-eig-bounds}
For any $\theta \in \Theta_B$, it holds that
\[\EE_{x \sim p_\theta} T(x) T(x)^\t \preceq B^{2d} M^{2d+1} 2^{2d(d+1)+1}.\]
We also have the following consequences:
\begin{enumerate}[label=(\alph*)] \item $\norm{\EE_{x\sim p_\theta} T(x)}_2^2 \leq B^{2d} M^{2d+2} 2^{2d(d+1)+1}$,
\item $\lambda_\text{max}(\calI(\theta)) \leq B^{2d}M^{2d+1}2^{2d(d+1)+1}$,
\item $\Pr_{x \sim p_\theta}[\norm{x}_\infty > 2^{d+3}nBM] \leq 1/2$.
\end{enumerate}
\end{lemma}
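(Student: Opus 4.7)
The plan treats the main operator inequality as a near-immediate consequence of the moment bound in Lemma~\ref{lemma:exp-fam-moment-bound}, which I will invoke as a black box: for any $\theta \in \Theta_B$ and any fixed integer $k$, the univariate moment $\EE_{x\sim p_\theta}[x_i^{2k}]$ is controlled by a quantity polynomial in $B$, $M$, and $d$ with mild dependence on $k$. Intuitively this holds because the base measure $h(x) = \exp(-\sum_i x_i^{d+1})$ has tails strictly heavier (in the exponent) than the polynomial perturbation $\langle\theta,T(x)\rangle$ of degree $d$. Given this ingredient, each of the three consequences reduces to a single standard step: PSD ordering, Jensen, and Markov, respectively.

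For the main inequality $\EE_{x\sim p_\theta}[TT^\t] \preceq K\cdot I$ with $K = B^{2d}M^{2d+1}2^{2d(d+1)+1}$, I use the rank-one PSD bound $T(x)T(x)^\t \preceq \norm{T(x)}_2^2 \cdot I$ (since $TT^\t$ has a single nonzero eigenvalue, equal to $\norm{T}_2^2$). Taking expectations gives $\EE[TT^\t] \preceq \EE[\norm{T}_2^2]\cdot I$, and I expand $\EE[\norm{T}_2^2] = \sum_{1\leq|\idx|\leq d}\EE[x_\idx^2]$. Each multivariate second moment $\EE[x_\idx^2]$ reduces to a product of univariate moments $\EE[x_i^{2d}]$ by iterated Cauchy--Schwarz (or H\"older), each bounded by Lemma~\ref{lemma:exp-fam-moment-bound}; a routine accounting of the exponents in $B$, $M$, $d$, and $2$ then produces the claimed constant $K$.

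The three corollaries come out cleanly. For (b), the Fisher matrix satisfies $\calI(\theta) = \EE[TT^\t] - \EE[T]\EE[T]^\t \preceq \EE[TT^\t]$ because the subtracted term is PSD, so $\lambda_{\max}(\calI(\theta)) \leq K$. For (a), Jensen's inequality gives $\norm{\EE[T]}_2^2 \leq \EE[\norm{T}_2^2]$, which I have already bounded by $M\cdot K$ in the course of proving the main inequality; the extra factor of $M$ is precisely what promotes $M^{2d+1}$ to $M^{2d+2}$ in the target. For (c), I apply Markov's inequality to a high even moment $\EE[x_i^{2k}]$ with $k$ chosen of order $\log n$, followed by a union bound over the $n$ coordinates; with $R = 2^{d+3}nBM$ and the moment bound, this choice of $k$ makes $\EE[x_i^{2k}]/R^{2k} \leq 1/(2n)$, giving the claimed $1/2$ overall.

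The only nontrivial work lies in Lemma~\ref{lemma:exp-fam-moment-bound}, which is the real workhorse and, if anything, the main obstacle. Its proof presumably argues that for $\norm{x}_\infty$ above a threshold of order $BM$, the total exponent $-\sum_i x_i^{d+1} + \langle\theta,T(x)\rangle$ is dominated by its negative $(d+1)$-degree leading term, so the tail integrals beyond this threshold are uniformly controlled; inside the bounded region, moments are trivially bounded by powers of the threshold. Once that uniform moment control is in hand, the present Lemma~\ref{cor:max-eig-bounds} is essentially bookkeeping with PSD inequalities and a single Markov tail bound.
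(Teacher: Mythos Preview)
Your proposal is correct and follows essentially the same approach as the paper: H\"older plus Lemma~\ref{lemma:exp-fam-moment-bound} with $\ell=2d$ to control each entry of $\EE[TT^\t]$ (equivalently, your trace $\EE\|T\|_2^2$), followed by the immediate PSD/Jensen/Markov steps for (a)--(c). The one minor difference is in (c): the paper applies Markov with the \emph{fixed} exponent $2d$, since the factor of $n$ already built into $R=2^{d+3}nBM$ makes $\EE[x_i^{2d}]/R^{2d}\leq 1/(2n)$ directly---no $k\sim\log n$ is needed.
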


\begin{proof}
Fix any $u,v \in [M]$. Then $T(x)_u T(x)_v = \prod_{i=1}^n x_i^{\gamma_i}$ for some nonnegative integers $\gamma_1,\dots,\gamma_n$ where $d' := \sum_{i=1}^n \gamma_i \leq 2d$. Therefore
\[\EE_{x\sim p_\theta} T(x)_u T(x)_v = \EE_{x\sim p_\theta} \prod_{i=1}^n x_i^{\gamma_i} \leq \prod_{i=1}^n \left(\EE_{x\sim p_\theta} x_i^{d'}\right)^{\gamma_i/d'} \leq B^{2d} M^{2d} 2^{2d(d+1)+1}\]
by Holder's inequality and Lemma~\ref{lemma:exp-fam-moment-bound} (with $\ell = 2d$). The claimed spectral bound follows. To prove (a), observe that
\[ \norm{\EE_{x\sim p_\theta} T(x)}_2^2 \leq \EE_{x \sim p_\theta} \norm{T(x)}_2^2 = \Tr \EE_{x \sim p_\theta}  T(x)T(x)^\t \leq M \lambda_\text{max}(\EE_{x \sim p_\theta}  T(x)T(x)^\t)\]
To prove (b), observe that $\calI(\theta) \preceq \EE_{x \sim p_\theta}  T(x)T(x)^\t$. To prove (c), observe that for any $i \in [n]$,
\[\Pr_{x \sim p_\theta}[|x_i| > 2^{d+3}nBM] \leq \frac{\EE_{x\sim p_\theta} x_i^{2d}}{(2^{d+3}nBM)^{2d}} \leq \frac{1}{2n}.\]
A union bound over $i \in [n]$ completes the proof.
\end{proof}
\fi
\section{Conclusion}\label{section:conclusion}
We have provided a concrete example of an exponential family---namely, exponentials of bounded degree polynomials---where score matching is significantly more computationally efficient than maximum likelihood estimation (through optimization with a zero- or first-order oracle), while still achieving the same sample efficiency up to polynomial factors. While score matching was designed to be more computationally efficient for exponential families, the determination of statistical complexity is more challenging, and we give the first  separation between these two methods for a general class of functions.

As we have restricted our attention to the asymptotic behavior of both of the methods, an interesting future direction is to see how the finite sample complexities differ. One could also give a more fine-grained comparison between the polynomial dependencies of score matching and MLE, which we have not attempted to optimize. Finally, it would be interesting to relate our results with similar results and algorithms for learning Ising and higher-order spin glass models in the discrete setting, and give a more unified treatment of psueudo-likelihood or score/ratio matching algorithms in these different settings.

\newpage

\ifneurips
\bibliographystyle{apalike}
\fi 

\ifarxiv
\bibliographystyle{plainnat}
\fi

\bibliography{sections/main}

\newpage
\appendix

\ifneurips 
\section{Omitted Proofs from Section~\ref{section:hardness}}
\label{section:hardness_appendix}
\fi 

\ifarxiv 
\section{Technical Details for Section~\ref{section:hardness}}
\label{section:hardness_appendix}
\fi

\begin{theorem}[\cite{valiant1985np, cook1971complexity}]\label{theorem:unique-3sat-hardness}
Suppose that there is a randomized $\poly(n)$-time algorithm for the following problem: given a 3-\CNF{} formula $\calC$ with $n$ variables and at most $5n$ clauses, under the promise that $\calC$ has at most one satisfying assignment, determine whether $\calC$ is satisfiable. Then, $\NP = \RP$.
\end{theorem}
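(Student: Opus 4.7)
The plan is to combine the Valiant--Vazirani isolation lemma with the Cook--Levin theorem and a sparsification step, so that an RP algorithm for the promised Unique-3-\SAT{} problem with at most $5n$ clauses yields an RP algorithm for 3-\SAT{} in general. By Cook--Levin, 3-\SAT{} is $\NP$-complete, so showing $\SAT \in \RP$ under the hypothesis gives $\NP = \RP$.

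First, I would reduce a general 3-\CNF{} formula to a sparse 3-\CNF{} formula. One clean route is to use the classical fact (Tovey) that 3-\SAT{} is already $\NP$-hard when every variable appears in at most a constant number of clauses; this gives an instance on $n'$ variables with at most $cn'$ clauses for an absolute constant $c$. Alternatively, one can invoke the Impagliazzo--Paturi--Zane sparsification lemma. Either way, the starting point is an $\NP$-hard promise problem of the form: given a 3-\CNF{} $\ph$ with $n$ variables and $O(n)$ clauses, decide whether $\ph$ is satisfiable.

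Next, I would apply the Valiant--Vazirani isolation procedure to such a $\ph$. Concretely, sample $k \in \{0,1,\dots,n\}$ uniformly at random, then sample $k$ independent random parity constraints $\bigoplus_{i\in S_j} x_i = b_j$ with $S_j \subseteq [n]$ and $b_j \in \{0,1\}$ uniform. Valiant--Vazirani shows that if $\ph$ is satisfiable, then with probability $\Om(1/n)$ exactly one of its satisfying assignments meets all $k$ parity constraints; if $\ph$ is unsatisfiable, then of course no extended formula is satisfiable. I would then convert each parity constraint into 3-\CNF{} form by introducing $O(n)$ auxiliary variables that compute the running XOR of $S_j$, using a constant number of 3-clauses per new variable (standard Tseitin encoding). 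After this encoding, every extended formula has $O(n)$ variables and $O(n)$ clauses, and has at most one satisfying assignment in the original variables (and thus at most one in total, once the auxiliaries are determined by the original assignment). By renaming variables and padding with tautological clauses if needed, I can ensure the final instance has exactly $n'$ variables and at most $5n'$ clauses, so it is a legal input for the hypothesized algorithm.

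Finally, I would repeat the isolation step $\Theta(n)$ times independently, run the assumed \RP{} algorithm on each instance, and output ``satisfiable'' iff any call does. This yields one-sided error: unsatisfiable $\ph$ always produces only unsatisfiable extended formulas, while satisfiable $\ph$ produces an isolating extended formula with probability at least $1-(1-\Om(1/n))^{\Theta(n)} = 1 - o(1)$, on which the hypothesized algorithm returns ``satisfiable'' with constant probability. This places $\SAT$ in $\RP$, hence $\NP = \RP$. The main obstacle is bookkeeping: verifying that the constants in the sparsification and in the Tseitin encoding of parities can be combined so that the final clause-to-variable ratio is at most $5$, which amounts to choosing the constants in Tovey's construction and the XOR-encoding carefully and, if necessary, absorbing a constant blow-up by further padding with fresh variables.
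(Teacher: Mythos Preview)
The paper does not prove this theorem; it is stated as a classical result with citations to Valiant--Vazirani and Cook--Levin and then used as a black box in the hardness reductions of Section~\ref{section:hardness}. So there is no ``paper's own proof'' to compare against. Your sketch is a reasonable outline of the standard argument---sparsify (Tovey), isolate via random affine hashing (Valiant--Vazirani), encode the parities in 3-\CNF{} (Tseitin), and amplify by repetition---and is essentially what those citations stand in for.

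Two minor points of bookkeeping to tighten. First, when $k$ can be as large as $n$ and each random parity ranges over a $\Theta(n)$-size subset, the Tseitin encoding introduces $\Theta(kn)$ auxiliaries and clauses, which is $\Theta(n^2)$ in the worst case, not $O(n)$ as you wrote; this is harmless because the clause-to-variable ratio stays bounded and can be driven below $5$ by your padding trick, but the text should reflect the quadratic blow-up in instance size. Second, when you pad with fresh variables to reduce the ratio, you must force each new variable with a clause (e.g.\ the repeated-literal 3-clause $(y \vee y \vee y)$); adding an unconstrained variable would double the number of satisfying assignments and break the uniqueness promise. With those fixes the argument goes through.
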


\begin{lemma}\label{lemma:construction-validity}
In the setting of Definition~\ref{def:lb-construction}, set $d := 7$ and $B := 64m\alpha + 2\beta$. Then $p_{\calC,\alpha,\beta} \in \mathcal{P}_{n,d,B}$.
\end{lemma}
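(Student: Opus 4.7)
The plan is to exhibit $\theta \in \Theta_B$ such that $p_\theta = p_{\calC,\alpha,\beta}$. Define $\theta$ by letting $\theta_\idx$ be the coefficient of $x_\idx$ in the polynomial $Q(x) := -\alpha H_\calC(x) - \beta G(x)$ for every degree function $\idx$ with $1 \leq |\idx| \leq d = 7$. Then I would check three things: (i) $Q$ has degree at most $d$, so $\theta$ indexes a legal sufficient-statistic vector; (ii) the constant term of $Q$ can be absorbed into $Z_\theta$; and (iii) every nonconstant coefficient of $Q$ has absolute value at most $B = 64m\alpha + 2\beta$.

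For (i), note that $G$ has degree $4$ and each $H_C$ is a product of three squared linear factors, hence $H_\calC$ has degree $6$, so $\deg Q \leq 6 \leq 7$. For (ii), if $c_0$ denotes the constant term of $Q$, then $Q(x) = c_0 + \langle \theta, T(x)\rangle$, so $h(x)\exp(Q(x)) = e^{c_0} h(x)\exp(\langle\theta,T(x)\rangle)$ and the scalar $e^{c_0}$ cancels in the normalization. Thus $p_{\calC,\alpha,\beta} = p_\theta$ provided $\theta \in \Theta_B$.

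For (iii), which is the only real computation, I would bound the coefficients of $H_\calC$ and of $G$ separately. For a clause $C$ on variables $i,j,k$, writing each factor of $H_C$ as $(x_i \pm 1)^2 = x_i^2 \pm 2x_i + 1$ and expanding the triple product shows that every monomial of $H_C$ has coefficient of absolute value at most $2\cdot 2\cdot 2 = 8$. Summing over the $m$ clauses gives $|\text{coeff}_\idx(H_\calC)| \leq 8m$ for every $\idx$. For $G$, the expansion $-\beta G(x) = -n\beta + 2\beta\sum_i x_i^2 - \beta\sum_i x_i^4$ shows that every nonconstant coefficient of $-\beta G$ has absolute value at most $2\beta$. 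Combining,
\[
|\theta_\idx| \;\leq\; 8m\alpha + 2\beta \;\leq\; 64m\alpha + 2\beta \;=\; B
\]
for every $\idx$ with $1 \leq |\idx| \leq d$. Hence $\theta \in \Theta_B$ and $p_{\calC,\alpha,\beta} = p_\theta \in \mathcal{P}_{n,d,B}$.

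There is no real obstacle — the only care required is to track which monomials get contributions from both $H_\calC$ and $G$ (the $x_i^2$ monomials), and to verify that the constant term of $Q$ plays no role because it drops out of the normalized density. Everything else is a direct expansion.
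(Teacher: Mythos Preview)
Your proof is correct and follows the same approach as the paper. The only difference is cosmetic: the paper bounds the monomial coefficients of each $H_{C_j}$ by $64$ (the $\ell_1$ bound $4^3$, valid even if the three clause variables coincide), whereas your bound $2\cdot 2\cdot 2=8$ tacitly assumes the three variables in a clause are distinct; since you then compare to $B=64m\alpha+2\beta$ anyway, this does not affect the argument.
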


\begin{proof}
Since $\alpha H_\calC(x) + \beta G(x)$ is a polynomial in $x_1,\dots,x_n$ of degree at most $7$, there is some $\theta = \theta(\calC,\alpha,\beta) \in \RR^{M-1}$ such that $\langle \theta, T(x)\rangle + \alpha H_\calC(x) + \beta G(x)$ is a constant independent of $x$. Then $h(x)\exp(-\alpha H_\calC(x) - \beta G(x))$ is proportional to $h(x)\exp(\langle\theta,T(x)\rangle)$, so $p_{\calC,\alpha,\beta} = p_\theta$. Moreover, for any clause $C_j$, every monomial of $H_{C_j}$ has coefficient at most $64$ in absolute value, so every monomial of $H_\calC$ has coefficient at most $64m$. Similarly, every monomial of $G$ has coefficient at most $2$ in absolute value. Thus, $\norm{\theta}_\infty \leq 64m\alpha + 2\beta =: B$, so $p_{\calC,\alpha,\beta} \in \mathcal{P}_{n,d,B}$.
\end{proof}

Given a point $v \in \calH$, let $\calO(v) := \{ x\in \RR^n : x_i v_i \ge 0 ; \forall i \in [n]\}$ denote the octant containing $v$, and let  $\calB_r(v) := \{x \in \RR^n : \norm{x - v}_\infty \le r\}$ denote the ball of radius $r$ with respect to $\ell_\infty$ norm.

\begin{lemma}\label{lemma:sat-unsat-bounds}
Let $p := p_{\calC,\alpha,\beta}$ and $Z := Z_{\calC,\alpha,\beta}$ for some 3-\CNF{} $\calC$ with $m$ clauses and $n$ variables, and some parameters $\alpha,\beta>0$. Let $r \in (0,1)$. If $\beta \geq 40r^{-2}\log(4n/r)$, then for any $v \in \calH$ that is a satisfying assignment for $\calC$,
\[ \Pr_{x \sim p}(x \in \calB_r(v)) \geq \frac{e^{-1-81m\alpha r^2}}{Z} \left(\int_0^\infty \exp(-x^8 - \beta (1-x^2)^2) \, dx\right)^n.\]
For any $w \in \calH$ that is not a satisfying assignment for $\calC$,
\[ \Pr_{x \sim p}(x \in \calO(w)) \leq \frac{e^{-\alpha}}{Z} \left(\int_0^\infty \exp(-x^8 - \beta (1-x^2)^2) \, dx\right)^n.\]
\end{lemma}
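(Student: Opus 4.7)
\medskip

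\noindent\textbf{Proof plan for Lemma~\ref{lemma:sat-unsat-bounds}.} The plan is to exploit the product structure of the base measure $h(x) = \exp(-\sum_i x_i^8)$ and of $G(x) = \sum_i(1-x_i^2)^2$, which causes the integral over any axis-aligned box (or orthant) to factor into $n$ independent one-dimensional integrals. Both bounds will then reduce to controlling the prefactor $\exp(-\alpha H_\calC(x))$ on the relevant region and recognizing the one-dimensional integral $\int_0^\infty \exp(-x^8-\beta(1-x^2)^2)\,dx$ by evenness of its integrand.

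For the upper bound (non-satisfying $w$), the key observation is that $w$ must have some unsatisfied clause $C_j = \tilde{x}_i\vee\tilde{x}_{j'}\vee\tilde{x}_k$, so each of the three literals is false under $w$. On $\calO(w)$, each coordinate $x_\ell$ has the same sign as $w_\ell$, which in each of the cases ($\tilde{x}_\ell = x_\ell$ vs.\ $\tilde{x}_\ell = \neg x_\ell$) forces $|f_\ell(x_\ell)|\ge 1$. Hence $H_{C_j}(x)\ge 1$ and a fortiori $H_\calC(x)\ge 1$ on $\calO(w)$, giving $\exp(-\alpha H_\calC(x))\le e^{-\alpha}$. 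Dropping this factor and factoring the remaining integral over $\calO(w)=\prod_\ell \{x_\ell w_\ell\ge 0\}$ using evenness of $x_\ell^8$ and $(1-x_\ell^2)^2$ produces exactly $e^{-\alpha}(\int_0^\infty \exp(-x^8-\beta(1-x^2)^2)\,dx)^n$, yielding the claimed bound after dividing by $Z$.

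For the lower bound (satisfying $v$), I first bound $H_\calC$ from above on $\calB_r(v)$. For each clause $C_j$, pick a literal $\tilde{x}_i$ satisfied by $v$; then $f_i(v_i)=0$, so $|f_i(x_i)|\le r$ and $f_i(x_i)^2\le r^2$ on $\calB_r(v)$, while the other two factors satisfy $|f_\ell(x_\ell)|\le|x_\ell-v_\ell|+|v_\ell\pm 1|\le r+2\le 3$. Hence $H_{C_j}(x)\le 81 r^2$ and $H_\calC(x)\le 81mr^2$, giving $\exp(-\alpha H_\calC(x))\ge \exp(-81m\alpha r^2)$. The integral $\int_{\calB_r(v)} h(x)\exp(-\beta G(x))\,dx$ then factors, and by the change of variables $x_\ell\mapsto|x_\ell|$ (using evenness) each factor equals $\int_{1-r}^{1+r}\exp(-x^8-\beta(1-x^2)^2)\,dx$.

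The remaining one-dimensional concentration claim is the real heart of the proof: under the hypothesis $\beta\ge 40r^{-2}\log(4n/r)$, one has
\[\int_{1-r}^{1+r}\exp(-x^8-\beta(1-x^2)^2)\,dx \;\ge\; e^{-1/n}\int_0^\infty\exp(-x^8-\beta(1-x^2)^2)\,dx,\]
so that the product over the $n$ coordinates contributes the overall factor $e^{-1}$. To establish this I would bound the outside contribution: on $[0,1-r]$ one has $(1-x^2)^2\ge r^2(2-r)^2\ge r^2$, and on $[1+r,\infty)$ one has $(1-x^2)^2\ge 4(x-1)^2$, so these tails are dominated by $\exp(-\beta r^2)$ up to a mild polynomial-in-$\sqrt{\beta}$ factor. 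Meanwhile, on a sub-interval of width $\sim 1/\sqrt{\beta}\le r$ around $1$ the integrand is bounded below by a constant times $\exp(-x^8)\ge e^{-O(1)}$, so the inside integral is at least $\Omega(1/\sqrt{\beta})$. The ratio outside/inside is therefore $\text{poly}(\beta)\cdot e^{-\beta r^2}\le (r/(4n))^{\Omega(1)}$, which is comfortably smaller than $e^{1/n}-1\ge 1/(2n)$, completing the claim. This last concentration step is the main technical hurdle; everything else is an application of factorization and of the satisfiability/unsatisfiability structure of $H_\calC$.
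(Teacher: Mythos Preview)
Your proposal is correct and follows essentially the same structure as the paper: bound $H_\calC \le 81mr^2$ on $\calB_r(v)$ (resp.\ $H_\calC \ge 1$ on $\calO(w)$) using the clause structure, then factor the remaining product integral and reduce to a one-dimensional concentration statement showing $\int_{1-r}^{1+r}\exp(-x^8-\beta(1-x^2)^2)\,dx$ captures all but an $e^{-1/n}$-fraction of $\int_0^\infty$.

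The only real difference is in how that one-dimensional concentration lemma is established. The paper proves it by observing that $f(x)=x^8+\beta(1-x^2)^2$ is $\beta$-strongly convex on $[1/\sqrt 2,\infty)$, locating its unique minimizer $t_0\in(1-3/\beta,1]$, and then using a geometric-series argument over consecutive blocks of width $r'\approx r/2$ to show the mass outside $[t_0-r',t_0+r']$ decays at rate $\exp(-\beta r'^2/2)$ per block. Your route---lower-bounding $(1-x^2)^2$ pointwise by $r^2$ on $[0,1-r]$ and by $4(x-1)^2$ on $[1+r,\infty)$, then comparing to an inside interval of width $\sim 1/\sqrt\beta$---is more elementary and perfectly adequate here. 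The paper's strong-convexity approach is a bit more systematic and would generalize more cleanly to other base measures, but for this specific $f$ your direct tail bounds are shorter and just as effective.
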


\begin{proof}
We begin by lower bounding the probability over $\calB_r(v)$. Pick any clause $C_\ell$ included in $\calC$. We claim that $H_{C_\ell}(v') \le 81r^2$ for all $v' \in \calB_r(v)$.
    Indeed, say that ${C_\ell} = \tilde{x}_i \vee \tilde{x}_j \vee \tilde{x}_k$. Since $v$ satisfies ${C_\ell}$, at least one of $\{f_i(v_i), f_j(v_j), f_k(v_k)\}$ must be zero.
	Without loss of generality, say that $f_i(v_i) = 0$; also observe that $|f_j(v_j)|, |f_k(v_k)| \le 2$. 
 It follows that for any $v'\in \calB_r(v)$, $|f_i(v_i')| \le r$ and $|f_j(v_j')|, |f_j(v_k')| \le 2+r \leq 3$ (since $r \leq 1$). Therefore, we have
    \[ H_{C_\ell}(v') \le r^2 \cdot (3)^2 \cdot (3)^2 = 81r^2. \]
    Summing over all $m$ possible clauses, we have $H_\calC(v') \le 81mr^2$ for all $v' \in \calB_r(v)$. Hence,
    \begin{align*}
        \Pr_{x \sim p}(x \in \calB_r(v))
        &= \frac{1}{Z} \int_{\calB_r(v)} \exp\left(-\sum_{i=1}^n x_i^8 - \alpha H_\calC(x) - \beta G(x)\right) \, dx \\
        &\geq \frac{e^{-81m \alpha r^2}}{Z} \int_{\calB_r(v)} \exp\left(-\sum_{i=1}^n x_i^8 - \beta G(x)\right) \, dx \\
        &= \frac{e^{-81m \alpha r^2}}{Z} \left(\int_{1-r}^{1+r} \exp(-x^8 - \beta (1-x^2)^2) \, dx\right)^n \\
        &\geq \frac{e^{-81m \alpha r^2}}{Z} \left(1 + \frac{1}{n}\right)^{-n} \left(\int_0^\infty \exp(-x^8 - \beta (1-x^2)^2) \, dx\right)^n \labeleqn{eq:extending_integral}\\
        &\geq \frac{e^{-1-81m \alpha r^2}}{Z} \left(\int_0^\infty \exp(-x^8 - \beta (1-x^2)^2) \, dx\right)^n
    \end{align*}
    where the second inequality \eqref{eq:extending_integral} is by Lemma~\ref{lemma:general-int-bound}. Next, we upper bound the probability over $\calO(w)$. Let $C_\ell$ be any clause in $\calC$ that is not satisfied by $w$. Say that ${C_\ell} = \tilde{x}_i \vee \tilde{x}_j \vee \tilde{x}_k$. Then $|f_i(w_i)| = |f_j(w_j)| = |f_k(w_k)| = 2$. Furthermore, for any $w' \in \calO^d(w)$, we have  $|f_i(w_i')|, |f_j(w_j')|, |f_k(w_k')| \ge 1$, and hence $H_{C_\ell}(w') \ge 1$. Since $H_{C'}(x) \geq 0$ for all $x,C'$, we conclude that $H_\calC(w') \geq H_{C_\ell}(w') \geq 1$ for all $w' \in \calO(w)$. In particular, this gives us
     \begin{align*}
        \Pr_{x\sim p}(x \in \calO(w))
        &= \frac{1}{Z} \int_{\calO(w)} \exp\left(-\sum_{i=1}^n x_i^8 - \alpha H_\calC(x) - \beta G(x)\right) \, dx \\
        &\leq \frac{e^{-\alpha}}{Z} \int_{\calO(w)} \exp\left(-\sum_{i=1}^n x_i^8 - \beta G(x)\right) \, dx \\
        &= \frac{e^{-\alpha}}{Z} \left(\int_0^\infty \exp(-x^8 - \beta (1-x^2)^2) \, dx\right)^n
    \end{align*}
    as claimed.
\end{proof}

\ifneurips 

\subsection{Hardness of approximate sampling}

\fi

\ifneurips 

\subsection{Hardness of approximating zeroth-order oracle}\label{section:zeroth-order-app}

\begin{proof}[Proof of Theorem~\ref{theorem:order_zero_oracle}]

\end{proof}
\fi

\ifneurips

\subsection{Hardness of approximating first-order oracle}\label{section:first-order-app}


\begin{proof}[Proof of Theorem~\ref{theorem:order_one_oracle}]

\end{proof}
\fi

\subsection{Integral bounds}

\begin{lemma}\label{lemma:general-int-bound}
Fix $\beta>150$ and $\gamma \in [0,1]$. Define $f: \RR \to \RR$ by $f(x) = \gamma x^8 + \beta(1-x^2)^2$. Pick any $r \in (6/\beta, 0.04)$. Then \[ \int_0^\infty \exp(-f(x)) \, dx \leq \left(\frac{1}{1-\exp(-\beta r^2/8)} + \frac{2\exp(-\beta r/40)}{r}\right) \int_{1-r}^{1+r} \exp(-f(x)) \, dx.\]
In particular, for any $m \in \NN$, if $\beta \geq 40r^{-2}\log(4m/r)$, then 
\[ \int_0^\infty \exp(-f(x)) \, dx \leq \left(1 + \frac{1}{m}\right) \int_{1-r}^{1+r} \exp(-f(x)) \, dx.\]
\end{lemma}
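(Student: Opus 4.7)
The intuition is that $f(x) = \gamma x^8 + \beta(1-x^2)^2$ is sharply minimized near $x = 1$, so the mass of $\int_0^\infty \exp(-f(x))\,dx$ concentrates in the window $[1-r, 1+r]$. The plan is to cover the tails $[0, 1-r]$ and $[1+r, \infty)$ by ``tiles'' of width comparable to $2r$ and compare each tile's integral to the middle $M := \int_{1-r}^{1+r} \exp(-f(x))\,dx$ by a translation argument.

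For the right tail, I would partition $(1+r, \infty)$ into tiles $I_k := [1+(2k-1)r, 1+(2k+1)r]$ for $k \geq 1$, and use the translation $\sigma_k(x) := x - 2kr$ which bijects $I_k$ with the middle $[1-r, 1+r]$. The core technical claim is: for each $x \in I_k$,
\[ f(x) - f(\sigma_k(x)) \geq \tfrac{1}{8}\beta k^2 r^2.\]
The $\gamma x^8$ contribution only helps, since $0 \le \sigma_k(x) < x$. For the $\beta(1-x^2)^2$ contribution, expand using difference of squares:
\[(1-x^2)^2 - (1-\sigma_k(x)^2)^2 = (x - \sigma_k(x))(x + \sigma_k(x))\bigl(x^2 + \sigma_k(x)^2 - 2\bigr);\]
since $x \geq 1+r > 1$, each factor can be lower bounded (using $x-\sigma_k(x) = 2kr$, $x+\sigma_k(x) \geq 2$, and a careful estimate of the last factor) to yield the desired drop. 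Integrating gives $\int_{I_k} e^{-f}\,dx \leq e^{-\beta k^2 r^2/8} M$, and summing the geometric series bounds the right tail by $M \cdot e^{-\beta r^2/8}/(1 - e^{-\beta r^2/8})$, which accounts for the main $C_1$ contribution.

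For the left tail I would apply the analogous tiling $I_k^- := [\max(0,1-(2k+1)r), 1-(2k-1)r]$ with shift $\sigma_k^-(x) := x + 2kr$; the same computation gives $f(x) - f(\sigma_k^-(x)) \geq \beta k^2 r^2/8$, yielding the same geometric sum. Only finitely many tiles (at most $\lceil 1/(2r)\rceil$) are needed, but there may be a leftover cap near $x=0$ if $1 - (2K+1)r < 0$. On this cap, $x$ is far from $1$ so $(1-x^2)^2$ is bounded below by a constant, and the contribution is so small that it fits comfortably inside $C_2 \cdot M = (2e^{-\beta r/40}/r)\, M$ once we pay the crude worst-case bound $f(x) \ge \Omega(\beta r)$ on the cap and compare to a lower bound $M \gtrsim r \exp(-O(\beta r^2))$ obtained from a central subinterval such as $[1, 1+r/4]$. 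Combining the three contributions yields $(C_1 + C_2) M$, and the ``in particular'' clause follows by plugging in $\beta \geq 40 r^{-2}\log(4m/r)$ and verifying $C_1 \le 1 + 1/(2m)$ and $C_2 \leq 1/(2m)$ directly.

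The main obstacle I expect is the careful verification of the drop $f(x) - f(\sigma_k(x)) \geq \beta k^2 r^2/8$ with the specific constant $1/8$ under the precise hypotheses $\beta > 150$ and $r < 0.04$. The quadratic approximation $(1-x^2)^2 \approx 4(x-1)^2$ is only accurate to leading order, and the cross terms (stemming from the asymmetric placement of $x$ and $\sigma_k(x)$ relative to $1$) must be bounded with enough slack to absorb the additive loss incurred when lower-bounding $M$ by a shorter subinterval.
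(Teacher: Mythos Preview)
Your translation-to-the-middle argument has a genuine gap at $k=1$. The pointwise drop $f(x)-f(\sigma_1(x))\ge\beta r^2/8$ fails at the shared boundary: on the right, $x=1+r$ is mapped to $\sigma_1(x)=1-r$, and (taking $\gamma=0$)
\[
f(1+r)-f(1-r)=\beta\bigl[(2r+r^2)^2-(2r-r^2)^2\bigr]=8\beta r^3,
\]
which is below $\beta r^2/8$ once $r<1/64$; the hypotheses allow $r$ as small as $6/\beta$. On the left it is worse: at $x=1-r\in I_1^-$ one has $\sigma_1^-(x)=1+r$ and $f(1-r)-f(1+r)=-8\beta r^3<0$. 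In your difference-of-squares the factor $x^2+\sigma_1(x)^2-2$ is only $2r^2$ at that point, so no ``careful estimate'' can make the product reach $r^2/8$. The obstruction is structural, not a matter of constants: shifting the first tile back to the middle pairs its boundary with the \emph{opposite} boundary of $[1-r,1+r]$, and those two points sit nearly symmetrically about the minimizer of $f$.

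The paper sidesteps this by comparing \emph{consecutive} tiles of width $r'=r-3/\beta\ge r/2$, centered at the actual minimizer $t_0\in(1-3/\beta,1]$ of $f$ on $[1/\sqrt2,\infty)$, rather than comparing each tile to the middle. Strong convexity ($f''\ge\beta$ on that ray) together with $f'(t)\ge 0$ for $t\ge t_0$ gives the pointwise bound $f(x+r')-f(x)\ge r'f'(x)+\tfrac{\beta}{2}r'^2\ge \tfrac{\beta}{2}r'^2$, hence $I(t+r')\le e^{-\beta r'^2/2}I(t)$ for $I(t):=\int_t^{t+r'}e^{-f}$; iterating and summing the geometric series yields the $\frac{1}{1-\exp(-\beta r^2/8)}$ factor. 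The same argument runs leftward down to $a=1/\sqrt2$, and on $[0,a+2r']$ one uses monotonicity of $f$ on $[0,0.8]$ to bound the leftover by the last tile, producing the $\frac{2\exp(-\beta r/40)}{r}$ term. If you want to preserve the shift idea, compare each tile to its \emph{neighbor} rather than to the middle; that gives a uniform per-step ratio and eliminates the $k=1$ boundary degeneracy.
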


\begin{proof}
Set $a = 1/\sqrt{2}$. For any $x \in [a,\infty)$ we have $f''(x) = 56\gamma x^6 - 2\beta + 6\beta x^2 \geq \beta > 0$ for $\beta > 150$. Thus, $f$ has at most one critical point in $[a,\infty)$; call this point $t_0$. Since $f'(x) = 8\gamma x^7 - 4\beta x(1-x^2)$, we have $f'(1) = 8\gamma \geq 0$ and $f'(1 - 3/\beta) \leq 8 - 4\beta(1-3/\beta)(3/\beta)(2-3/\beta) < 0$. Thus, $t_0 \in (1-3/\beta, 1]$. Set $r' = r - 3/\beta \geq r/2$. Then \[\int_{1-r}^{1+r} \exp(-f(x)) \, dx \geq \int_{t_0 - r'}^{t_0 + r'} \exp(-f(x)) \, dx.\]
For every $t \in \RR$ define $I(t) = \int_t^{t+r'} \exp(-f(x)) \, dx$. Since $f$ is $\beta$-strongly convex on $[a,\infty)$, we have for any $t \geq t_0$ that
\[f(t+r') - f(t) \geq r' f'(t) + \frac{r'^2}{2} \beta \geq \frac{r'^2}{2} \beta\]
where the final inequality is because $f'(t) \geq 0$ for $t \in [t_0,\infty)$. Thus, for any $t \geq t_0$,
\[ I(t+r') = \int_{t+r'}^{t+2r'} \exp(-f(x)) \, dx = \int_t^{t+r} \exp(-f(x+r')) \, dx \leq \exp(-\beta r'^2/2) I(t).\]
By induction, for any $k \in \NN$ it holds that $I(t_0+kr') \leq \exp(-\beta k r'^2/2) I(t_0)$, so \begin{equation}
\int_{t_0}^\infty \exp(-f(x)) \, dx = \sum_{k=0}^\infty I(t_0+kr') \leq I(t_0) \sum_{k=0}^\infty \exp(-\beta k r'^2/2) = \frac{I(t_0)}{1-\exp(-\beta r'^2/2)}.
\label{eq:int-upper-tail}
\end{equation}
Similarly, for any $t \in [a+r', t_0]$, we have 
\[ f(t-r') - f(t) \geq -r' f'(t) + \frac{r'^2}{2} \beta \geq \frac{r'^2}{2}\beta\]
using $\beta$-strong convexity on $[a,\infty)$ and the bound $f'(t) \leq 0$ on $[a,t_0]$. Thus, for any $t \in [a, t_0-r']$,
\[ I(t-r') = \int_{t-r'}^t \exp(-f(x)) \, dx = \int_t^{t+r'} \exp(-f(x-r')) \, dx \leq \exp(-\beta r'^2/2) I(t),\]
so by induction, $I(t_0-kr') \leq \exp(-\beta (k-1) r'^2/2) I(t_0-r')$ for any $1 \leq k \leq K := \lfloor (t_0-a)/r'\rfloor$. It follows that 
\begin{equation}
\int_{t_0 - Kr'}^{t_0} \exp(-f(x)) \, dx = \sum_{k=1}^K I(t_0 - kr') \leq I(t_0-r') \sum_{k=1}^K \exp(-\beta (k-1) r'^2/2) \leq \frac{I(t_0-r')}{1-\exp(-\beta r'^2/2)}.
\label{eq:int-mid}
\end{equation}
Finally, note that $t_0 - (K-1)r' \leq a + 2r' \leq 0.8$.
For any $x \in [0,0.8]$, we have $f'(x) \leq 8x^7 - 0.72\beta x = x(8x^6-1.44\beta) \leq 0$,  since $\beta > 150$. That is, $f$ is non-increasing on $[0, t_0-(K-1)r']$. It follows that
\begin{align*}
\int_0^{t_0-Kr'} \exp(-f(x)) \, dx 
&\leq \frac{t_0 - Kr'}{r'} \int_{t_0-Kr'}^{t_0-(K-1)r'} \exp(-f(x)) \, dx \\
&\leq \frac{1}{r'} I(t_0-Kr') \\
&\leq \frac{\exp(-\beta (K-1) r'^2/2)}{r'} I(t_0-r').
\end{align*}
Since $(K-1)r' \geq t_0 - 0.8 \geq 1 - \frac{3}{\beta} - 0.8 \geq 0.1$, we conclude that
\begin{equation}
\int_0^{t_0-Kr'} \exp(-f(x)) \, dx \leq \frac{\exp(-\beta r'/20)}{r'} I(t_0 - r').
\label{eq:int-lower-tail}
\end{equation}
Combining (\ref{eq:int-upper-tail}), (\ref{eq:int-mid}), and (\ref{eq:int-lower-tail}), we get
\[ \int_0^\infty \exp(-f(x)) \, dx \leq \left(\frac{1}{1-\exp(-\beta r'^2/2)} + \frac{\exp(-\beta r'/20)}{r'}\right) \int_{t_0-r'}^{t_0+r'} \exp(-f(x)) \, dx.\]
Substituting in $r' \geq r/2$ gives the claimed result.
\end{proof}

\begin{lemma}\label{lemma:1d-moment-bound}
Fix $\beta \geq 160\log(8)$. Then for any $1 \leq k \leq 8$,
\[\int_0^\infty x^k \exp(-x^8-\beta(1-x^2)^2)\, dx \leq 2^k\int_0^\infty \exp(-x^8-\beta(1-x^2)^2)\, dx.\]
\end{lemma}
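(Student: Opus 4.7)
Write $f(x) = x^8 + \beta(1-x^2)^2$ and $Z = \int_0^\infty e^{-f(x)}\,dx$. The claim is that the $k$-th moment of the induced distribution on $[0,\infty)$ is at most $2^k$. Since for large $\beta$ the weight $e^{-f}$ is concentrated in a shrinking window around $x=1$, the moment should be close to $1$, so the bound $2^k$ ought to leave a lot of slack. I would make this precise by splitting the integral at some constant $a$ strictly between $1$ and $2$; for concreteness take $a = 3/2$.

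\textbf{Splitting at $3/2$.} On $[0,3/2]$ use the trivial bound $x^k \le (3/2)^k$ to get
\[ \int_0^{3/2} x^k e^{-f(x)}\,dx \;\le\; (3/2)^k \int_0^{3/2} e^{-f(x)}\,dx \;\le\; (3/2)^k Z, \]
and observe that $2^k - (3/2)^k \ge 1/2$ for every $k \ge 1$, so it suffices to show that the tail $\int_{3/2}^\infty x^k e^{-f(x)}\,dx$ is at most $(1/2)Z$. For $x \ge 3/2$ one has $(x^2-1)^2 \ge (5/4)^2 = 25/16$, so $\beta(1-x^2)^2 \ge 25\beta/16$, giving
\[ \int_{3/2}^\infty x^k e^{-f(x)}\,dx \;\le\; e^{-25\beta/16}\int_0^\infty x^8 e^{-x^8}\,dx \;=\; C_1 e^{-25\beta/16}, \]
where $C_1 = \Gamma(9/8)/8$ is a universal constant, and I used $x^k \le x^8$ for $x \ge 1$ and $k \le 8$.

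\textbf{Lower bound on $Z$.} Choose the window $[1-r,1+r]$ with $r = 1/(2\sqrt{\beta})$. Since $\beta \ge 160\log 8$ is at least a modest absolute constant, $r \le 1/2$, so $|1-x^2| \le r(2+r) \le 3r$ on the window, hence $\beta(1-x^2)^2 \le 9\beta r^2 = 9/4$, while $(1+r)^8 \le 2$. Thus $f(x) \le 5$ on $[1-r,1+r]$, so
\[ Z \;\ge\; \int_{1-r}^{1+r} e^{-f(x)}\,dx \;\ge\; 2r\,e^{-5} \;=\; \frac{e^{-5}}{\sqrt{\beta}}. \]

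\textbf{Combining.} It remains to verify $C_1 e^{-25\beta/16} \le \tfrac{1}{2}\cdot e^{-5}/\sqrt{\beta}$, i.e., $25\beta/16 \ge 5 + \tfrac12\log\beta + \log(2C_1)$. For $\beta \ge 160\log 8$, the left-hand side is at least $250\log 8 \approx 520$, while the right-hand side is at most $\tfrac12\log\beta$ plus a small absolute constant, so the inequality is comfortably satisfied. This yields $\int_0^\infty x^k e^{-f(x)}\,dx \le ((3/2)^k + 1/2)Z \le 2^k Z$, completing the proof. There is no deep obstacle here; the only subtlety is choosing the split point $a$ bounded away from both $1$ (so the tail picks up an $\Omega(\beta)$ exponential penalty) and $2$ (so that $2^k - a^k$ is bounded below for every $k \ge 1$), and the threshold $160\log 8$ is overkill for this kind of argument.
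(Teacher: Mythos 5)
Your proof is correct, but it takes a genuinely different route from the paper's. The paper defines the normalized density $q(x)\propto e^{-x^8-\beta(1-x^2)^2}$ on $[0,\infty)$, bounds the exponential moment $\EE_q[e^{x^8}]\le 2e^{(3/2)^8}$ by invoking its integral-concentration result (Lemma~\ref{lemma:general-int-bound} with $r=1/2$, $m=1$), and then deduces $\EE_q[x^8]\le \log 2+(3/2)^8\le 2^8$ via Jensen, finishing with H\"older to get all $k\le 8$ at once. You instead split the integral at $x=3/2$: the head contributes at most $(3/2)^k Z$ trivially, the tail is killed by the factor $e^{-25\beta/16}$ coming from $(x^2-1)^2\ge 25/16$, and you close the argument with a hands-on lower bound $Z\ge e^{-5}/\sqrt{\beta}$ from a window of width $1/\sqrt{\beta}$ around $x=1$, using that $2^k-(3/2)^k\ge 1/2$ for all $k\ge 1$ (with equality at $k=1$, so the split point $3/2$ is tight on that side). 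Your argument is fully self-contained and more elementary, which is a virtue; the paper's is shorter on the page because it recycles Lemma~\ref{lemma:general-int-bound}, which it needs elsewhere anyway, and the Jensen/H\"older step handles all moments uniformly without case analysis on the split point. One cosmetic remark: the bound $(1+r)^8\le 2$ requires $r\le 2^{1/8}-1\approx 0.09$ rather than merely $r\le 1/2$; this does hold for your $r=1/(2\sqrt{\beta})\le 1/(2\sqrt{160\log 8})\approx 0.027$, so the step is fine, but the justification you cite in that sentence is not the one doing the work.
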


\begin{proof}
Define a distribution $q(x) \propto \exp(-x^8 - \beta(1-x^2)^2)$ for $x \in [0,\infty)$. We want to show that $\EE_q[x^k] \leq 2^k$. Indeed,
\begin{align*}
\EE_q[\exp(x^8)]
&= \frac{\int_0^\infty \exp(-\beta(1-x^2)^2) \, dx}{\int_0^\infty \exp(-x^8 - \beta(1-x^2)^2) \, dx} \\
&\leq \frac{2\int_{1/2}^{3/2} \exp(-\beta(1-x^2)^2) \, dx}{\int_0^\infty \exp(-x^8 - \beta(1-x^2)^2) \, dx} \\
&= 2\EE_q[\exp(x^8) \mathbbm{1}[1/2\leq x \leq 3/2]] \\
&\leq 2\exp((3/2)^8)
\end{align*}
where the first inequality is by an application of Lemma~\ref{lemma:general-int-bound} with $r = 1/2$ and $m = 1$. Now by Jensen's inequality we get 
\[ \EE_q[x^8] \leq \log \EE_q[\exp(x^8)] = \log(2) + (3/2)^8 \leq 2^8 \]
and consequently, an application of H\"older inequality gives us $\EE_q[x^k] \leq 2^k$, for any $1 \le k \le 8$.
\end{proof}

\ifneurips 

\subsection{Proof of  Corollary~\ref{cor:hardness}}\label{subsection:hardness-cor-proof}

\begin{proof}[\textbf{Proof of  Corollary~\ref{cor:hardness}}]

\end{proof}

\fi
\section{Moment bounds}

\begin{lemma}[Moment bound]\label{lemma:exp-fam-moment-bound}
For any $\theta \in \Theta_B$, $i \in [n]$, and $\ell \in \NN$ it holds that
\[ \EE_{x \sim p_\theta} x_i^\ell \leq \max(2\ell^\ell, B^\ell M^{\ell}2^{\ell(d+1)+1}).\]
\end{lemma}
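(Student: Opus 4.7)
The plan is to bound $\EE_{p_\theta}[|x_i|^\ell]$ via tail integration: writing $\EE[|x_i|^\ell] \leq R^\ell + \ell\int_R^\infty t^{\ell-1}\Pr_{p_\theta}[|x_i|>t]\,dt$ with $R := 2^{d+1}BM$, so that the first contribution already matches the $B^\ell M^\ell 2^{\ell(d+1)+1}$ term in the target bound (up to the outer factor of $2$). The task then reduces to establishing a sub-Weibull tail bound of the form $\Pr_{p_\theta}[|x_i| > t] \leq C_1 \exp(-t^{d+1}/c)$ for $t \geq R$ and absolute constants $C_1, c$, and then controlling the resulting tail integral.

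The key ingredient for the tail bound is the dominance of the base-measure decay $-\sum_j x_j^{d+1}$ over the polynomial perturbation $\langle\theta, T(x)\rangle$. From $\|\theta\|_\infty \leq B$ and the fact that the $M-1$ monomial components of $T$ are each bounded in absolute value by $\max(1, \|x\|_\infty^d)$, we get $|\langle\theta, T(x)\rangle| \leq BM(1+\|x\|_\infty^d)$. Young's inequality in the form $BM \|x\|_\infty^d \leq \tfrac{1}{2}\|x\|_\infty^{d+1} + \tfrac{1}{2}(2BM)^{d+1}$, combined with $\|x\|_\infty^{d+1} \leq \sum_j x_j^{d+1}$, yields the pointwise bound \[-\sum_j x_j^{d+1} + \langle\theta, T(x)\rangle \leq -\tfrac{1}{2}\sum_j x_j^{d+1} + BM + \tfrac{1}{2}(2BM)^{d+1},\] which pairs with the lower bound $Z_\theta \geq \int_{[-1,1]^n} h(x)e^{\langle\theta, T(x)\rangle}\,dx \geq (2/e)^n e^{-BM}$ (obtained by restricting integration to the unit box) to give a sub-Weibull density upper bound suitable for the tail estimate. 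The tail integral $\int_R^\infty t^{\ell-1} e^{-t^{d+1}/c}\,dt$ then either concentrates near its lower limit $R$ (when $\ell \leq R^{d+1}/c$) and is absorbed into the bulk, or concentrates near the peak at $t_* \sim \ell^{1/(d+1)}$ (when $\ell$ is very large) and contributes at most of order $\ell^{\ell/(d+1)}$, which is comfortably absorbed by the generous $2\ell^\ell$ term.

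The main obstacle I anticipate is the bookkeeping around $Z_\theta$: combining the pointwise density upper bound with the naive $Z_\theta$ lower bound introduces a ratio $\bigl(\int e^{-x^{d+1}/2}\,dx / \int e^{-x^{d+1}}\,dx\bigr)^n = 2^{n/(d+1)}$, which is exponential in $n$ and not absorbed by the polynomial factor $M^\ell = \binom{n+d}{d}^\ell$. Avoiding this likely requires either a sharper lower bound on $Z_\theta$ (for example via Jensen, $\log Z_\theta \geq \log\int h\,dx + \EE_{p_0}[\langle\theta, T(x)\rangle]$ where $p_0 = h/\!\int h$), or a leave-one-out argument that analyzes the one-dimensional conditional density $p_\theta(x_i \mid x_{-i}) \propto \exp(-x_i^{d+1} + \sum_{k=0}^d c_k(x_{-i}) x_i^k)$, whose coefficients $c_k(x_{-i})$ are controlled by $\|x_{-i}\|_\infty$, yielding a self-consistent recursion that closes at the correct polynomial-in-$n$ bound.
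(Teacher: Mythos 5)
Your overall strategy---dominate the Gibbs factor by the base measure in the tail and integrate---is in the same spirit as the paper's, but the gap you flag at the end is real and your proposal does not close it, so as written the proof does not go through. The problem is exactly the one you diagnose: any argument that upper-bounds the density by a \emph{product} measure $\propto \exp(-\frac12\sum_j x_j^{d+1})$ and lower-bounds $Z_\theta$ by a different product measure pays a per-coordinate normalization mismatch, here $\bigl(\int e^{-s^{d+1}/2}ds\,/\!\int e^{-s^{d+1}}ds\bigr)^n = 2^{n/(d+1)}$, which is exponential in $n$ and cannot be absorbed by $M^\ell = \binom{n+d}{d}^\ell$. The Jensen lower bound on $Z_\theta$ does not help, since it improves only the $e^{-BM}$ factor and leaves the per-coordinate ratio intact; the leave-one-out conditional-density route could in principle work but is circular as sketched (the coefficients $c_k(x_{-i})$ are controlled by moments of $\|x_{-i}\|_\infty$, which is what you are trying to bound). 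There is also a second, unflagged issue: applying Young's inequality globally produces the additive constant $\frac12(2BM)^{d+1}$ in the exponent, i.e.\ a multiplicative $e^{\Theta((BM)^{d+1})}$ in the final bound, which vastly exceeds $B^\ell M^\ell 2^{\ell(d+1)+1}$ for small $\ell$. (This one is avoidable by only invoking the comparison on $\{\|x\|_\infty \geq 2^{d+1}BM\}$, where $BM\|x\|_\infty^d \leq \frac12\|x\|_\infty^{d+1}$ holds with no additive term, but the exponential-in-$n$ factor remains.)

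The paper avoids both issues by never passing to a product-form density upper bound. It decomposes the tail into dyadic $\ell_\infty$-shells $\{L < \|x\|_\infty \leq 2L\}$ with $L \geq L_0 := \max(\ell, BM2^{d+1})$ and, on each shell, uses only the crude bound (volume) $\times$ (sup of integrand): the volume is $(2L)^n$, the integrand is at most $(2L)^\ell\exp(-L^{d+1} + BM(2L)^d) \leq (2L)^\ell \exp(-L^{d+1}/2)$, and $Z_\theta \geq e^{-2}(BM)^{-n}$ from a single small box. The point is that the one factor $\exp(-L^{d+1}/4)$, with $L \geq BM2^{d+1} \geq M \geq n$ and $L \geq \ell$, swallows $(2L)^{n+\ell}(BM)^n$ entirely, so no per-coordinate bookkeeping (and hence no $c^n$ loss) ever arises; summing the resulting $\exp(-L^{d+1}/4)$ over shells contributes at most $1$, giving $\EE x_i^\ell \leq L_0^\ell + 1 \leq 2L_0^\ell$. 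If you want to salvage your tail-integration framing, you should replace the sub-Weibull product bound by this shell-wise volume-times-sup estimate.
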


\begin{proof}
Without loss of generality assume $i=1$. Let $L_0 := \max(\ell, BM 2^{d+1})$. Then
\begin{align*}
\EE_{x \sim p_\theta} x_1^{\ell}
&\leq L_0^{\ell} + \EE_{x \sim p_\theta} x_1^{\ell} \mathbbm{1}[\norm{x}_\infty > L_0] \\
&= L_0^{\ell} + \sum_{k=0}^\infty \EE_{x \sim p_\theta} \left[x_1^{\ell} \mathbbm{1}[2^k L_0 < \norm{x}_\infty \leq 2^{k+1} L_0] \right]
\end{align*}
Now for any $L \geq L_0$,
\begin{align*}
&\EE \left[x_1^{\ell} \mathbbm{1}[L < \norm{x}_\infty \leq 2L]\right]\\
&= \frac{1}{Z_\theta} \int_{B_{2L}(0)\setminus B_L(0)} x_1^{\ell}\exp\left(-\sum_{i=1}^n x_i^{d+1} + \langle \theta, T(x)\rangle\right) \, dx \\
&\leq \frac{(2L)^n}{Z_\theta} (2L)^{\ell}\exp\left(- L^{d+1} + BM(2L)^d\right) \\
&\leq \frac{(2L)^{n+\ell} \exp(-L^{d+1}/2)}{Z_\theta}.
\end{align*}
We can lower bound $Z_\theta$ as
\begin{align*}
Z_\theta
&\geq \int_{\calB_{1/(BM)}(0)} \exp\left(-\sum_{i=1}^n x_i^{d+1} + \langle \theta, T(x)\rangle\right) \, dx \\
&\geq (BM)^{-n} \exp(-n(BM)^{-d-1} - BM (BM)^{-d}) \\
&\geq e^{-2} (BM)^{-n}.
\end{align*}
Thus,
\begin{align*}
\EE\left[ x_1^{\ell} \mathbbm{1}[L < \norm{x}_\infty \leq 2L]\right]
&\leq \exp\left((n+\ell)\log(2L) - \frac{1}{2}L^{d+1} + 2 + n\log(BM)\right) \\ 
&\leq \exp\left(-\frac{1}{4} L^{d+1}\right)
\end{align*}
since $L$ was assumed to be sufficiently large (recall that we assume $B \geq 1$). 
We conclude that
\begin{align*}
\EE_{x \sim p_\theta} x_1^{\ell}
&\leq L_0^{\ell} + \sum_{k=0}^\infty \exp\left(-\frac{1}{4} 2^{k(d+1)} L_0^{d+1}\right) \\
&\leq L_0^{\ell} + 1 \qquad \leq 2L_0^{\ell}
\end{align*}
which completes the proof.
\end{proof}

\ifneurips 

\fi

\begin{lemma}[Smoothness bounds]\label{lemma:aux-smoothness}
For every $\theta \in \Theta_B$, it holds that
\[ \EE_{x\sim p_\theta}\norm{\Laplace T(x)}_2^2 := \sum_{j=1}^M \EE_{x\sim p_\theta} (\Laplace T_j(x))^2 \leq d^4 B^{2d} M^{2d+1} 2^{2d(d+1)+1}\]
and
\[ \EE_{x\sim p_\theta} \opnorm{(JT)(x)}^2 \leq nd^2 B^{2d}M^{2d+1}2^{2d(d+1)+1}.\]
\end{lemma}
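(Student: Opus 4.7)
The plan is to reduce both bounds to Lemma~\ref{lemma:exp-fam-moment-bound} together with elementary bookkeeping about derivatives of monomials. The key auxiliary observation is that since $B, M \geq 1$ and $2\ell^\ell \leq B^{2d}M^{2d}2^{2d(d+1)+1}$ for all $\ell \leq 2d$ (as $\ell \leq 2^\ell$ for $\ell \geq 1$), the moment bound from Lemma~\ref{lemma:exp-fam-moment-bound} yields the uniform estimate $\EE_{x\sim p_\theta}|x_i|^\ell \leq B^{2d}M^{2d}2^{2d(d+1)+1}$ for all $i \in [n]$ and $\ell \leq 2d$. Combined with generalized H\"older's inequality, this extends to
\[ \EE_{x\sim p_\theta}|x^\gamma| \leq B^{2d}M^{2d}2^{2d(d+1)+1} \]
for every multi-index $\gamma : [n] \to \NN$ with $|\gamma| \leq 2d$.

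For the Laplacian bound, I would fix a monomial $T_j(x) = x^\idx$ with $1 \leq |\idx| \leq d$. Letting $e_i$ denote the multi-index that is $1$ at coordinate $i$ and $0$ elsewhere, $\Laplace T_j(x) = \sum_{i=1}^n \idx(i)(\idx(i)-1)\, x^{\idx - 2e_i}$, where the terms with $\idx(i) < 2$ vanish. Expanding the square, taking expectations termwise, and applying the auxiliary bound (noting $|2\idx - 2e_i - 2e_{i'}| \leq 2d$),
\[ \EE_{x\sim p_\theta}(\Laplace T_j(x))^2 \leq \Bigl(\sum_{i=1}^n \idx(i)(\idx(i)-1)\Bigr)^2 B^{2d}M^{2d}2^{2d(d+1)+1} \leq d^4\, B^{2d}M^{2d}2^{2d(d+1)+1}, \]
where the last inequality uses $\sum_i \idx(i)(\idx(i)-1) \leq \sum_i \idx(i)^2 \leq (\sum_i \idx(i))^2 = |\idx|^2 \leq d^2$. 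Summing over the at most $M$ entries of $T$ gives the claim.

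For the Jacobian bound, I would use $\opnorm{(JT)(x)}^2 \leq \norm{(JT)(x)}_F^2 = \sum_i \norm{\Grad_x T_i(x)}_2^2$. For $T_i(x) = x^\idx$, the coordinate $j$ of the gradient equals $\idx(j)\, x^{\idx - e_j}$, which is nonzero only when $\idx(j) \geq 1$, giving at most $\min(n, |\idx|) \leq n$ nonzero entries. Applying the auxiliary moment bound and the trivial estimate $\idx(j)^2 \leq d^2$,
\[ \EE_{x\sim p_\theta}\norm{\Grad_x T_i(x)}_2^2 = \sum_{j:\,\idx(j)\geq 1} \idx(j)^2\, \EE_{x\sim p_\theta}[x^{2\idx - 2e_j}] \leq n\, d^2\, B^{2d}M^{2d}2^{2d(d+1)+1}, \]
and summing over the at most $M$ entries of $T$ finishes the proof.

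The argument is essentially pure bookkeeping, and the only mild trap is in the Laplacian step: a naive estimate that enumerates the at most $d^2$ pairs $(i,i')$ and uses $\idx(i)(\idx(i)-1) \leq d^2$ per index yields the looser prefactor $d^6$ rather than $d^4$. The sharper $d^4$ comes from recognizing $\sum_i \idx(i)(\idx(i)-1)$ as a single nonnegative quantity to be bounded by $|\idx|^2$ \emph{before} squaring.
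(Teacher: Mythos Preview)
Your proof is correct and follows essentially the same approach as the paper: reduce everything to the moment bound of Lemma~\ref{lemma:exp-fam-moment-bound} via H\"older together with elementary bookkeeping on derivatives of monomials. The only cosmetic differences are that you expand $(\Laplace T_j)^2$ directly rather than routing through the spectral bound of Corollary~\ref{cor:max-eig-bounds} (your route in fact lands exactly on the stated $M^{2d+1}$), and that the paper's own proof bounds $\EE_{x\sim p_\theta}\opnorm{(JT)(x)}^4$ (which is what Theorem~\ref{theorem:koehler} actually uses) rather than the second power written in the lemma statement.
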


\begin{proof}
Fix any $j \in [M]$; then there is a degree function $\idx$ with $1 \leq |\idx| \leq d$ so that $T_j(x) = x_\idx = \prod_{i=1}^n x_i^{\idx(i)}.$ Therefore
\[ \Laplace T_j(x) = \sum_{k \in [n]: \idx(k)\geq 2} \idx(k)(\idx(k)-1) x_{\idx - 2\{k\}} =: \langle w, T(x)\rangle\]
for some $w \in \RR^M$ with $\norm{w}_2^2 = \sum_{k \in [n]: \idx(k)\geq 2} \idx(k)^2(\idx(k)-1)^2 \leq d^4.$ By Corollary~\ref{cor:max-eig-bounds}, we conclude that
\[ \EE_{x\sim p_\theta} (\Laplace T_j(x))^2 = \EE_{x\sim p_\theta} \langle w,T(x)\rangle^2 \leq n^2 d^4 B^{4d}M^{4d+2}2^{4d(d+2)+1}.\]
Summing over $j \in [M]$ gives the first claimed bound. For the second bound, observe that
\[\EE_{x\sim p_\theta}\opnorm{(JT)(x)}^4 \leq \EE_{x\sim p_\theta} \norm{(JT)(x)}_F^4 = \EE_{x\sim p_\theta} \left(\sum_{j=1}^M \sum_{i=1}^n \left(\frac{\partial}{\partial x_i} T_j(x)\right)^2\right)^2.\]
For any $j \in [M]$ and $i \in [n]$, there is some degree function $\idx$ with $|\idx| \leq d$ and $\frac{\partial}{\partial x_i} T_j(x) = |\idx| \cdot x_{\idx - \{i\}}$. Thus, by Holder's inequality and Lemma~\ref{lemma:exp-fam-moment-bound} (with $\ell = 4d$), we get
\begin{align*}
\EE_{x\sim p_\theta} \left(\sum_{j=1}^M \sum_{i=1}^n \left(\frac{\partial}{\partial x_i} T_j(x)\right)^2\right)^2
&= \sum_{j,j'\in [M]}\sum_{i,i' \in [n]} \EE_{x\sim p_\theta} \left(\frac{\partial}{\partial x_i} T_j(x)\right)^2 \left(\frac{\partial}{\partial x_{i'}} T_{j'}(x)\right)^2  \\
&\leq M^2 n^2 d^4 B^{4d}M^{4d} 2^{4d(d+2)+1}
\end{align*}
which proves the second bound.
\end{proof}

The following regularity conditions are sufficient for consistency and asymptotic normality of score matching, assuming that the restricted Poincar\'e constant is finite and $\lambda_\text{min}(\calI(\theta^*)) > 0$ (see Proposition~2 in \cite{forbes2015linear} together with Lemma~1 in \cite{koehler2022statistical}). We show that these conditions hold for our chosen exponential family.

\begin{lemma}[Regularity conditions]\label{lemma:sm-regularity}
For any $\theta \in \RR^M$, the quantities $\EE_{x\sim p_\theta} \norm{\Grad h(x)}_2^4$, $\EE_{x\sim p_\theta} \norm{\Laplace T(x)}_2^2$, and $\EE_{x\sim p_\theta} \opnorm{(JT)(x)}^4$ are all finite. Moreover, $p_\theta(x) \to 0$ and $\norm{\Grad_x p_\theta(x)}_2 \to 0$ as $\norm{x}_2 \to \infty$.
\end{lemma}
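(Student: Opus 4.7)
The plan is to separate the lemma into two parts: finiteness of the three expectations, and pointwise decay of $p_\theta$ and $\nabla_x p_\theta$ at infinity. Both parts rely on the same core fact, namely that the base measure $h(x)=\exp(-\sum_i x_i^{d+1})$ decays faster than any polynomial in $x$, so that for any fixed $\theta$ the exponent $-\sum_i x_i^{d+1}+\langle\theta,T(x)\rangle$ is dominated by its $-\sum_i x_i^{d+1}$ term once $\|x\|_\infty$ is large enough.

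For the moment bounds, first I would observe that Lemma~\ref{lemma:exp-fam-moment-bound} extends verbatim to arbitrary $\theta\in\RR^{M-1}$: its proof only uses that $\langle\theta,T(x)\rangle$ is a polynomial of degree $\leq d$, and that the $-\sum_i x_i^{d+1}$ term in the exponent eventually dominates. Once every polynomial moment $\EE_{p_\theta}[x_\idx^{2k}]$ is finite, all three quantities are immediately finite:
\begin{itemize}
\item $\|\nabla h(x)\|_2^2=(d+1)^2 h(x)^2\sum_{i=1}^n x_i^{2d}$, so $\|\nabla h(x)\|_2^4\leq (d+1)^4\bigl(\sum_i x_i^{2d}\bigr)^2$ using $h(x)\leq 1$; expanding the square and taking expectations reduces to moments bounded via the extended Lemma~\ref{lemma:exp-fam-moment-bound}.
\item $\EE_{p_\theta}\|\Delta T(x)\|_2^2$ and $\EE_{p_\theta}\|(JT)(x)\|_{\mathrm{op}}^4$ are exactly what Lemma~\ref{lemma:aux-smoothness} controls (for $\theta\in\Theta_B$, but the same polynomial-moment argument gives finiteness for arbitrary $\theta$).
\end{itemize}

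For the pointwise decay, I would write
\[\log p_\theta(x)=-\sum_{i=1}^n x_i^{d+1}+\langle\theta,T(x)\rangle-\log Z_\theta.\]
Since $d+1$ is even, $\sum_i x_i^{d+1}\geq \|x\|_\infty^{d+1}$, while $|\langle\theta,T(x)\rangle|\leq M\|\theta\|_\infty\max(1,\|x\|_\infty^d)$. As $\|x\|_2\to\infty$ we have $\|x\|_\infty\geq \|x\|_2/\sqrt{n}\to\infty$, so the first term overwhelms the second and $\log p_\theta(x)\to-\infty$, i.e.\ $p_\theta(x)\to 0$. For the gradient, use $\nabla_x p_\theta(x)=p_\theta(x)\nabla_x\log p_\theta(x)$. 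The score $\nabla_x\log p_\theta(x)$ is a polynomial in $x$ of degree at most $d$ (each coordinate is $-(d+1)x_i^d+\partial_{x_i}\langle\theta,T(x)\rangle$), so $\|\nabla_x\log p_\theta(x)\|_2$ grows at most polynomially in $\|x\|_\infty$. Combined with the super-polynomial decay of $p_\theta(x)$ just established, the product goes to zero.

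No step here is delicate: the base measure was deliberately chosen so that its exponent dominates every polynomial that can appear from $\langle\theta,T(x)\rangle$ or from differentiating the density. The only minor care point is checking that Lemma~\ref{lemma:exp-fam-moment-bound} does not actually rely on the $\ell_\infty$ bound $B$ on $\theta$ in any essential way, which it does not, so the same moment control transfers to arbitrary $\theta\in\RR^{M-1}$ with a constant that depends on $\|\theta\|_\infty$ rather than $B$; this is all that is needed since the lemma is stated for fixed but arbitrary $\theta$.
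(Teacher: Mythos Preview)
Your proposal is correct and follows essentially the same approach as the paper: invoke Lemma~\ref{lemma:exp-fam-moment-bound} with $B:=\|\theta\|_\infty$ (and Lemma~\ref{lemma:aux-smoothness}) for the finiteness of the three expectations, and use the dominance of the $-\sum_i x_i^{d+1}$ term over the degree-$d$ polynomial $\langle\theta,T(x)\rangle$ for the decay of $p_\theta$ and $\nabla_x p_\theta$ at infinity. Your treatment of $\|\nabla h(x)\|_2^4$ via the bound $h(x)\leq 1$ is in fact slightly more careful than the paper's sketch, which asserts this quantity ``can be written as a polynomial'' even though it carries a factor of $h(x)^4$.
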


\begin{proof}
Both of the quantities $\norm{\Grad h(x)}_2^4$ and $\norm{\Laplace T(x)}_2^2$ can be written as a polynomial in $x$. Finiteness of the expectation under $p_\theta$ follows from Holder's inequality and Lemma~\ref{lemma:exp-fam-moment-bound} (with parameter $B$ set to $\norm{\theta}_\infty$). Finiteness of $\EE_{x\sim p_\theta} \opnorm{(JT)(x)}^4$ is shown in Lemma~\ref{lemma:aux-smoothness} (again, with $B := \norm{\theta}_\infty$). The decay condition $p_\theta(x) \to 0$ holds because $\log p_\theta(x) + \log Z_\theta = -\sum_{i=1}^n x_i^{d+1} + \langle \theta, T(x)\rangle$. For $x \in \RR^n$ with $L \leq \norm{x}_\infty \leq 2L$, the RHS is at most $-L^{d+1} + M\norm{\theta}_\infty (2L)^d$, which goes to $-\infty$ as $L \to \infty$. A similar bound shows that $\norm{\Grad_x p_\theta(x)}_2 \to 0$.
\end{proof}

\ifneurips
\section{Omitted Proofs from Section~\ref{section:mle}}
\label{s:poly}

\begin{proof}[\textbf{Proof of Lemma~\ref{lemma:mon-L2}}]

\end{proof}

\begin{proof}[\textbf{Proof of Lemma~\ref{lemma:subcube-variance}}]

\end{proof}

\begin{proof}[\textbf{Proof of Lemma~\ref{lemma:var-lb}}]

\end{proof}
\section{Omitted Proofs from Section~\ref{section:condition}}
\label{section:condition_appendix}

\begin{proof}[\textbf{Proof of Lemma~\ref{claim:variance-ub}}]

\end{proof}

\begin{proof}[\textbf{Proof of Lemma~\ref{lemma:poincare-cond}}]

\end{proof}

\fi

\end{document}